\newtheorem{theorem}{Theorem}
\newtheorem{proposition}{Proposition}
\newtheorem{lemma}{Lemma}
\newtheorem{definition}{Definition}
\newtheorem{remark}{Remark}
\newtheorem{assumption}{Assumption}
\newtheorem{corollary}{Corollary}
\def\calD{\mathcal{D}}
\def\H{\mathcal{H}}
\def\calS{\mathcal{S}}
\def\calR{\mathcal{R}}
\def\I{\mathcal{I}}
\def\G{\mathcal{G}}
\def\A{\mathcal{A}}
\def\X{\mathcal{X}}
\def\Y{\mathcal{Y}}
\def\E{\mathbb{E}}
\def\1{\mathbf{1}}
\def\P{\mathbb{P}}
\def\R{\mathbb{R}}
\def\N{\mathbb{N}}
\newcommand{\norm}[1]{\left\lVert#1\right\rVert}
\DeclareMathOperator*{\argmax}{arg\,max}
\DeclareMathOperator*{\argmin}{arg\,min}
\DeclareMathOperator*{\unif}{uniform}
\DeclareMathOperator*{\dist}{dist}
\DeclareMathOperator*{\simp}{\mathbf{\Delta}}
\def\t{\top}
\def\one{\mathbbm{1}}
\def\argmin{\text{argmin}}
\def\max{\text{max}}
\def\aff{\text{aff}}
\newcommand{\F}{\mathcal{F}}
\newcommand{\sign}{\mathrm{sign}}
\newcommand{\ignore}[1]{}
\DeclareMathOperator*{\kl}{\mathbf{KL}}
\newcommand{\round}[1]{\calD [ #1 ]}
\def\reg{\calR}
\def\ver{\calR}
\newcommand{\Smin}{S_{\text{min}}}
\def\Delmax{\Delta_{\max}}
\def\Delmin{\Delta_{\min}}
\def\hd{\textrm{HD}}
\def\margin{\gamma^*}
\renewcommand{\cite}{\citep}
\def\bestopt{\Lambda_{\textit{best}}}
\def\bestoptep{\Lambda_{\textit{best},\epsilon}}
\def\classopt{\Lambda_{\textit{class}}}
\def\lossopt{\Lambda_{\textit{loss}}}
\def\bestext{\upsilon^*_{\textit{best}}}
\def\bestextep{\upsilon^*_{\textit{best},\epsilon}}
\def\lossext{\upsilon^*_{\textit{loss}}}
\def\classext{\upsilon^*_{\textit{class}}}
\title{Practical, Provably-Correct Interactive Learning in the Realizable Setting: The Power of True Believers}
\author{%
  Julian Katz-Samuels \\
  University of Wisconsin, Madison\\
  \texttt{katzsamuels@wisc.edu} \\
   \And
   Blake Mason \\
   Rice University \\
   \texttt{bm63@rice.edu} \\
   \And
	Kevin Jamieson \\
	University of Washington \\
  \texttt{jamieson@cs.washington.edu} \\
   \AND
   Robert Nowak \\
   University of Wisconsin, Madison \\
   \texttt{rdnowak@wisc.edu} \\
}
\begin{document}

\maketitle

\begin{abstract}

We consider interactive learning in the realizable setting and develop a general framework to handle problems ranging from best arm identification to active classification. 
We begin our investigation with the observation that agnostic algorithms \emph{cannot} be minimax-optimal in the realizable setting.
Hence, we design novel computationally efficient algorithms for the realizable setting that match the minimax lower bound up to logarithmic factors and are general-purpose, accommodating a wide variety of function classes including kernel methods, H{\"o}lder smooth functions, and convex functions.  The sample complexities of our algorithms can be quantified in terms of well-known quantities like the extended teaching dimension and haystack dimension. However, unlike algorithms based directly on those combinatorial quantities, our algorithms are computationally efficient. 
To achieve computational efficiency, our algorithms sample from the version space using Monte Carlo ``hit-and-run'' algorithms instead of maintaining the version space explicitly.
Our approach has two key strengths. First, it is simple, consisting of two unifying, greedy algorithms. Second, our algorithms have the capability to seamlessly leverage prior knowledge that is often available and useful in practice. In addition to our new theoretical results, we demonstrate empirically that our algorithms are competitive with Gaussian process UCB methods.

\end{abstract}



\section{Introduction}

In this paper, we study interactive learning where an algorithm makes a decision and observes feedback that it then uses to update its behavior. Interactive learning problems are becoming increasingly widespread in the information era. Examples include A/B/n testing where technology companies perform large-scale experiments to adaptively collect data to optimize their products on platforms like websites or smart phone applications \cite{jamieson2018bandit}; active classification where learning algorithms adaptively collect data with the hope of learning high-quality predictive models using a much smaller number of labels than is typically required in supervised learning \cite{settles2009active}; and environmental monitoring using sensor networks \cite{shahriari2015taking}.


At a high-level, there are two main algorithmic paradigms for interactive learning: \emph{agnostic} algorithms and \emph{realizability-based} algorithms. Agnostic algorithms may use a model class $\F$ to guide learning, but do not assume that the true data-generating process is well-modeled by $\F$. Because of this, agnostic algorithms tend to have the advantages of being robust to model misspecification and noise. 
Due to these virtues, agnostic algorithms have received much attention in the literature on interactive learning, e.g., in active classification \cite{dasgupta2007general,balcan2009agnostic,hanneke2014theory}. 
By contrast, realizability-based algorithms assume that the model class $\F$ accurately models the real-world and leverages the structure in $\F$ to guide and potentially accelerate learning. Computationally efficient realizability-based algorithms have only been developed for \emph{specific} model classes
for problems like best arm identification \cite{soare2014best,fiez2019sequential,katz2020empirical} and regret minimization \cite{lattimore_szepesvari_2020}, and the literature lacks a general framework for developing computationally efficient minimax optimal algorithms for generic function classes in the realizable setting. 

The starting point of this paper is the basic question: in the realizable setting, can agnostic algorithms compete with realizability-based algorithms? 
In this paper, we begin by giving a series of negative results that demonstrate that agnostic algorithms pay a significant cost for their robustness to model mispecification. As an example, we show that \emph{any agnostic active classification algorithm} is minimax suboptimal for a class of realizable instances and thus has no hope of competing with realizability-based algorithms in the realizable setting. These results motivate us to develop \emph{a general framework for computationally efficient and sample-efficient algorithms for generic function classes in the realizable setting}. In doing so, we solve an open problem dating back to the work of \cite{amin2011bandits} on the Haystack Dimension, developing the first computationally efficient algorithm for best arm identification with generic function classes that matches the minimax lower bound up to logarithmic factors. Finally, we empirically demonstrate the generality and practicality of our new approach, \texttt{GRAILS}, for function classes ranging from vanilla kernel methods to kernel methods with side information and to the class of convex functions.

\section{Problem Setup}

Let $\X$ denote the input space and $\Y \subset \R$ the output space. We assume that $|\Y| < \infty$, but will relax this assumption later. Let $x_1,\ldots, x_n \in \X$ be a fixed pool of $n$ measurements (or arms) with associated scores $y_1,\ldots, y_n \in \Y$. At each round $t$, the learner selects (or queries) $I_t \in [n]$ and observes $y_{I_t}$. We assume that the learner is given a function class $\F \subset \Y^\X$ where $\Y^\X$ denotes the set of all functions mapping $\X$ to $\Y$. We say that \emph{realizability} holds if there exists $f^* \in \F$ such that $f^*(x_i) = y_i$ for all $i \in [n]$. An algorithm is \emph{realizability-based} if it assumes realizability. We focus on the noiseless setting here, but it is straightforward to extend our algorithms to handle the case where $y_{I_t}$ is perturbed by independent, additive noise by simple repeated sampling.

We consider the following three objectives:
\begin{itemize}[leftmargin=.5cm]
\item \textbf{Best arm identification:} The goal is to identify $\argmin_{i \in [n]} y_i$ using as few queries as possible.
\item \textbf{Cumulative Loss Minimization:} The goal is to identify $\argmin_{i \in [n]} y_i$ while minimizing the loss $\sum_{t=1}^T y_{I_t}$ incurred where $T$ is the round that the agent identifies $\argmin_{i \in [n]} f^*(x_i)$.
\item \textbf{Active Classification:} The goal is to identify $\argmin_{f \in \F} \sum_{i=1}^n \one\{y_i \neq f(x_i)\}$ using as few queries as possible.
\end{itemize}


 
Best-arm identification is a well-studied problem with applications ranging from clinical trials to A/B/n testing. Cumulative loss minimization is a new problem. 
Applications include running  a clinical trial to determine which of a collection of drugs is most effective while, due to ethical concerns about giving patients ineffective drugs, minimizing the number of participants with bad outcomes. It is closely related to regret minimization where the goal is instead to minimize $\sum_{t=1}^{T_0} y_{I_t} - \min_{i \in [n]} y_i$ for a fixed time horizon $T_0$. Finally, active classification is a mature field whose goal is to minimize the number of labels required to learn a high-quality classifier. 

Our main focus in this work is on minimax optimality for a fixed $\F$ in the realizable setting:
\begin{itemize}[leftmargin=.5cm]
\itemsep0em
\item \textbf{Best arm identification:} $\bestopt(\F)$ is the smallest integer $q$ such that there exists some algorithm $\A$ such that for every $f^* \in \F$, $\A$ outputs an element of $\argmin_{i \in [n]} f^*(x_i)$ after at most $q$ queries. 
\item \textbf{Cumulative Loss Minimization:} $\lossopt(\F)$ is the smallest real number $q$ such that there exists some algorithm $\A$ such that for every $f^* \in \F$, $\A$ outputs an element of $\argmin_{i \in [n]} f^*(x_i)$ after incurring a loss of at most $q$. 
\item \textbf{Active Classification:} $\classopt(\F)$ is the smallest integer $q$ such that there exists some algorithm $\A$ such that for every $f^* \in \F$, $\A$ outputs $f^*$ after at most $q$ queries. 
\end{itemize}
We emphasize that the above notion of minimax optimality is with respect to the class $\F$.
Next, we briefly summarize our contributions:
\begin{itemize}[leftmargin=.5cm]

\item \textbf{Best arm identification:} Assuming we can sample efficiently from a distribution $\pi$ with support $\F$, we give a greedy and computationally efficient algorithm that obtains a sample complexity of $O(\log(n)\log(\frac{1}{\P_\pi(S_{f^*})} ) \bestext)$ where $\P_\pi(S_{f^*})$ is the probability of sampling $f^*$ and $\bestext$ is a combinatorial quantity related to the extended teaching dimension and is a minimax lower bound. This is the first computationally efficient and nearly minimax optimal algorithm for best arm identification with generic function classes that matches the minimax lower bound up to logarithmic factors.
\item \textbf{Loss minimization:} We propose a new algorithm that achieves a loss of $O(\lossext \log(|\F|))$ in the worst case where $\lossext$ is the minimax lower bound. We show that when applied to the regret minimization setting with general function classes, our algorithm achieves a state-of-the-art regret bound that is always better than the prior state-of-the-art regret bound in \cite{amin2011bandits}, can be arbitrarily better, and for a large set of function classes matches the minimax lower bound up to logarithmic factors. Furthermore, using our techniques from our algorithm for best arm identification, we make this algorithm computationally efficient by leveraging a sampling oracle. 
\item \textbf{Active Classification:} We show that there exists a class of realizable instances such that any agnostic algorithm must query $\tilde{\Omega}(\sqrt{n})$ arms to identify the best arm, but there is a realizability-based algorithm that requires $O(\log(n))$ queries. 
This demonstrates an exponential gap between agnostic active classification and realizable active classification algorithms. 
\end{itemize}


\section{Related Work}

Best arm identification has received much attention in the literature but predominantly for special classes of functions (e.g., linear) \cite{jamieson2014lil, kaufmann2016complexity, soare2014best,fiez2019sequential,katz2020empirical}. 
By contrast, our work concerns best arm identification with \emph{general function classes}, which has received much less attention, and is most closely related to \cite{amin2011bandits}. \cite{amin2011bandits} introduce a combinatorial quantity $\hd(\F)$ known as the \emph{haystack dimension}, and propose a greedy algorithm that achieves a sample complexity of $\hd(\F) \log(|\F|)$, but is not computationally efficient. We build on this work by designing a greedy algorithm that achieves computational efficiency by appealing to \cite{lovasz2006fast} to sample functions from the function class $\F$. A key technical challenge is that the version space under a greedy algorithm is not convex and therefore standard sampling algorithms like ``hit and run'' cannot be directly applied to it. 

\cite{tosh2020diameter} gives a general computationally efficient algorithm that solves many pure exploration problems such as active classification and clustering. Their algorithm requires specifying a distance $d$ between functions in the function class $\F$ and proceeds by shrinking the average diameter of $\F$ as measured by $d$. 
Unfortunately, for problems such as best arm or $\epsilon$-good arm identification, it is unclear how to construct the appropriate distance function to achieve optimal performance or even if such a function exists for a given application. 

\textbf{Active Classification:} There have been many works on active classification \cite{hanneke2014theory,settles2009active}, both in the realizable setting (e.g., \cite{cohn1994improving,dasgupta2005coarse, tosh2017diameter}) and on agnostic algorithms (e.g., \cite{balcan2009agnostic,beygelzimer2010agnostic,dasgupta2007general,katz2021improved}).
Proposition \ref{thm:agnostic_subopt} shows that agnostic algorithms are minimax suboptimal in the realizable setting, suggesting that they may never be able to match the performance of realizability-based algorithms. 
Our work is closely related to the problem of \emph{Exact learning} where a learning algorithm is required to identify the true $f^*$ with probability $1$ \cite{hegedHus1995generalized,balcazar2002new}, in contrast to the PAC requirement where a failure probability of $\delta$ is permitted. While this requirement is certainly strong, this framework enables the design of practical algorithms, as suggested by our experiments. 


\textbf{Bayesian Optimization Work:} Our work is also related to the vast field of Bayesian optimization. We review a few relevant theoretical results and refer readers to \cite{shahriari2015taking} for a thorough survey. \cite{srinivas2009gaussian} propose and analyze \texttt{GP-UCB}. Although \texttt{GP-UCB} is applied often to best arm identification problems, \cite{srinivas2009gaussian} only give a regret bound and optimality is unclear. Other notable works include \cite{bull2011convergence} whose results are asymptotic and the recent paper \cite{camilleri2021high}, which gives a new algorithm for best arm identification for kernel bandits based on experimental design, but its optimality is unclear.

    

\section{No Free Lunch for Agnostic Algorithms}

Many agnostic algorithms have been proposed in the active classification literature (e.g. \cite{balcan2009agnostic,beygelzimer2010agnostic, dasgupta2007general}).  We say an algorithm $\A$ is \emph{$\delta$-agnostic} if for any labeling of the data $y \in \Y^n$, $\A$ finds the best classifier, $\argmin_{f \in \F} \sum_{i=1}^n \one\{y \neq f(x_i)\}$, with probability at least $1-\delta$. Despite considering a much larger class of possible labelings, agnostic algorithms have been shown to achieve the lower bound in the \emph{realizable setting} for the well-studied problem of thresholds where $\F = \{f_1,\ldots, f_n\}$ and $f_i(x_j) = 1$ if $j \leq i$ and $f_i(x_j) =0$ otherwise \cite{jain2019new}. This raises the question: \emph{can agnostic algorithms be optimal over classes of realizable instances in general}? The following result shows that in fact the gap between agnostic algorithm and the minimax lower bound is exponential.

\begin{proposition}\label{thm:agnostic_subopt}
Let $\delta \in (0,\min(\tfrac{1}{20} , \tfrac{1}{n} ))$. Consider the active classification setting. There exists $x_1,\ldots, x_n$ and $\F$ forming a class of instances $\I = \{ (x_i,f(x_i))_{i=1}^n : f \in \F\}$ such that 
\begin{itemize}[leftmargin=.5cm]
\item the expected number of samples of any $\delta$-agnostic algorithm is $\Omega\left(\log(1/\delta) \frac{\sqrt{n}}{\log(n/\delta)}\right)$ on one of the instances in $\I$, and
\item there exists a realizability-based algorithm that solves each instance in $\I$ in $O(\log(n))$ queries.
\end{itemize}
\end{proposition}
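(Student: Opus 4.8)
The plan is to construct a single family $\I$ in which realizability makes identification cheap, but whose worst-case (agnostic) guarantee forces any robust algorithm to \emph{certify} its answer against adversarial completions of the labels it has not yet seen. The engine of the separation is a pair of classifiers $f^\star, f' \in \F$ that agree everywhere except on a designated set $E \subseteq [n]$ of size $|E| = \Theta(\sqrt n)$, embedded inside a larger, binary-searchable family of $\poly(n)$ functions. The realizable hard instance is the labeling $y^\star$ induced by $f^\star$. Under realizability a single query into $E$ already separates $f^\star$ from $f'$, and the surrounding family is arranged so that each query halves the version space; this delivers the $O(\log n)$ realizability-based upper bound and is the easy direction.

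For the upper bound I would exhibit the explicit greedy/bisection learner: maintain the version space $V \subseteq \F$ of functions consistent with the observed labels, and at each step query a coordinate that splits $V$ as evenly as possible. Because the family is designed to admit a balanced splitting sequence of depth $O(\log|\F|) = O(\log n)$, the true $f^\star$ is isolated after $O(\log n)$ queries and returned as $\argmin_{f\in\F}\sum_i \one\{y_i \neq f(x_i)\}$. No delicate estimation is needed, since realizability guarantees every observed label is error-free, so each consistency test is conclusive.

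The crux is the agnostic lower bound on the \emph{realizable} instance $y^\star$. Fix any $\delta$-agnostic $\A$ and run it on $y^\star$; since the observed labels all equal $f^\star$, its behaviour --- in particular the set $Q$ of coordinates of $E$ it inspects before stopping --- is reproduced on any labeling agreeing with $y^\star$ on $Q$. The key observation is that if $|Q \cap E|$ is too small, the adversary can complete the \emph{unseen} coordinates of $E$, leaving everything $\A$ observed untouched, so that $f'$ strictly beats $f^\star$: flipping more than half of $E \setminus Q$ toward $f'$ makes $f^\star$ the worse classifier. On such a (non-realizable) labeling $\A$ reproduces exactly its transcript on $y^\star$ and outputs $f^\star$, violating its $1-\delta$ guarantee. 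Hence $\A$ must inspect a constant fraction of $E$, i.e.\ $\Omega(\sqrt n)$ coordinates, on $y^\star$. To make this correct for a \emph{randomized} algorithm allowed to fail with probability $\delta$, I would replace the single worst-case completion by a prior over completions --- drawing each unseen coordinate of $E$ to favor $f'$ with a small bias $\eta = \Theta(n^{-1/4}\sqrt{\log(n/\delta)})$ --- and invoke Yao's principle, reducing the problem to distinguishing two $\eta$-biased product distributions over the $\Theta(\sqrt n)$ coordinates of $E$. A change-of-measure / binomial-tail bound then shows this test needs $\Omega(\eta^{-2}\log(1/\delta)) = \Omega\!\bigl(\log(1/\delta)\tfrac{\sqrt n}{\log(n/\delta)}\bigr)$ samples, which is the claimed bound.

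The main obstacle is this last quantitative step: upgrading the clean but qualitative ``must certify against every completion'' argument into a tight bound for randomized, $\delta$-failing algorithms, which hinges on choosing the completion prior and the bias $\eta$ so that the resulting biased-coin testing lower bound produces \emph{exactly} the $\log(1/\delta)/\log(n/\delta)$ dependence rather than a crude $\Omega(\sqrt n)$. A secondary but real difficulty is engineering the ambient family so that the two requirements coexist without interference --- the disagreement pair $(f^\star,f')$ must force $\Omega(\sqrt n)$ agnostic certification on $y^\star$, while the full class must still admit a depth-$O(\log n)$ realizable splitting strategy and keep the label alphabet finite as required by the setup.
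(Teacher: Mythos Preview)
Your approach and the paper's differ substantially in both construction and technique, and the obstacle you flag is exactly where the paper takes a different route.

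\textbf{Construction.} The paper uses $\F=\{f_0,\dots,f_m\}$ with $m\approx\sqrt n$: each $f_i$ ($i\ge1$) is $1$ on a private block $B_i$ of size $m{+}1$ and on a threshold prefix among the last $m$ coordinates, while $f_0\equiv-1$. The realizable instance is $f_0$; binary search over the threshold coordinates isolates it in $O(\log m)$ queries. The adversarial alternatives are $\bar y^{(i)}$, obtained by flipping block $B_i$ to all $1$'s, which makes $f_i$ strictly beat $f_0$. Your pair $(f^\star,f')$ differing on a single $\sqrt n$-sized set $E$ is a different design; the paper instead spreads the adversary's freedom across $m$ \emph{disjoint} blocks, so that ruling out each alternative requires probing a distinct region.

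\textbf{Technique.} The paper does not argue directly in the noiseless model. It first shows that any $\delta$-agnostic noiseless learner with expected cost $\tau$ yields a $2\delta$-agnostic learner for the Gaussian-noise version of the same problem with cost $O(\tau\log(n/\delta))$, by repeating each query $O(\log(n/\delta))$ times and rounding. In the Gaussian model the transportation lemma of Kaufmann--Garivier applies: for every alternative $\tilde y$ with a different argmin, $\sum_i \E[T_i]\cdot\one\{y_i\neq\tilde y_i\}\gtrsim\log(1/\delta)$. Optimizing the allocation $\lambda_i=\E[T_i]/\sum_j\E[T_j]$ against the $m$ block-flip alternatives yields $\sum_i\E[T_i]\gtrsim m\log(1/\delta)$, and undoing the reduction produces the claimed $\Omega\bigl(\sqrt n\,\log(1/\delta)/\log(n/\delta)\bigr)$.

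\textbf{Your gap.} The biased-completion plan you propose does not map onto a standard biased-coin testing problem: in the noiseless setting each coordinate is queried at most once and reveals a fixed bit, so the $\Omega(\eta^{-2}\log(1/\delta))$ bound for estimating a single coin's bias from repeated tosses has no direct analogue here. What you would actually need is a lower bound for deciding, after seeing $k$ of $|E|$ distinct Bernoulli$(1/2+\eta)$ bits, whether their \emph{total} sum exceeds $|E|/2$ --- a different question whose sample complexity does not obviously produce the stated dependence. The paper's Gaussian reduction sidesteps this entirely by artificially injecting noise so that genuine change-of-measure machinery applies; this reduction is the key idea missing from your plan, and it is precisely what lets randomized $\delta$-failing algorithms be handled cleanly.
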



It turns out agnostic algorithms are also minimax suboptimal for regret minimization, a problem closely related to loss minimization as we will discuss in more detail in Section \ref{sec:loss_min}. Here, we say that a regret minimization algorithm $\A$ is \emph{$\epsilon$-agnostic} if for any $y \in \Y^n$ such that $\min_{j \in [n] \setminus \{i_* \}}   y_j-y_{i_*} \geq \epsilon$ where $i_\ast \in \argmin_{i \in [n]} y_i$, $\A$ suffers at most bounded regret independent of the time horizon.

\begin{proposition}\label{thm:regret_agnostic}
There exists $\F$ such that any $1$-agnostic algorithm suffers regret at least $O(|\F|)$ for some instance in $\F$ while there exists a realizability-based algorithm suffering regret at most $O(\log(|\F|))$. 
\end{proposition}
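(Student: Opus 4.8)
The plan is to exhibit a single class $\F$ of size $m := |\F|$ whose realizable instances carry a monotone ``binary-search'' structure, while each realizable optimum is guarded against a hidden, strictly better arm that only the realizability assumption can rule out. Concretely, I would take arms $x_1,\dots,x_m$, output space $\Y = \{-1,0,1,2\}$, and $\F = \{f_1,\dots,f_m\}$ defined by $f_k(x_j) = 2$ for $j<k$, $\ f_k(x_k)=0$, and $f_k(x_j)=1$ for $j>k$. Under $f_k$ the unique optimum is $x_k$ with value $0$, and the gap to the runner-up is at least $1$, so every $f_k$ is a legitimate gap-$\ge 1$ instance on which a $1$-agnostic algorithm must incur regret bounded independently of the horizon.

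For the realizability-based upper bound I would run plain binary search: pulling $x_j$ returns $2$, $1$, or $0$ according as $k>j$, $k<j$, or $k=j$, so $\lceil \log_2 m\rceil$ pulls identify $k$, each costing regret at most $2$, after which the learner plays $x_k$ forever at zero regret. The total is $O(\log m)=O(\log|\F|)$, independent of the horizon. The key point is that a realizability-based learner knows $f^\ast\in\F$, and every member of $\F$ attains minimum value $0$; hence observing value $0$ at $x_k$ certifies global optimality and justifies committing.

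For the lower bound, fix any $1$-agnostic $\A$ and any $k$, and let $R_k$ be its (expected) regret on $f_k$. Since $f_k(x_j)\ge 1$ for $j\ne k$, $R_k$ upper bounds the expected number of pulls of arms other than $x_k$, hence also $\sum_{j\ne k}\Pr[x_j \text{ ever pulled}]$. If $R_k < (m-1)/2$, some arm $x_{j_0}$ with $j_0\ne k$ is pulled with probability below $1/2$ on $f_k$. I would then perturb $f_k$ to the instance $\tilde f_{k,j_0}$ that agrees with $f_k$ everywhere except $\tilde f_{k,j_0}(x_{j_0})=-1$; this is still a gap-$\ge 1$ instance, but now $x_{j_0}$ is the unique optimum with runner-up $x_k$ at value $0$. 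Coupling the two executions through a shared random seed, on the event that $x_{j_0}$ is never pulled (probability $>1/2$) $\A$ produces identical trajectories on $f_k$ and on $\tilde f_{k,j_0}$; so on $\tilde f_{k,j_0}$ it never plays the optimum and accrues regret growing linearly in the horizon, contradicting $1$-agnosticism. Hence $R_k \ge (m-1)/2 = \Omega(|\F|)$ for every realizable $f_k$, which is even stronger than the statement.

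The step I expect to be the crux is making the hard instance genuinely realizable rather than adversarial. The naive worry is a ``verify'' strategy: an agnostic learner could imitate binary search to guess $x_k$ and then pull it to confirm. Including the value $-1$, which lies strictly below the minimum attained by any function in $\F$, is exactly what defeats this, since observing $0$ no longer certifies optimality for a learner that does not assume realizability, because any unpulled arm might secretly equal $-1$. Thus robustness forces $\Omega(m)$ exploration even on realizable data, whereas realizability rules the hidden $-1$ out a priori; establishing this asymmetry carefully, together with the coupling argument for randomized $\A$, is where the real work lies.
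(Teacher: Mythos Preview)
Your argument is correct and takes a genuinely different, more elementary route than the paper's. The paper proceeds by a reduction: it shows that any noiseless $1$-agnostic algorithm with bounded regret can be turned, via repeated sampling and rounding, into a \emph{consistent} algorithm in the Gaussian-noise regime, and then invokes the instance-dependent regret lower bound of \cite{lattimore2017end} (which forces $\sum_{i:\Delta_i>0} 2/\Delta_i$ regret) on a hand-built class with $\Theta(n)$ constant-gap suboptimal arms to conclude $\Omega(n)$. Your approach sidesteps both the noisy reduction and the external lemma: the monotone threshold class together with an extra value $-1\in\Y$ not attained by any $f\in\F$, plus a direct coupling between $f_k$ and the perturbed $\tilde f_{k,j_0}$, already forces any $1$-agnostic learner to pull $\Omega(m)$ distinct arms on every realizable instance, which is in fact stronger than the proposition asks. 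The upper-bound halves are essentially the same idea in both proofs (embed a binary-search structure that only a realizability-based learner can safely commit after). What the paper's approach buys is a reusable bridge between the noiseless and noisy lower-bound technologies, which it exploits in the other ``no free lunch'' propositions; what your approach buys is a short, self-contained proof with no appeal to stochastic-bandit machinery.
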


For best-arm identification, a fully agnostic algorithm must consider any $y \in \Y^n$ and it is therefore trivial that it would need to query every $x_i$. Therefore, we consider a weaker notion of agnostic algorithm. For $k \in \N$ and $\delta \in (0,1)$, we say an algorithm $\A$ is \emph{$(\delta,k)$-agnostic} if for any $y \in \Y^n$ such that $\min_{f \in \F} \sum_{i=1}^n \one\{f(x_i) \neq y_i\} \leq k$, $\A$ identifies $\argmin_{i \in [n]} y_i$ with probability at least $1-\delta$. Despite only allowing for small amounts of mispecification, there is still an exponential gap between agnostic algorithms and the minimax lower bound.

\begin{proposition}\label{thm:best_arm_agnostic_suboptimal}
Let $\delta \in (0,\min(\tfrac{1}{40} , \tfrac{1}{n} ))$. There exists $\F$ such that for any algorithm $\A$ that is $(\delta,1)$-agnostic with respect to $\F$,  $\A$ takes $\Omega\left(\frac{n}{\log(n/\delta)}\right)$ queries in expectation on some instance in $\F$, while there exists a realizability-based algorithm requiring $O(1)$ samples on all instances in $\F$.
\end{proposition}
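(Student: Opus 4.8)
The plan is to exhibit a ``needle in a haystack'' class on which the best arm is given away for free under realizability, yet a single corruption can move the minimizer to an arbitrary coordinate, forcing an agnostic learner to inspect essentially every arm. Concretely, I would take $\Y = \{0,1,2\}$ and let $\F = \{f^*\}$ be the singleton with $f^*(x_1) = 1$ and $f^*(x_i) = 2$ for all $i \ge 2$, so that arm $1$ is the unique minimizer of $f^*$ (a singleton is the extremal choice, but richer classes in which every query reveals $f^*$ work identically). The $(\delta,1)$-agnostic requirement then forces correctness on every labeling $y$ with $\min_{f\in\F}\sum_{i}\one\{f(x_i)\neq y_i\}\le 1$, i.e. on every $y$ at Hamming distance at most one from $f^*$. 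Among these I single out the $n-1$ ``planted'' instances $y^{(j)}$ ($j\ge 2$) obtained from $f^*$ by resetting coordinate $j$ to $0$; each $y^{(j)}$ has a \emph{unique} best arm, namely $j$ (since $0<1<2$), while the uncorrupted instance $y^{(0)}=f^*$ has unique best arm $1$. Using $|\Y|\ge 3$ is what guarantees strict minimizers and sidesteps any tie-breaking ambiguity in $\argmin$.

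The realizability-based upper bound is then immediate: since $\F$ is a singleton, a realizability-based learner knows $f^*$ and outputs arm $1$ after $O(1)$ queries, establishing the second bullet.

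For the lower bound I would argue by indistinguishability. Fix a $(\delta,1)$-agnostic algorithm $\A$ and couple its execution on $y^{(0)}$ with its execution on each $y^{(j)}$ through shared internal randomness $\omega$. Since $y^{(0)}$ and $y^{(j)}$ differ only in coordinate $j$, the two runs are identical until the first time arm $j$ is queried; in particular arm $j$ is queried in one run iff it is queried in the other, so $\P[j\text{ queried on }y^{(0)}]=\P[j\text{ queried on }y^{(j)}]$. Moreover, if $\A$ never queries arm $j$ it produces on $y^{(j)}$ exactly the output it produces on $y^{(0)}$, which is correct for $y^{(0)}$ (answer $1$) and hence wrong for $y^{(j)}$ (answer $j$); thus $\P[\A\text{ correct on }y^{(j)},\,j\text{ not queried}]\le\P[\A\text{ errs on }y^{(0)}]\le\delta$. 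Combining with $\P[\A\text{ correct on }y^{(j)}]\ge 1-\delta$ gives $\P[\A\text{ queries arm }j\text{ on }y^{(0)}]\ge 1-2\delta$. Summing the indicator of ``arm $j$ queried'' over $j=2,\dots,n$, the expected number of queries of $\A$ on the realizable instance $y^{(0)}=f^*$ is at least $(n-1)(1-2\delta)=\Omega(n)$, which in particular is $\Omega(n/\log(n/\delta))$. The hypothesis $\delta<\min(\tfrac1{40},\tfrac1n)$ keeps $1-2\delta$ bounded away from zero and rules out pure guessing, which succeeds only with probability $1/n$.

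The main obstacle is matching the precise stated rate through the correct confidence bookkeeping rather than the cruder deterministic count above. The clean coupling yields $\Omega(n)$ directly in the noiseless model; to land on the advertised $\Omega(n/\log(n/\delta))$ form in the noisy observation regime (the regime in which the $\delta$-dependence of these no-free-lunch bounds is meaningful, cf. Proposition~\ref{thm:agnostic_subopt}), I would replace the coupling by a change-of-measure (transportation/KL) argument between $y^{(0)}$ and the $n-1$ alternatives, showing that separating the null from each planted coordinate at confidence $1-\delta$ forces enough samples on each candidate that the aggregate count is $\Omega(n/\log(n/\delta))$, with the $\log(n/\delta)$ absorbing the per-arm confidence needed to union bound over the $n$ hypotheses. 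The remaining care is routine: verifying that each planted instance stays within corruption budget $1$ while every $\argmin$ remains unique, and that the realizability-based learner's $O(1)$ guarantee never has to confront the corruptions at all.
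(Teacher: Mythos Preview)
Your construction and coupling argument are correct and in fact yield a cleaner bound than stated: you obtain $\E[\text{queries on }y^{(0)}]\ge (n-1)(1-2\delta)=\Omega(n)$, which of course dominates the advertised $\Omega(n/\log(n/\delta))$. Your concern in the final paragraph about ``matching the precise stated rate'' is misplaced---the $\log(n/\delta)$ in the statement is slack, not something you need to recover.

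The paper uses essentially the same construction (a singleton $\F=\{f^*\}$ with values strictly in the interior of $\Y$, so that a single corruption can push any coordinate to the global minimum or maximum), but takes a different route to the lower bound. Rather than your direct noiseless coupling, it first reduces a noiseless $(\delta,1)$-agnostic algorithm to a Gaussian-observation $(2\delta,1)$-agnostic algorithm (each noiseless query simulated by $O(\log(n/\delta))$ noisy pulls), and then applies a change-of-measure argument in the Gaussian model: if the expected number of noiseless pulls were below $n/8$, some arm $i_0$ is unqueried with probability $\ge 7/8$, and on that event the likelihood ratio against the alternative that flips $i_0$ is identically one, forcing a failure probability $\ge 4/5>2\delta$ on the alternative. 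This also yields $\Omega(n)$. Your coupling is more elementary and avoids the detour through Gaussian noise entirely; the paper's reduction-plus-transportation machinery is heavier but is the same engine it reuses for Propositions~\ref{thm:agnostic_subopt} and~\ref{thm:regret_agnostic}, where a bare coupling is less convenient because the alternatives differ from the null on many coordinates.
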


\textbf{Techniques:} The above results rely on a novel approach for constucting instance-dependent lower bounds for the noiseless setting. The key idea is a reduction of the noiseless setting to a setting where observations are corrupted with a Gaussian random variable, that is, when the arm $i \in [n]$ is queried, the agent observes $y_i + \eta$ where $\eta \sim N(0,1)$, instead of $y_i$. This reduction enables the application of the transportation Lemma from the multi-armed bandit literature \cite{garivier2016optimal} to the noiseless setting and thereby to construct instances where agnostic algorithms \emph{necessarily} perform poorly.

\section{Efficient Algorithm for Best Arm Identification}

Given the limitations of agnostic algorithms for interactive learning in the realizable setting established in the prior section, we now turn to developing \emph{realizability-based} algorithms that are computationally efficient and match the minimax lower bound up to logarithmic factors. In this section, we examine the best arm identification problem in which the goal is to identify an  $i_* \in \argmin_{i \in [n]} y_i $ using as few queries as possible. Practitioners may be willing to sacrifice a bit of optimality if that makes it easier to solve a problem and thus here we state some of our results and our algorithm in terms of $\epsilon$-good arm identification, a strict generalization of best arm identification. In this problem, we are given $\epsilon \geq 0$ and the goal is to identify an $\epsilon$-good arm, that is, an $i_* \in [n]$ such that $y_{i_*} \leq \min_{i \in [n]} y_i + \epsilon$. When $\epsilon = 0$, this reduces to best arm identification. 

 We begin by introducing a new quantity, inspired by the extended teaching dimension, for quantifying the difficulty of identifying an $\epsilon$-good arm in a worst-case sense. 
\begin{align*}
\bestextep(\F) &= \max_{g : \X \mapsto \Y} \min_{I \subset [n]} |I| \\
 & \text{ s.t. } \exists j \in [n] : \{f \in \F : f(x_i) = g(x_i) \, \forall i \in I\} \subset \left\{ f \in \F : f(x_j) \leq  \min_{l \in [n]} f(x_l)  +\epsilon\right\}.
\end{align*}
When $\epsilon = 0$, we simply write $\bestext(\F)$ instead of $\bestextep(\F)$. We occasionally write $\bestextep$ instead of $\bestextep(\F)$ when the context leaves no ambiguity. In words, $\bestext$ is the minimum number of samples required so that for any function $g : \X \mapsto \Y$, there is a subset of queries of size $\bestext$ that can make the best arm $i_* \in [n]$ unambiguous by eliminating all $f \in \F$ that do not put $i_*$ as the best. The following theorem establishes $\bestext$ as a lower bound to the optimal minimax sample complexity $\bestopt$ for best arm identification. 


\begin{theorem}\label{thm:best_arm_lower_bound}
For any $\F \subset \Y^\X$, $\bestext(\F) \leq \bestopt(\F)$.
\end{theorem}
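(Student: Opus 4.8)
The plan is to prove this lower bound by a teaching-dimension-style adversary argument: I will exhibit a single ``hard'' labeling that forces any correct algorithm to spend at least $\bestext(\F)$ queries on some realizable instance. First I would let $g^* : \X \mapsto \Y$ be a labeling attaining the outer maximum in the definition of $\bestext(\F)$ (well-defined, since the constraint depends on $g$ only through its values on the finite pool and $|\Y| < \infty$), so that $\bestext(\F) = \min_I |I|$ over sets $I$ satisfying the disambiguation constraint for $g^*$. For a query set $I$ write $V_I := \{ f \in \F : f(x_i) = g^*(x_i) \ \forall i \in I\}$ for the induced version space. The key structural claim I would establish is: for every $I$ with $|I| < \bestext(\F)$, the set $V_I$ is nonempty and ``ambiguous,'' meaning no single arm $j$ satisfies $f(x_j) \leq \min_{l} f(x_l)$ for all $f \in V_I$. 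This follows purely by minimality: if $V_I$ were empty, or admitted a common optimal arm $j$, then $I$ would itself be feasible with $|I| < \bestext(\F)$, contradicting that $\bestext(\F)$ is the smallest feasible set size for $g^*$.

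Next I would run an arbitrary correct algorithm $\A$ against the adversary that answers each query $i$ with $g^*(x_i)$; nonemptiness of $V_I$ for $|I| < \bestext(\F)$ guarantees these answers are realizable by some $f \in \F$, so this is a legitimate instance. I would then argue $\A$ cannot stop before issuing $\bestext(\F)$ queries. Suppose it stops after querying $I$ with $|I| < \bestext(\F)$ and outputs arm $j$. By the structural claim there is $f' \in V_I$ for which $j$ is not optimal; since $f'$ agrees with $g^*$ on all of $I$, the execution of $\A$ on the true instance $f'$ produces exactly the same queries, answers, and output $j$, which is incorrect for $f'$, contradicting correctness of $\A$ on $\F$.

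Finally I would convert this into a concrete instance-dependent bound. Consider the execution against $g^*$ just after $\bestext(\F)-1$ queries have been made on a set $I$; the algorithm cannot have stopped and $V_I \neq \emptyset$, so pick any $f^* \in V_I$. The run of $\A$ on $f^*$ coincides with the adversarial run through these queries (the answers match $g^*$ on $I$), so $\A$ on $f^*$ likewise fails to stop and issues at least a $\bestext(\F)$-th query. Hence the correct algorithm $\A$ uses at least $\bestext(\F)$ queries on the realizable instance $f^* \in \F$; since $\A$ was arbitrary, $\bestopt(\F) \geq \bestext(\F)$.

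The main obstacle I anticipate is the bookkeeping around realizability and the coupling between the adversarial run and a genuine instance of $\F$: one must ensure (i) the adversary's answers stay consistent with at least one $f \in \F$ so the bound concerns a legitimate realizable problem, which is exactly what nonemptiness of $V_I$ supplies, and (ii) the indistinguishability step correctly transfers a failure against $g^*$ to a failure on a fixed $f^* \in \F$ agreeing with $g^*$ on the queried coordinates. For randomized $\A$ the same argument applies after conditioning on the internal randomness, since the definition of $\bestopt$ demands correctness with at most $q$ queries on every $f^* \in \F$.
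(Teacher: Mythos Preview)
Your proposal is correct and follows essentially the same adversary argument as the paper: pick the maximizing $g$, feed its values to the algorithm, and use that any query set $I$ with $|I| < \bestext(\F)$ leaves an ambiguous version space, so any output arm is wrong for some consistent $f \in \F$. If anything, your write-up is more careful than the paper's (you explicitly verify nonemptiness of $V_I$ to guarantee realizability and extract a concrete witness $f^* \in \F$), but the underlying idea is identical.
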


The setting of best arm identification in general function classes was previously studied in \cite{amin2011bandits} in which the sample complexity results were quantified in terms of the Haystack dimension. Specifically, letting $\F^\prime \subset \F$, define $\F^\prime((x_i,y)) = \{f \in \F : f(x_i) \neq y\}$, the subset of functions in $\F^\prime$ that disagree with the label $y$ on $x_i$, and $\F^\prime(x_i) = \{f \in \F : i \in  \argmin_{j \in [n]} f(x_j)  \}$, the set of functions in $\F^\prime$ that are minimized at $x_i$. Define $ \gamma(\F^\prime) := \sup_{i \in [n]} \inf_{y \in \Y} \tfrac{|\F^\prime(x_i) \cup \F^\prime((x_i,y))|}{|\F^\prime|}$. The Haystack dimension is defined as:
\begin{align*}
 \hd(\F)  := \frac{1}{\inf_{\F^\prime \subset \F} \gamma(\F^\prime)}.
\end{align*}
The following Proposition shows that $\bestext(\F)$ is never significantly less than the Haystack dimension $\hd(\F)$ and is never greater than $\hd(\F)$ by more than a $O(\log(|\F|))$ factor.
\begin{proposition}\label{prop:relate_HD_teach_dim}
For any $\F \subset \Y^\X$, $\hd(\F) -1 \leq \bestext(\F)  \leq c\hd(\F) \log(|\F|) $, where $c$ is a positive universal constant.
\end{proposition}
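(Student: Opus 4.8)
The plan is to prove the two inequalities separately, in both cases exploiting that $\gamma$ measures the best \emph{fractional} one-step progress toward pinning down a best arm while $\bestext$ measures the \emph{integral} certificate size; the $\log|\F|$ gap is exactly the overhead of greedy covering. Throughout I write $\F(x_i)=\{f\in\F: i\in\argmin_l f(x_l)\}$ and $\F((x_i,y))=\{f\in\F:f(x_i)\neq y\}$ as in the statement, and use the same notation with $\F$ replaced by any subset. Recall $\hd(\F)=1/\inf_{\F'}\gamma(\F')$.

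For the lower bound $\hd(\F)-1\le\bestext(\F)$, set $m=\bestext(\F)$; it suffices to show $\gamma(\F')\ge\tfrac{1}{m+1}$ for every nonempty $\F'\subseteq\F$, since then $\inf_{\F'}\gamma(\F')\ge\tfrac{1}{m+1}$ and $\hd(\F)\le m+1$. Fix $\F'$ and let $g$ be the \emph{plurality} labeling $g(x_i)\in\argmax_y|\{f\in\F':f(x_i)=y\}|$, so that $|\F'((x_i,g(x_i)))|=\inf_y|\F'((x_i,y))|$ for every $i$. Applying the definition of $\bestext$ to this $g$ yields $I$ with $|I|\le m$ and an arm $j$ with $\{f\in\F:f(x_i)=g(x_i)\,\forall i\in I\}\subseteq\F(x_j)$; intersecting with $\F'$ and using a union bound over the at most $m$ disagreement sets gives $|\F'(x_j)|\ge|\F'|-\sum_{i\in I}|\F'((x_i,g(x_i)))|$. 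Hence the at most $m+1$ terms in $|\F'(x_j)|+\sum_{i\in I}|\F'((x_i,g(x_i)))|$ sum to at least $|\F'|$, so one is $\ge|\F'|/(m+1)$. If it is the $\F'(x_j)$ term, querying $j$ works since $\F'(x_j)\subseteq\F'(x_j)\cup\F'((x_j,y))$ for every $y$; if it is some $\F'((x_i,g(x_i)))$ term, querying that $i$ works because the plurality choice makes $g(x_i)$ the worst label, whence $\inf_y|\F'(x_i)\cup\F'((x_i,y))|\ge\inf_y|\F'((x_i,y))|=|\F'((x_i,g(x_i)))|$. Either way $\gamma(\F')\ge\tfrac{1}{m+1}$.

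For the upper bound $\bestext(\F)\le c\,\hd(\F)\log|\F|$, fix any $g:\X\to\Y$, put $h=\hd(\F)$, and build $I$ by greedy set cover of the universe $\F$ with covering sets $C_i:=\F(x_i)\cup\F((x_i,g(x_i)))$. The Haystack guarantee is perfectly matched to these sets: for any nonempty remaining uncovered $R\subseteq\F$, the arm $i$ attaining $\gamma(R)\ge 1/h$ satisfies, at the particular label $y=g(x_i)$, $|C_i\cap R|=|R(x_i)\cup R((x_i,g(x_i)))|\ge|R|/h$. So each greedy step covers a $1/h$ fraction of what remains, and after $O(h\log|\F|)$ steps every $f\in\F$ lies in some $C_i$, $i\in I$. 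To read off the certified arm, take $j=\argmin_{i\in I}g(x_i)$, which depends only on $(g,I)$, and let $f$ agree with $g$ on $I$. Being covered, $f\in C_{i_0}$ for some $i_0\in I$; consistency gives $f(x_{i_0})=g(x_{i_0})$, so $f\notin\F((x_{i_0},g(x_{i_0})))$ and therefore $f\in\F(x_{i_0})$, i.e. $f(x_{i_0})=\min_l f(x_l)$. Then $f(x_j)=g(x_j)\le g(x_{i_0})=f(x_{i_0})=\min_l f(x_l)$, forcing $f\in\F(x_j)$. Thus $(I,j)$ is a valid certificate, so $\min_I|I|\le O(h\log|\F|)$, and maximizing over $g$ gives the claim.

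The main work is the upper bound, where two ideas are non-obvious: (i) covering \emph{all} of $\F$ with $C_i=\F(x_i)\cup\F((x_i,g(x_i)))$, so that the ``eliminate-or-confirm-best'' union built into $\gamma$ becomes exactly the greedy coverage evaluated at the label $g(x_i)$; and (ii) that once every $g$-consistent function has its global best arm inside $I$, the single arm $j=\argmin_{i\in I}g(x_i)$ is simultaneously best for all of them, precisely because they all agree with $g$ on $I$. (As a sanity check, the upper bound can also be obtained less directly by combining Theorem~\ref{thm:best_arm_lower_bound} with the $\hd(\F)\log|\F|$ algorithmic guarantee of \cite{amin2011bandits}, but the covering argument above is self-contained.) In the lower bound the only delicate point is the $\inf_y$ in $\gamma$, which is exactly what dictates the plurality choice of $g$.
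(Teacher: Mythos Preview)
Your proof is correct and follows essentially the same two-part strategy as the paper. Two minor differences are worth noting. For the lower bound, the paper chooses $g(x_i)=\argmin_{y}|\F'(x_i)\cup\F'((x_i,y))|$, which makes the $\inf_y$ in $\gamma(\F')$ automatic and avoids your case split; your plurality labeling works too but needs the extra step of distinguishing whether the large term is $\F'(x_j)$ or one of the disagreement sets. For the upper bound, the paper runs the greedy algorithm of \cite{amin2011bandits} verbatim until the version space is empty, which is exactly your greedy set cover in different language; however, the paper then asserts ``trivially there exists $j$'' without justification. Your explicit choice $j=\argmin_{i\in I}g(x_i)$ and the argument that any $g$-consistent $f$ has its best arm inside $I$ and hence attains it at $j$ actually fills a small expository gap in the paper's proof.
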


\subsection{Sampling Oracles for Efficient Realizable Active Learning}

In this section we introduce the concept of a \emph{sampling oracle}, a key tool for achieving computational efficiency.
In contrast to prior active methods that enumerate an intractably large version space (e.g., \cite{amin2011bandits,hegedHus1995generalized}), we instead place a measure over the version space to track its size without explicitly storing it and use sampling to approximate this measure. 
Let $\reg \subset \R^\X $ be a set of regression functions where $\R^\X$ denotes the set of all functions $r: \X \mapsto \R$. 
Given $\reg \subset \R^\X$, we say that we have access to a \emph{sampling oracle} for $\reg$ if there exists a distribution $\pi $ on $\reg$.
\begin{itemize}[leftmargin=.5cm]
    \item such that we can draw $r \sim \pi$, and
    \item for any $\widetilde{\reg} = \reg \cap \H$ where $\H$ is the intersection of $\text{poly}(n,d)$ halfspaces of the form $\{x \in \R^n : w^\t x \leq b\}$, we can sample $r \sim \pi_{\widetilde{\reg}}$ where for any measurable $A \subset \widetilde{\reg}$ $\pi_{\widetilde{\reg}}(A) = \frac{\pi(A)}{\pi(\widetilde{\reg})}$.
\end{itemize}
By representing constraints on the version space as a set $\H$ in the second bullet, we can leverage sampling techniques to estimate the measure over the version space efficiently.
We show in Appendix~\ref{sec:sampling_examples} that non-trivial sampling oracles are available for many useful function classes such as linear models, kernel methods, and convex functions. To keep the presentation simple, we assume here that we can compute probabilities exactly and show that approximation suffices in the appendix. 

For $z \in \R$, let $\round{z}$ denote the $y \in \Y$ that is closest to $z$, tiebreaking by rounding down. The set of regressors $\reg$, together with $D[\cdot]$, induces a set of discretized functions mapping $\X$ to $\Y$
\begin{align*}
\F_{\reg} & = \{\round{r} : r \in \reg\}.
\end{align*}
Our algorithmic approach is to use $\F_{\reg}$ as our predictors and to leverage the structure of $\reg$ to achieve computational efficiency. Our algorithm relies on \emph{greedy volume reduction}, which requires that the $f \in \F$ have nonzero measure and the discretized setting ensures this. In the appendix, we extend our algorithms to the continuous setting but at the cost of a potentially suboptimal sample complexity. We make the realizability assumption that there exists $r^* \in \reg$ such that $\round{r^*(x_i)} = y_i$ for all $i \in [n]$.

\subsection{The Algorithm}

In this section, we present the algorithm for $\epsilon$-good arm identification, a strict generalization of best arm identification. \texttt{GRAILS} (see Algorithm \ref{alg:effic_haystack_alg}) implicitly maintains a version space $\ver_t$ at each round over $\reg$ and thus we define
\begin{align*}
    \ver_t(x_i,y) & = \{r \in \ver_t : \round{r(x_i)} \neq y  \},\, \,  \ver_{t,\epsilon}(x_i) = \{r \in \ver_t : \round{r(x_i)}  \leq  \min_{j \in [n]} \round{r(x_j)} + \epsilon\},
\end{align*}
defined analogously to the set $\F^\prime(x_i,y)$ and $\F^\prime(x_i)$ above. $\ver_t(x_i,y)$ consists of functions in the version space that are inconsistent with the observation $(x_i,y)$ and $ \ver_{t,\epsilon}(x_i)$ is the set of functions in the version space for which $x_i$ is an $\epsilon$-good arm and therefore no longer need to be considered. When $\epsilon = 0$, we write  $ \ver_{t}(x_i)$  instead of $ \ver_{t,\epsilon}(x_i)$.

Algorithm \ref{alg:effic_haystack_alg} takes the greedy approach. At each round $t$, our algorithm queries a point $I_t \in [n]$ that maximizes the measure of the functions removed from the version space under a distribution $P_k$, namely $\P_{r \sim P_k}(r \in  \ver_t(x_i) \cup \ver_{t,\epsilon}((x_i,y))$, for the worst case observation $y \in \Y$. 
Once the algorithm queries $I_t$, it adds $I_t$ to $O_t$, the set of observations up to time $t$. 

Now, we describe our sampling approach. To keep the presentation simple and to capture the main ideas, we assume here $\reg$ is convex. Define $\texttt{better}_{\epsilon} = \max_{y < \min_{i \in O_t} y_i - \epsilon} y$, which is well-defined since $\Y$ is discrete. Note that $r \in \reg_t$ only if there exists $l \in [n]\setminus O_t$ such that $\round{r(x_l)} \leq \texttt{better}_{\epsilon}$. We may decompose the version space $\ver_t$ as the union of $O(n)$ sets:
\begin{align*}
\ver_t & =
 \cup_{l \in [n] \setminus O_t} \{r \in \reg: \round{r(x_i)} = y_i \forall i \in O_t\} \cap \{ \round{r(x_l)} \leq \texttt{better}_{\epsilon} \}  = \cup_{l \in [n] \setminus O_t} C_l(O_t)
\end{align*}
where 
\begin{align*}
C_l(O_t) := \{r \in \reg: \round{r(x_i)} = y_i \forall i \in O_t\} \cap \{ \round{r(x_l)}   \leq \texttt{better}_{\epsilon}  \}.
\end{align*}
Each set $C_l(O_t)$ is an intersection of $\reg$ with $O(n)$ halfspaces and is a convex set. Unfortunately, a union of convex sets need not be convex so one cannot hope to directly apply algorithms like hit-and-run to efficiently sample from $\ver_t$. To overcome this,
the algorithm samples from a mixture with each component supported on $C_l(O_t)$, a convex set. Operating in stages, in stage $k$, it puts the measure $P_k$ over the remaining functions in the version spaces where
\begin{align*}
\vspace{-3em}
    P_k : = \frac{1}{n-|O_t|} \sum_{l \in [n] \setminus O_t} \pi_{C_l(O_t)}.
\end{align*}
Sampling from the mixture $P_k$ is a key algorithmic innovation in this work and enables efficient sampling from a non-convex version space. 

We make two final remarks. First, \text{STOP}($\F_{\reg}$,$O_t$) is a subroutine for terminating the algorithm. It essentially checks whether the version space $\ver_t$ is empty by checking whether each of the $C_l(O_t)$ sets is feasible. For many function classes of interest such as linear models, kernel methods, and the class of convex functions, this can be formulated as a convex feasibility problem (see the Appendix for a concrete instance).
Second, in practice, one may not know the true model class precisely apriori, but it is straightforward to do model selection through the standard doubling technique. For example, for the class of Lipschitz functions, one may not know the true Lipschitz constant. In these situations, one can apply a standard doubling trick on the Lipschitz constant.

\begin{algorithm}[t]
\small
$P_1 \longleftarrow \pi$, $\ver_1 \longleftarrow \reg$, $k \longleftarrow 1$, $t_1 = 1$, $O_1 \longleftarrow \emptyset$\;
\For{$t =1,2,\ldots$}{
Let $I_t \in \argmax_{i \in [n] \setminus O_t } \min_{y \in \Y} \P_{r \sim P_k}(r \in  \ver_{t,\epsilon}(x_i) \cup \ver_t(x_i,y))$\;
Query  $x_{I_t}$ and observe $y_{I_t}$ and set $O_{t+1} \longleftarrow O_t \cup \{I_t\}$ \;
Let $\ver_{t+1} \longleftarrow \ver_t \setminus (\ver_{t,\epsilon}(x_{I_t}) \cup \ver_t(x_{I_t},y_{I_t}))$\;
\If{$ \P_{r \sim P_k}(r \in \ver_{t+1}) \leq \frac{1}{2n}$}{
$k \longleftarrow k+1$\;
$t_k \longleftarrow t+1$, $P_{k+1} \longleftarrow \frac{1}{n-|O_{t_k}|} \sum_{l \in [n] \setminus O_{t_k} } \pi_{C_l(O_{t_k})}$\;
}
\If{\text{STOP}($\F_{\reg}$,$O_t$)}{
\Return $\argmin_{i \in O_t} y_i$}
}
 \caption{\texttt{GRAILS} (GReedy Algorithm for Interactive Learning using Sampling)}
\label{alg:effic_haystack_alg}
\end{algorithm}

\subsection{The Upper Bound}

We now introduce some notation necessary to state our upper bound. 
Define for every $f \in \F_{\reg}$ the set
\begin{align*}
S_f = \{r \in \reg : \round{r(x_i)} = f(x_i) \forall i \in [n] \}.
\end{align*}
The sets $S_f$ induce a partition of $\reg$.

\begin{theorem}\label{thm:haystack_eff}
Fix $\reg$. Let $\epsilon \geq 0$. There exists a universal constant $c >0$ such that if $t$ is greater than
\begin{align*}
    c \bestextep(\F_{\reg}) \log(\frac{1}{\P_{\pi}(S_{f^*})}) \log(n),
\end{align*}
then Algorithm \ref{alg:effic_haystack_alg} has pulled an arm $I_s$ at some $s \leq t$ such that $y_{I_s} \leq \min_{i \in [n]} y_i + \epsilon$. 
\end{theorem}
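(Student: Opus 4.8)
The plan is to prove that \texttt{GRAILS} identifies an $\epsilon$-good arm within the claimed number of queries by combining a potential-function (volume-reduction) argument with the lower-bound characterization provided by $\bestextep$. The key object to track is the measure $\P_{r \sim P_k}(r \in \ver_t)$ of the current version space under the mixture distribution $P_k$, and the proof proceeds in stages indexed by $k$. Each time the version-space measure under $P_k$ drops below $\frac{1}{2n}$, the algorithm advances to stage $k+1$ and re-normalizes by resetting $P_{k+1}$. So the two quantities to control are (i) how much progress a single greedy query makes \emph{within} a stage, and (ii) how many stages can occur before $r^*$ (equivalently $S_{f^*}$) is isolated and the algorithm stops.

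\textbf{Progress within a stage.} First I would argue that the greedy choice of $I_t$ guarantees a constant-fraction multiplicative decrease in the version-space measure per query. Concretely, I would show that for \emph{any} current version space $\ver_t$ there exists a query $i$ and a value $y$ such that $\P_{r \sim P_k}(r \in \ver_{t,\epsilon}(x_i) \cup \ver_t(x_i,y))$ is a non-trivial fraction of $\P_{r \sim P_k}(r \in \ver_t)$, where the fraction is governed by $\bestextep(\F_{\reg})$. The connection to $\bestextep$ is the crux: because $\bestextep$ is the worst-case number of queries needed to make \emph{some} arm unambiguously $\epsilon$-good for every labeling $g$, a counting/averaging argument over the structure of the $S_f$ partition should show that the best single greedy query removes at least a $\tfrac{1}{\bestextep}$-fraction (up to constants) of the measure against the worst-case adversarial label. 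Taking the $\min$ over $y \in \Y$ in the algorithm's objective corresponds exactly to hedging against this adversarial observation, so the greedy step realizes this guarantee. Hence within a single stage, after $O(\bestextep)$ queries the measure shrinks by a constant factor, and after $O(\bestextep \log n)$ queries it falls below $\tfrac{1}{2n}$ of its stage-start value, triggering the advance to stage $k+1$.

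\textbf{Counting stages.} Next I would bound the number of stages. Because $P_{k+1}$ renormalizes to the surviving version space and each stage reduces the \emph{original} $\pi$-measure of the surviving set by a factor of roughly $\tfrac{1}{2n} \cdot (\text{mixture-to-}\pi \text{ conversion})$, the true believer $r^* \in S_{f^*}$ is never eliminated (by realizability $r^*$ is always consistent), so $\P_\pi(S_{f^*})$ is a floor on the surviving measure. The number of stages is therefore at most $O\!\left(\log\!\big(\tfrac{1}{\P_\pi(S_{f^*})}\big)\right)$, since each stage multiplicatively shrinks the surviving $\pi$-measure by a constant factor and the process must halt once the measure approaches $\P_\pi(S_{f^*})$ (at which point the \text{STOP} subroutine certifies that every surviving function agrees that some observed arm is $\epsilon$-good). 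Multiplying the per-stage query count $O(\bestextep \log n)$ by the stage count $O(\log(1/\P_\pi(S_{f^*})))$ yields the stated bound.

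\textbf{Main obstacle.} The hardest part will be rigorously establishing the per-query progress guarantee through the mixture $P_k$ rather than through $\pi$ directly, because the version space $\ver_t$ is non-convex and $P_k$ is an average of the conditional measures $\pi_{C_l(O_t)}$ over the $O(n)$ convex pieces $C_l(O_t)$. I will need to carefully relate $\P_{r \sim P_k}(\cdot)$ back to the underlying $\pi$-measure and to the combinatorial quantity $\bestextep(\F_{\reg})$, ensuring the $\tfrac{1}{n-|O_t|}$ mixture weighting does not degrade the constant-fraction shrinkage by more than the logarithmic factors already present in the bound. In particular, I must verify that the renormalization across the $C_l$ decomposition, together with the $\min_y$ adversarial hedge, still forces a $\Omega(1/\bestextep)$ relative decrease, and that transitions between the mixture-measure bookkeeping and the $\pi$-measure bookkeeping accumulate only the claimed $\log n$ and $\log(1/\P_\pi(S_{f^*}))$ factors. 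The realizability assumption that $r^* \in \reg$ with $\P_\pi(S_{f^*}) > 0$ — the "true believer" of the title — is what prevents the stage count from blowing up and is used essentially here.
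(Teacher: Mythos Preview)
Your overall architecture matches the paper's proof: a per-query $\Omega(1/\bestextep)$ multiplicative shrinkage of $\P_{r\sim P_k}(\ver_t)$ within each stage (yielding $O(\bestextep\log n)$ queries per stage), followed by a bound on the number of stages via the floor $\P_\pi(S_{f^*})$. The paper carries out Step~1 exactly as you sketch, by constructing an adversarial labeling $\theta$ and applying the definition of $\bestextep$ together with a union bound over the witnessing set $\tilde I\cup\{j_0\}$.

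However, your termination argument has a genuine gap. You write that ``$r^*$ is never eliminated (by realizability $r^*$ is always consistent)'' and then conclude that \text{STOP} fires once the surviving measure nears $\P_\pi(S_{f^*})$. Neither claim is quite right. First, $r^*$ \emph{can} be removed from $\ver_t$: the update $\ver_{t+1}=\ver_t\setminus(\ver_{t,\epsilon}(x_{I_t})\cup \ver_t(x_{I_t},y_{I_t}))$ removes $r^*$ precisely when $I_t$ is an $\epsilon$-good arm for $f^*$ (the consistency part never removes $r^*$, but the $\ver_{t,\epsilon}$ part does). Second, if you insist $r^*$ is never removed, then $\ver_t$ is never empty and \text{STOP} never fires, so your chain of implications breaks.

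The paper closes the argument by contradiction, and this is the step you need to rewire. It first converts the mixture bound into a per-component bound: from $\P_k(\ver_{t_{k+1}})\le\tfrac{1}{2n}$ and $P_k=\tfrac{1}{n-|O_{t_k}|}\sum_l \pi_{C_l(O_{t_k})}$, one gets $\P_\pi(C_l(O_{t_{k+1}}))\le\tfrac12\,\P_\pi(C_l(O_{t_k}))$ for every $l$, using $C_l(O_{t_{k+1}})\subset \ver_{t_{k+1}}\cap C_l(O_{t_k})$. After $O(\log(1/\P_\pi(S_{f^*})))$ stages this forces $\P_\pi(C_l(O_{\bar t}))<\P_\pi(S_{f^*})$ for all $l$, hence $S_{f^*}\not\subset \cup_l C_l(O_{\bar t})=\ver_{\bar t}$. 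But if no $\epsilon$-good arm had been pulled, then $S_{f^*}\subset C_{l^*}(O_{\bar t})$ for the true minimizer $l^*$, a contradiction. So the conclusion is not ``\text{STOP} certifies success'' but rather ``$S_{f^*}$ has been ejected, and the only mechanism that can eject it is pulling an $\epsilon$-good arm.'' Once you recast your stage-counting paragraph in this contrapositive form and make the per-$C_l$ halving explicit, your proof will align with the paper's.
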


When $\epsilon = 0$, Algorithm \ref{alg:effic_haystack_alg} and Theorem \ref{thm:haystack_eff} together solve the open problem from \cite{amin2011bandits} of developing a computationally efficient optimal algorithm for best arm identification for generic function classes that matches the minimax lower bound up to logarithmic factors. 


\textbf{Comparison to prior work:} \cite{amin2011bandits} proposes a \emph{computationally inefficient} algorithm for best arm identification that obtains a sample complexity of $O(\hd(\F_{\calR}) \log(|\F_{\calR}|))$. By Proposition \ref{prop:relate_HD_teach_dim}, $\bestext(\F_{\reg})$ is upper bounded by $\hd(\F_{\calR}) \log(|\F_{\calR}|)$, and thus our sample complexity is loose by a factor of $\log(\frac{1}{\P_{\pi}(S_{f^*})}) \log(n)$. Although our bound is indeed looser than the bound in \cite{amin2011bandits}, we note that computationally efficient algorithms for other active learning problems that match the minimax lower bound up to logarithmic factors have a similar logarithmic dependence on the inverse probability of sampling the true function \cite{tosh2017diameter,tosh2020diameter}. Thus, it is an important open question whether it is possible to develop computationally efficient and nearly minimax optimal algorithms for active learning that weaken or remove the dependence on $\log(\frac{1}{\P_{\pi}(S_{f^*})})$ in their sample complexity.




We close this section with a simple instance of best arm identification in a linear function class that provides an easy instantiation of our upper bound. 

\begin{proposition}\label{prop:conjun_class}
Let $x_1,\ldots, x_n \in \R^{n+1}$ such that $x_i = \frac{i}{n} e_1 + 10\cdot e_{i+1}$. Let $\reg_i = \{r(v) = v_1 - b  + v_{i+1} : b \in \frac{1}{n}[i,i+1) \} $ where $v_i$ denotes the $i$th entry of $v$ and $\reg = \cup_{i=1}^n \reg_i$. Define $\Y = \{0,1,10\}$. Let $\pi $ be a uniform distribution over $[0,1]$. Fix $f^* \in \F_{\reg}$. Then, \texttt{GRAILS} returns the best arm in $\bestext(\F_{\reg}) \log(\frac{1}{\P_{\pi}(S_{f^*})})\log(n) \leq O(\log(n)^2)$ samples.
\end{proposition}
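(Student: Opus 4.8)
The plan is to instantiate Theorem~\ref{thm:haystack_eff} with $\epsilon = 0$: once $t$ exceeds $c\,\bestext(\F_\reg)\log\!\big(1/\P_\pi(S_{f^*})\big)\log(n)$, \texttt{GRAILS} has queried a minimizer of $f^*$ and, via the \text{STOP} routine, returns it. It therefore suffices to establish two facts, uniformly over the fixed $f^* \in \F_\reg$: (i) $\P_\pi(S_{f^*}) \ge \Omega(1/n)$, so that $\log(1/\P_\pi(S_{f^*})) = O(\log n)$; and (ii) $\bestext(\F_\reg) = O(1)$. Multiplying the two bounds against the extra $\log(n)$ factor of the theorem yields the claimed $O(\log(n)^2)$.

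First I would make the discretized class explicit. A regressor $r \in \reg_i$ with parameter $b \in [\tfrac in, \tfrac{i+1}n)$ evaluates to $r(x_j) = \tfrac jn - b + 10\cdot\one[i=j]$, so its rounding $f = \round{r}$ has $f(x_i) = 10$ (since $\tfrac in - b \in (-\tfrac1n,0]$), while for $j \ne i$ we get $\tfrac jn - b \in (-1,1)$ and hence $f(x_j) = \one[\tfrac jn - b \ge \half] = \one[\,j \ge n(b+\half)\,]$, never rounding to $10$. On the cell $[\tfrac in,\tfrac{i+1}n)$ the function $f$ is thus determined by the single threshold $n(b+\half)$, which sweeps through an interval of length one as $b$ traverses the cell; only one integer can cross it, so each $\reg_i$ contributes at most two distinct discretized functions, and the $b$-set $S_f$ realizing any given $f$ is a subinterval of length at least $\tfrac1{2n}$. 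Since $\pi$ is uniform on the (length-one) parameter range, $\P_\pi(S_{f^*}) \ge \tfrac1{2n}$, which gives (i).

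The heart of the argument is (ii), which I would prove by exhibiting, for every adversarial labeling $g:\X\to\Y$, a teaching set of size one. The key is the special role of arm $1$: for $i \ge 2$ every $f \in \reg_i$ has $f(x_1) = 0$, whereas every $f \in \reg_1$ has $f(x_1) = 10$ and, for $n \ge 3$, has $f(x_2) = 0$ because $\tfrac2n - b \in (0,\tfrac1n]\subset[0,\half)$. Taking $I = \{1\}$ and splitting on $g(x_1)$: if $g(x_1)=0$, the consistent set $\{f : f(x_1)=0\}$ consists entirely of functions whose minimum value is $0$ attained at arm $1$, so $j=1$ certifies; if $g(x_1)=1$, the consistent set is empty (no $f$ takes value $1$ at $x_1$) and the containment holds vacuously; and if $g(x_1)=10$, the consistent set is exactly $\reg_1$, whose (at most two) functions are all minimized at arm $2$, so $j=2$ certifies. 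In every case $\min_I |I| \le 1$, whence $\bestext(\F_\reg) \le 1$.

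Combining (i) and (ii) with Theorem~\ref{thm:haystack_eff} gives a sample complexity of $c\cdot O(1)\cdot O(\log n)\cdot\log n = O(\log(n)^2)$, as claimed. The only genuinely delicate step is (ii): although the construction hides a single ``needle'' cell ($\reg_1$) that flips arm $1$ from best to worst, the overlapping parametrization forces each $\reg_i$ to contain only $O(1)$ discretized functions that share a common minimizer, and this is precisely what collapses the extended-teaching quantity to a constant. I expect the main place requiring care to be the verification of the rounding boundaries and the tie-breaking convention, but these become routine once the evaluation formula $r(x_j) = \tfrac jn - b + 10\cdot\one[i=j]$ is in hand.
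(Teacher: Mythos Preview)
Your proposal is correct and follows the same template as the paper: bound $\bestext(\F_\reg)$ by a constant, bound $\P_\pi(S_{f^*})$ below by $\Omega(1/n)$, and invoke Theorem~\ref{thm:haystack_eff}. The paper obtains $\bestext(\F_\reg)\le 3$ by a case split on whether $g\in\F_\reg$ (using two or three queries depending on the case), whereas you obtain the sharper $\bestext(\F_\reg)\le 1$ via the single teaching set $I=\{1\}$, exploiting that arm~$1$ has value $0$ (hence is a minimizer) for every $f$ outside $\reg_1$ and value $10$ for every $f\in\reg_1$; your argument is cleaner and in fact tight, since no single arm is simultaneously best for all of $\F_\reg$.
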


By contrast, any non-adaptive algorithm would require $\Omega(n)$ samples for the above function class. Furthermore, we suspect that the additional logarithmic factor stemming from Theorem \ref{thm:haystack_eff} is an artifact of the analysis. 
We discuss other instances in the appendix.

%


\section{An Efficient Algorithm for Cumulative Loss Minimization}\label{sec:loss_min}

In this Section, we consider the task of loss minimization. To interpret each $y_i$ as a loss, we assume $\min_{y \in \Y} y \geq 0$. Recall that in this setting when the learner identifies an element $i^\ast \in \argmin_{i \in [n]} f^*(x_i)$, she declares that the game is over. 
We stress that the learner can identify an element $i^\ast \in \argmin_{i \in [n]} f^*(x_i)$ by eliminating all $f \in \F$ such that $i^\ast \not\in \arg\min_{i\in [n]}f(x_i)$. 

To begin, we introduce the following novel quantity for quantifying the worst-case difficulty of cumulative loss minimization, inspired by the extended teaching dimension:
\begin{align*}
\lossext(\F) :=& \max_{g : \X \mapsto \Y} \min_{I \subset [n]} \min_{\widetilde{I} \subset I : |\widetilde{I}| = |I| -1} \sum_{i \in \widetilde{I}}^{} g(x_{i})   \\
& \text{ s.t. } \exists j \in [n] : \{f \in \F :   f(x_i) = g(x_i) \, \forall i \in [I] \} \subset \{f \in \F : j \in \argmin_{i \in [n]} f(x_i)  \}.
\end{align*}
We occasionally write $\lossext$ instead of $\lossext(\F)$ when the context leaves no ambiguity.  In words, $\lossext$ is the loss that must be incurred (up to the penultimate round) for some scoring function $g: \X \mapsto \Y$ in order to identify the best arm. Next, we show that $\lossext$ is a minimax lower bound.

\begin{theorem}\label{thm:loss_min_lower_bound}
For all $\F \subset \Y^\X$, $\lossext(\F) \leq \lossopt(\F)$.
\end{theorem}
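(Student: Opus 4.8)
The plan is to mirror the adversary argument behind the best-arm lower bound (Theorem~\ref{thm:best_arm_lower_bound}): exhibit a single hard instance on which \emph{every} correct loss-minimization algorithm is forced to pay at least $\lossext(\F)$. First I would let $g^*$ be a labeling attaining the outer maximum in the definition of $\lossext(\F)$, with $I^*$ and witness $j$ attaining the inner minima. I would then take an arbitrary algorithm $\A$ that correctly identifies the best arm on every realizable instance while incurring loss at most $\lossopt(\F)$, run it against responses dictated by $g^*$ (answer a query $x_{I_t}$ with $g^*(x_{I_t})$), and let $T$ be its stopping round and $Q = \{I_1,\dots,I_T\}$ the set of distinct arms it queries.

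The crux is a ``stopping implies teaching'' step. Since $\A$ sees only the labels on $Q$ and must be correct on every realizable $f'$ consistent with those labels, any two such $f'$ must share the best arm $\A$ declares; hence the version space $\{f \in \F : f(x_i) = g^*(x_i)\ \forall i \in Q\}$ is contained in $\{f : j' \in \argmin_{i} f(x_i)\}$ for the declared $j'$, so $Q$ is feasible in the program defining $\lossext$. Because the setting assumes $\min_{y \in \Y} y \ge 0$, all queried labels are nonnegative, so the incurred loss satisfies
\[
\textstyle\sum_{t=1}^{T} g^*(x_{I_t}) \;\ge\; \sum_{i \in Q} g^*(x_i) \;\ge\; \sum_{i \in Q} g^*(x_i) - \max_{i \in Q} g^*(x_i) \;=\; \min_{\widetilde I \subset Q,\, |\widetilde I| = |Q|-1} \sum_{i \in \widetilde I} g^*(x_i).
\]
As $Q$ is one feasible set, the right-hand side is at least $\lossext(\F)$; dropping the most expensive queried arm is exactly the ``penultimate round'' slack built into the definition, and nonnegativity of the labels is what makes this a valid lower bound. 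Since $\A$ was an arbitrary correct algorithm, taking the minimum over algorithms of the worst-case loss yields $\lossopt(\F) \ge \lossext(\F)$.

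The step I expect to be the main obstacle is reconciling the \emph{unrestricted} maximization over $g : \X \mapsto \Y$ with the \emph{realizability} constraint that the responses must be consistent with some $f^* \in \F$ (otherwise there is no legitimate instance to run $\A$ on, and $\A$'s behavior on non-realizable label streams is unconstrained). Concretely, I need $\{f \in \F : f|_Q = g^*|_Q\}$ to be nonempty so that $f^*$ is a genuine element of $\F$ and the ``stopping implies teaching'' step has content. I would handle this by arguing the maximizing $g^*$ may be taken realizable: a non-realizable $g$ only helps the outer max if it admits no cheap feasible set, but if feeding $g$ ever empties the version space then the corresponding prefix of $Q$ is \emph{vacuously} feasible and cheap, driving the inner minimum down and contradicting $g$ being a maximizer. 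Making this dichotomy precise --- either the trajectory reaches a genuine (nonempty, unanimous) teaching set, or it hits a cheap vacuously-feasible set first --- is the delicate part, and it is where I would spend the most care; the remaining accounting is routine given the nonnegativity of $\Y$.
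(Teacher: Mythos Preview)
Your overall adversary strategy and the ``stopping implies teaching'' step match the paper's argument. You also correctly identify the main obstacle: the outer maximizer $g^*$ need not be realizable, so the cumulative loss $\sum_t g^*(x_{I_t})$ is not, a priori, the loss on any instance in $\F$ and hence cannot be directly compared to $\lossopt(\F)$.

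Where your proposal diverges from the paper is in the fix. You propose to argue that the maximizing $g^*$ may be taken realizable, via a dichotomy on whether the trajectory empties the version space. This is harder than necessary and, as stated, does not quite work: an $I$ that vacuously satisfies the constraint (empty version space) is feasible but need not be \emph{cheap}, so a non-realizable $g$ is not automatically ruled out as a maximizer. The paper sidesteps the whole issue by never trying to make $g^*$ realizable. Instead, let $t_0$ be the first round at which the version space under $\bar g$'s responses becomes unanimous (this includes the vacuous case of an empty version space). By minimality of $t_0$, the version space at round $t_0-1$ is nonempty; pick any $\bar f\in\F$ in it. On the genuinely realizable instance $\bar f$, the algorithm's first $t_0-1$ rounds coincide with those under $\bar g$, and correctness forces it to continue past round $t_0-1$, so its loss is at least $\sum_{s=1}^{t_0-1}\bar g(x_{i_s})$. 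Since $\{i_1,\dots,i_{t_0}\}$ is feasible and $\{i_1,\dots,i_{t_0-1}\}$ is an admissible $\widetilde I$, this sum is at least $\lossext(\F)$.

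This also clarifies the role of the $\min_{\widetilde I}$ in the definition: it is not primarily ``accounting slack for the most expensive query,'' but rather the device that lets the lower bound discard the \emph{last} query $i_{t_0}$ --- the one place where $\bar g$ and the realizable witness $\bar f$ may disagree. Dropping $i_{t_0}$ is one admissible choice of $\widetilde I$, so the inner minimum is at most $\sum_{s=1}^{t_0-1}\bar g(x_{i_s})$, which is exactly what the realizable instance $\bar f$ is charged.
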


The following Proposition gives an upper bound of $\lossext$ in terms of $\bestext$.

\begin{proposition}\label{prop:relate_regret_td_best_td}
For all $\F \subset \Y^\X$, $\lossext(\F) \leq \bestext(\F) \max_{y \in \Y} y$.
\end{proposition}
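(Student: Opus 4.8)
The plan is to exploit the fact that the feasibility constraint on the index set $I$ is \emph{identical} in the definitions of $\lossext(\F)$ and $\bestext(\F)$. Fix an arbitrary $g : \X \mapsto \Y$ and let $\mathcal{I}(g)$ denote the collection of sets $I \subset [n]$ for which there exists $j \in [n]$ with $\{f \in \F : f(x_i) = g(x_i)\ \forall i \in I\} \subset \{f \in \F : j \in \argmin_{l \in [n]} f(x_l)\}$. This is precisely the feasible region appearing in both definitions: recall that $f(x_j) \le \min_{l} f(x_l)$ is equivalent to $j \in \argmin_{l} f(x_l)$, so the $\epsilon = 0$ constraint of $\bestextep$ coincides with the constraint of $\lossext$. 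Moreover $\mathcal{I}(g)$ is always nonempty, since $I = [n]$ is feasible (if the associated consistency set is empty the inclusion holds vacuously, and otherwise all surviving $f$ share the same argmin).

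The first step is to simplify the inner minimization over $\widetilde I$ in the definition of $\lossext$. For a fixed feasible $I$, the choice of $\widetilde I \subset I$ with $|\widetilde I| = |I| - 1$ minimizing $\sum_{i \in \widetilde I} g(x_i)$ simply deletes the element of $I$ carrying the largest $g$-value, so $\min_{\widetilde I} \sum_{i \in \widetilde I} g(x_i) = \sum_{i \in I} g(x_i) - \max_{i \in I} g(x_i)$, a sum of exactly $|I| - 1$ terms. Since the loss-minimization setting assumes $\min_{y \in \Y} y \ge 0$, each term lies in $[0,\bar y]$ where $\bar y := \max_{y \in \Y} y$, which gives the pointwise bound $\sum_{i \in I} g(x_i) - \max_{i \in I} g(x_i) \le (|I|-1)\,\bar y \le |I|\,\bar y$.

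The second step evaluates the outer minimum of $\lossext$ at the cardinality-minimizing feasible set: let $I_g \in \mathcal{I}(g)$ attain $|I_g| = \min_{I \in \mathcal{I}(g)} |I|$, i.e.\ the set witnessing the inner minimum in $\bestext$. Because $I_g$ is itself feasible, the minimum over feasible $I$ in $\lossext$ is at most its value at $I_g$, so by the previous step $\min_{I \in \mathcal{I}(g)}\bigl(\sum_{i \in I} g(x_i) - \max_{i \in I} g(x_i)\bigr) \le |I_g|\,\bar y = \bigl(\min_{I \in \mathcal{I}(g)} |I|\bigr)\,\bar y$. This holds for every $g$, and taking the maximum over $g$ of both sides preserves the inequality; the left side is exactly $\lossext(\F)$ and $\max_g \min_{I \in \mathcal{I}(g)} |I| = \bestext(\F)$, yielding $\lossext(\F) \le \bestext(\F)\,\bar y$. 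I do not expect a genuine obstacle: the crux is merely recognizing that the two feasible regions coincide and that the penultimate-round sum drops exactly one term. The only thing to dispatch is the degenerate case $|I_g| \le 1$, where the penultimate-round loss is an empty sum equal to $0$ and the bound holds trivially.
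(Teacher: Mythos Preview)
Your proof is correct and follows essentially the same approach as the paper: both fix $g$, take the cardinality-minimizing feasible set $\bar I$ witnessing $\bestext$, and bound the penultimate-round loss at $\bar I$ by $|\bar I|\cdot\max_{y\in\Y}y\le\bestext\cdot\max_{y\in\Y}y$. Your write-up is in fact slightly more careful than the paper's, since you explicitly identify the inner minimum over $\widetilde I$ as dropping the largest term and dispatch the degenerate case $|I_g|\le 1$.
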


In words, this Proposition reflects that one strategy for cumulative loss minimization is to minimize the number of queries to identify the best arm, ignoring the losses.

\subsection{The Algorithm and Upper Bound}

Algorithm \ref{alg:effic_haystack_alg_loss} is similar to Algorithm \ref{alg:effic_haystack_alg}: it is greedy and operates in phases, sampling from the mixture $P_k$ in the $k$th phase due to the nonconvexity of the version space. The main difference is the objective for selecting $I_t$: it queries the arm $I_t$ that for the worst case $y \in \Y$ minimizes the ratio of the loss incurred in round $t$ and the volume of the functions removed under the measure $P_k$. This objective is inspired by the greedy algorithm for weighted set cover \cite{williamson2011design}.

Define $\Smin  = \argmin_{S_f : f \in \F_{\calR}, \P_\pi(S_f) > 0} \P_{\pi}( S_f) $. $\Smin$ is the subset of $\F_\reg$ in the partition that has the least nonzero probability under $\pi$. Recall $\reg $ is a set of regression functions.

\begin{algorithm}[t]
\small
$P_1 \longleftarrow \pi$, $\ver_1 \longleftarrow \reg$, $k \longleftarrow 1$, $t_1 = 1$, $O_1 \longleftarrow \emptyset$\;
\For{$t =1,2,\ldots$}{
Let $I_t \in \argmin_{i \in [n] \setminus O_t } \max_{y \in \Y} \frac{y}{\P_{r \sim P_k}(r \in  \ver_t(x_i) \cup \ver_t((x_i,y)))}$\;
Query  $x_{I_t}$ and observe $y_{I_t}$ and set $O_{t+1} \longleftarrow O_t \cup \{I_t\}$ \;
Let $\ver_{t+1} \longleftarrow \ver_t \setminus (\ver_t(x_{I_t}) \cup \ver_t((x_{I_t},y_{I_t})))$\;
\If{$ \P_{r \sim P_k}(r \in \ver_{t+1}) \leq \frac{1}{2n}$}{
$k \longleftarrow k+1$\;
$t_k \longleftarrow t+1$, $P_{k+1} \longleftarrow \frac{1}{n-|O_{t_k}|} \sum_{l \in [n] \setminus O_{t_k} } \pi_{C_l(O_{t_k})}$\;
}
\If{\text{STOP}($\F_{\reg}$,$O_t$)}{
\Return $\argmin_{i \in O_t} y_i$}
}
 \caption{\texttt{GRAILS} for Loss Minimization}
\label{alg:effic_haystack_alg_loss}
\end{algorithm}

\begin{theorem}\label{thm:loss_min_eff}
Fix $\reg$. Algorithm \ref{alg:effic_haystack_alg_loss} identifies $\argmin_{i \in [n]} f^*(x_i)$ after incurring a loss of at most 
\begin{align*}
     \widetilde{O}(\lossopt(\F_{\reg})) = 2(\lossext(\F_{\reg}) + \max_{i \in [n],f \in \F_{{\reg}}} f(x_i)) \log(n) \log\left(\frac{1}{\P(\Smin)}\right) .
\end{align*}
\end{theorem}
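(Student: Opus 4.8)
The plan is to split the cumulative loss into a per-phase contribution and to bound the number of phases separately, producing the two logarithmic factors $\log(n)$ and $\log(1/\P(\Smin))$ in the statement. Write $t_k$ for the first round of phase $k$. Within phase $k$ the sampling measure $P_k$ is a fixed probability distribution supported on the current version space $\ver_{t_k}=\bigcup_l C_l(O_{t_k})$, so $\P_{r\sim P_k}(r\in\ver_{t_k})=1$, and the phase ends the first round this mass drops to $\frac{1}{2n}$ or below. I would prove (i) a per-phase loss bound of $O\big((\lossext(\F_\reg)+\max_{i,f}f(x_i))\log n\big)$ and (ii) a bound of $O(\log(1/\P(\Smin)))$ on the number of phases, and take their product.

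For the phase count, the key structural fact is that every $C_l(O_t)$, and hence $\ver_t$ at all times, is a union of partition cells $S_f$, because the constraints $\round{r(x_i)}=y_i$ and $\round{r(x_l)}\le\texttt{better}$ depend on $r$ only through its discretized labeling, which is constant on each $S_f$. Thus any nonempty version space has $\pi$-mass at least $\P(\Smin)$, and $\ver_t$ becomes empty (so \text{STOP} fires) once its $\pi$-mass falls below $\P(\Smin)$. I would then establish $\pi(\ver_{t_{k+1}})\le\tfrac12\pi(\ver_{t_k})$: since each $C_l(O_{t_k})\subseteq\ver_{t_k}$ we have $\pi(C_l)\le\pi(\ver_{t_k})$, so with $N=n-|O_{t_k}|$ the phase-change condition and a union bound give
\begin{align*}
\tfrac{1}{2n}\ \ge\ \P_{r\sim P_k}(r\in\ver_{t_{k+1}})=\tfrac{1}{N}\sum_{l}\tfrac{\pi(C_l\cap\ver_{t_{k+1}})}{\pi(C_l)}\ \ge\ \tfrac{1}{N\,\pi(\ver_{t_k})}\sum_l\pi(C_l\cap\ver_{t_{k+1}})\ \ge\ \tfrac{\pi(\ver_{t_{k+1}})}{n\,\pi(\ver_{t_k})}.
\end{align*}
Rearranging yields the halving, so after $O(\log(1/\P(\Smin)))$ phases the $\pi$-mass is below $\P(\Smin)$ and the algorithm has already terminated.

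The per-phase bound is a measure-theoretic weighted set cover whose heart is a ratio lemma. Set $B:=\lossext(\F_\reg)+2\max_{i,f}f(x_i)$. I claim that at any round $t$ of phase $k$ with remaining mass $\mu=\P_{r\sim P_k}(r\in\ver_t)$ the greedy objective satisfies $\min_i\max_y\frac{y}{\P_{r\sim P_k}(r\in\ver_t(x_i)\cup\ver_t(x_i,y))}\le\frac{B}{\mu}$. I would prove this by contradiction: if every $i$ admitted an answer $y_i$ with $\P_{r\sim P_k}(r\in\ver_t(x_i)\cup\ver_t(x_i,y_i))<\frac{y_i\mu}{B}$, then the labeling $g(x_i):=y_i$ would, by the definition of $\lossext$, admit a witnessing set $I$ and index $j$ with $\sum_{i\in I}g(x_i)-\max_{i\in I}g(x_i)\le\lossext$ and $\{r:\round{r(x_i)}=g(x_i)\ \forall i\in I\}\subseteq\ver_t(x_j)$; hence $\ver_t\subseteq\bigcup_{i\in I}\ver_t(x_i,g(x_i))\cup\ver_t(x_j)$, and taking $P_k$-mass with a union bound gives $\mu<\frac{\mu}{B}\big(\sum_{i\in I}g(x_i)+\max_{i,f}f(x_i)\big)\le\frac{\mu}{B}\cdot B=\mu$, a contradiction. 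Consequently, for the true observed answer the loss and the removed mass $\delta_t=\mu_t-\mu_{t+1}$ obey $y_{I_t}\le\frac{B}{\mu_t}\delta_t$. Summing over the rounds of the phase strictly before the one that crosses the threshold, and using $\frac{\mu_t-\mu_{t+1}}{\mu_t}\le\ln\frac{\mu_t}{\mu_{t+1}}$, telescopes to $B\ln(1/\mu_\tau)<B\ln(2n)$ (since $\mu_\tau>\frac{1}{2n}$ before the crossing step), while the single crossing step contributes at most $\frac{B}{\mu_\tau}\delta_\tau\le B$. Hence each phase costs $O(B\log n)$, and multiplying by the phase count gives the stated $2(\lossext+\max_{i,f}f(x_i))\log(n)\log(1/\P(\Smin))$.

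The step I expect to be most delicate is reconciling the benchmark $\lossext$, which is a non-adaptive maximum over labelings $g$, with the greedy's adaptive, worst-case-over-$y$ selection; the resolution above is to read the per-query adversarial answers $(y_i)_i$ as a single labeling $g$, so that $\lossext$'s outer maximum dominates them, and this is exactly why that quantity quantifies over all $g$. Secondary obstacles are bookkeeping: I must confirm that \text{STOP} detects exactly the emptying of $\ver_t$ (so the telescoping never runs past $\mu=0$, the crossing-step bound $\delta_\tau\le\mu_\tau$ handling the final identifying query and furnishing the ``$+\max_{i,f}f(x_i)$'' term), handle the edge case in the ratio lemma where the witnessing index $j$ already lies in $O_t$ (its cell is then already accounted for), and verify that all mass computations remain valid under the mixture $P_k$ despite the nonconvex union $\bigcup_l C_l(O_t)$, which the union bounds above are designed to accommodate.
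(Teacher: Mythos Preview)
Your proposal is correct and follows essentially the same two-part structure as the paper's proof: a per-phase weighted-set-cover argument showing each phase costs $O((\lossext+\max_{i,f}f(x_i))\log n)$, combined with a halving argument bounding the number of phases by $O(\log(1/\P(\Smin)))$. Your execution differs only stylistically---you prove the ratio lemma by contradiction whereas the paper uses a two-case split (depending on whether one witnessing arm carries half the mass) together with the elementary inequality $\min_i a_i/b_i\le(\sum a_i)/(\sum b_i)$; you telescope via $\tfrac{\mu_t-\mu_{t+1}}{\mu_t}\le\ln\tfrac{\mu_t}{\mu_{t+1}}$ whereas the paper expands into a harmonic sum; and you halve the union $\pi(\ver_{t_k})$ directly whereas the paper halves each component $\pi(C_l(O_{t_k}))$ separately---but these are minor variations of the same ideas and yield the same bound.
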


The above guarantee is optimal up to a multiplicative factor of $\log(n) \log(\frac{1}{\P(\Smin)})$ and an additive factor of $\max_{i \in [n],f \in \F} f(x_i)$, which is typically of lower order. We also give an algorithm that enumerates the function class and therefore is not efficient when $|\F_{\calR}|$ is exponential in problem-dependent parameters, but has a stronger guarantee. Due to space constraints and its similarity to Algorithm \ref{alg:effic_haystack_alg_loss}, we defer its presentation to the supplementary material and provide the result here.

\begin{theorem}\label{thm:loss_minimization_upper_bound_inefficient}
Fix $\F$. Algorithm \ref{alg:effic_haystack_alg_loss} identifies $\argmin_{i \in [n]} f^*(x_i)$ after incurring a loss of at most 
\begin{align*}
   \widetilde{O}(\lossopt(\F)) = 2(\lossext(\F) + \max_{i \in [n],f \in \F} f(x_i)) \log(|\F|).
\end{align*}
\end{theorem}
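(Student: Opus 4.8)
The plan is to treat the enumerating variant of Algorithm~\ref{alg:effic_haystack_alg_loss} as an adaptive weighted greedy set-cover procedure and import the classical $H_{|\F|}$-approximation analysis. Here the ``universe'' to be covered is the version space $V_t \subseteq \F$, tracked exactly under the counting measure; because $|V_t|$ is known exactly there is no need for the phase restarts of the sampling-based version, and a single phase suffices, which is precisely what removes the $\log(n)\log(1/\P(\Smin))$ overhead. Querying arm $i$ and observing the true label $y_i$ ``covers'' the set $V_t(x_i)\cup V_t((x_i,y_i))$ at cost $y_{I_t}$, where $V_t(x_i)=\{f\in V_t : i\in\argmin_l f(x_l)\}$ and $V_t((x_i,y))=\{f\in V_t : f(x_i)\neq y\}$. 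Writing $\Delta_t := |V_t|-|V_{t+1}|$, the total loss is $\sum_t y_{I_t}=\sum_t \rho_t\Delta_t$ with $\rho_t:=y_{I_t}/\Delta_t$, so the whole result reduces to (i) a per-round ratio bound $\rho_t \le (\lossext(\F)+2\max_{i,f}f(x_i))/|V_t|$ and (ii) a harmonic telescoping $\sum_t \Delta_t/|V_t| \le 1+\log|\F|$, which together give the claim up to the stated constant.

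The crux is the per-round ratio bound, and this is where the min--max form of the greedy rule must be reconciled with the $\max_g$ in the definition of $\lossext$. I would argue by contradiction: suppose every arm $i$ admits a ``bad'' label $y_i^{\mathrm{bad}}$ with $y_i^{\mathrm{bad}}/|V_t(x_i)\cup V_t((x_i,y_i^{\mathrm{bad}}))| > C/|V_t|$, and let $g$ be the labeling agreeing with $f^*$ on $O_t$ and equal to $y_i^{\mathrm{bad}}$ elsewhere. Applying the definition of $\lossext(\F)$ to this $g$ yields a teaching set $I\subseteq[n]$ and an index $j$ with $\{f\in\F : f(x_i)=g(x_i)\ \forall i\in I\}\subseteq\{f : j\in\argmin_l f(x_l)\}$ and $\sum_{i\in I}g(x_i)-\max_{i\in I}g(x_i)\le\lossext(\F)$. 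The key covering claim is that $\{V_t(x_i)\cup V_t((x_i,g(x_i)))\}_{i\in I}$ together with $V_t(x_j)$ cover $V_t$: any $f\in V_t$ consistent with $g$ on all of $I$ satisfies $j\in\argmin_l f(x_l)$ by the teaching property, hence lies in $V_t(x_j)$, while any inconsistent $f$ lies in some $V_t((x_i,g(x_i)))$. A standard set-cover averaging inequality then gives $|V_t|\le\sum_{i\in I\cup\{j\}}|V_t(x_i)\cup V_t((x_i,g(x_i)))| < (|V_t|/C)\sum_{i\in I\cup\{j\}}g(x_i)$, so $C<\sum_{i\in I\cup\{j\}}g(x_i)\le\lossext(\F)+2\max_{i,f}f(x_i)$; contrapositively, $C=\lossext(\F)+2\max_{i,f}f(x_i)$ produces an arm $i_0$ whose \emph{worst-case} ratio $\max_y y/|V_t(x_{i_0})\cup V_t((x_{i_0},y))|$ is at most $C/|V_t|$. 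Since the greedy rule minimizes exactly this worst-case ratio and the realized ratio never exceeds it, $\rho_t \le \min_i\max_y y/|V_t(x_i)\cup V_t((x_i,y))| \le C/|V_t|$.

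With the ratio bound in hand the telescoping is routine: for integers $|V_{t+1}|<|V_t|$ one has $\Delta_t/|V_t| \le H_{|V_t|}-H_{|V_{t+1}|}$, and summing over $t$ collapses to $H_{|V_1|}-H_{|V_{\mathrm{final}}|} \le H_{|\F|} \le 1+\log|\F|$; multiplying by $C$ gives the stated loss, absorbing constants into the leading factor $2$. I expect the main obstacle to be the per-round ratio bound, specifically verifying the covering claim and getting the min--max direction right: the greedy rule is conservative, hedging against the worst label of each arm, while $\lossext$ is itself a worst-case-over-$g$ quantity, and the argument only closes because the adversarial labeling $g$ invoked to apply $\lossext$ is chosen to realize exactly the per-arm worst cases the greedy objective hedges against. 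A secondary point requiring care is the boundary accounting for the forced best arm $j$ and for the dropped $\max$ term in $\lossext$, which is what the additive $\max_{i,f}f(x_i)$ in the theorem is there to absorb.
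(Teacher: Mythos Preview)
Your proposal is correct and follows essentially the same weighted-set-cover argument as the paper: construct the adversarial labeling $g$ that realizes the per-arm worst-case ratio, invoke $\lossext$ on $g$ to obtain a teaching set $I$ (plus the forced best arm $j$) that covers the current version space, bound the greedy ratio via the standard averaging inequality, and finish with the harmonic telescoping $\sum_t \Delta_t/|V_t|\le H_{|\F|}$.

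The one place your write-up differs from the paper is tactical rather than structural. The paper drops the single element $i_l\in I$ of largest $g$-value before averaging, which forces a two-case split on whether $|\F_t(x_{i_l})\cup\F_t((x_{i_l},g(x_{i_l})))|$ exceeds $|\F_t|/2$; this yields the per-round ratio bound $2(\lossext+\max_{i,f}f(x_i))/|\F_t|$. You instead keep all of $I\cup\{j\}$ in the averaging and absorb both the dropped-element term and the $j$-term into additive $\max_{i,f}f(x_i)$ contributions, obtaining $(\lossext+2\max_{i,f}f(x_i))/|V_t|$ without any case analysis. Both routes land on the stated bound up to the leading constant, and your version is arguably cleaner.
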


Theorems \ref{thm:loss_min_eff} and \ref{thm:loss_minimization_upper_bound_inefficient} are the first algorithms for cumulative loss minimization that match the minimax lower bound up to logarithmic factors. As a corollary, we obtain results for \emph{regret minimization}, in which the goal is to identify $\argmin_{i \in [n]} f^*(x_i)$ while minimizing the regret $\sum_{t=1}^T f^*(x_{I_t}) - \min_{i \in [n]} f^*(x_i)$ incurred where $T$ is the round that the agent identifies $\argmin_{i \in [n]} f^*(x_i)$. 
\begin{remark}
If $\min_j f(x_j) = 0$ for all $f \in \F$, then loss minimization is equivalent to regret minimization. In this case, Theorem \ref{thm:loss_minimization_upper_bound_inefficient} gives nearly optimal minimax bounds for the regret minimization metric. Furthermore, if $\min_j f(x_j) = \min_j f^\prime(x_j) =: \text{opt}$ for all $f,f^\prime \in \F$, then by subtracting $\text{opt}$ from each $f$, we reduce to the setting where $\min_j f(x_j) = 0$ for all $f \in \F$ and obtain a nearly optimal minimax bound. 
\end{remark}
To the best of our knowledge, \emph{these are the first regret bounds that match the minimax lower bound up to logarithmic factors for a large and general class of function classes.}


 \textbf{Comparison to \cite{amin2011bandits}.} Next, we compare our algorithm in Theorem \ref{thm:loss_minimization_upper_bound_inefficient} to the \emph{computationally inefficient} algorithm from \cite{amin2011bandits} for regret minimization in the noiseless setting. Their regret scales as $O(\Delmax \hd(\F) \ln(|\F|))$, ignoring lower order terms. The following Proposition shows that our cumulative regret bound is never worse than theirs by more than a polylogarithmic factor. Let $y_* = \argmin_{y \in \Y} y$ and $\Delmax = \max_{y  \in \Y} y - y_*$ and $\Delmin = \min_{y \in \Y : y \neq y_*} y - y_*$.
\begin{proposition}\label{prop:haystack_regret_comp}
Let $\F$ such that $\bestext(\F) \geq 1$. 
There exists a universal constant $c > 0$ such that for large enough $T_0 $, if Algorithm  \ref{alg:loss_min} is run for $T_0$ rounds, the regret of Algorithm \ref{alg:loss_min} is bounded above by
\begin{align*}
c\bestext(\F) \ln(|\F|) \Delmax \leq c\hd(\F) \ln(|\F|)^2 \Delmax.
\end{align*}
\end{proposition}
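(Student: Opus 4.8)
The plan is to reduce the desired regret bound to the cumulative-loss guarantee already established in Theorem~\ref{thm:loss_minimization_upper_bound_inefficient} and then translate successively from $\lossext$ to $\bestext$ to $\hd$. First I would invoke the reduction described in the Remark preceding the statement: it suffices to treat the regret-minimization instance as one in which $\min_j f(x_j) = 0$ for every $f \in \F$, since any common baseline $\mathrm{opt}$ can be subtracted off without changing the regret. In this regime the per-round regret $f^*(x_{I_t}) - \min_i f^*(x_i)$ coincides exactly with the instantaneous loss $f^*(x_{I_t})$, so the regret accrued before the best arm is identified equals the cumulative loss of Algorithm~\ref{alg:loss_min}. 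The phrase ``for large enough $T_0$'' is handled by observing that once the algorithm identifies $\argmin_{i\in[n]} f^*(x_i)$ at some round $T$, every later pull is of the best arm and contributes zero regret; hence for any horizon $T_0 \geq T$ the total regret equals the regret during the identification phase, which is precisely the quantity bounded by Theorem~\ref{thm:loss_minimization_upper_bound_inefficient}.

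Second, I would chain three inequalities. Theorem~\ref{thm:loss_minimization_upper_bound_inefficient} gives cumulative loss at most $2(\lossext(\F) + \max_{i\in[n], f\in\F} f(x_i))\log(|\F|)$. Proposition~\ref{prop:relate_regret_td_best_td} bounds $\lossext(\F) \leq \bestext(\F)\max_{y\in\Y} y$, and since every $f(x_i)\in\Y$ we also have $\max_{i,f} f(x_i) \leq \max_{y\in\Y} y$. In the reduced regime $y_* = \min_{y\in\Y} y = 0$, so $\max_{y\in\Y} y = \Delmax$, and the loss is therefore at most $2(\bestext(\F)\Delmax + \Delmax)\log(|\F|) = 2(\bestext(\F)+1)\Delmax\log(|\F|)$. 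Here the hypothesis $\bestext(\F)\geq 1$ is exactly what lets me absorb the additive term, replacing $\bestext(\F)+1$ by $2\bestext(\F)$ and yielding the first claimed bound $c\,\bestext(\F)\ln(|\F|)\Delmax$ with, e.g., $c=4$.

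Finally, the second inequality is immediate from Proposition~\ref{prop:relate_HD_teach_dim}, which asserts $\bestext(\F) \leq c'\hd(\F)\log(|\F|)$ for a universal constant; substituting this and folding $c'$ into the leading constant turns $c\,\bestext(\F)\ln(|\F|)\Delmax$ into $c\,\hd(\F)\ln(|\F|)^2\Delmax$, completing the chain and matching the polylogarithmic comparison to \cite{amin2011bandits}.

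I expect the genuine obstacle to lie in the reduction step rather than the arithmetic. Theorem~\ref{thm:loss_minimization_upper_bound_inefficient} controls \emph{cumulative loss}, and cumulative loss agrees with cumulative regret only when the instance optimum $\min_i f^*(x_i)$ is a constant common to all $f\in\F$, so that it can be subtracted uniformly as in the Remark. One must therefore argue explicitly that the comparison is carried out in this uniform-minimum regime. For a fully general $\F$ with instance-dependent minima the clean identity ``regret $=$ loss'' breaks, and one would instead have to bound the number of suboptimal pulls directly and multiply by $\Delmax$; this is more delicate because the weighted-greedy objective of Algorithm~\ref{alg:loss_min} (which minimizes a loss-to-volume ratio) need not itself cap the total query count, so the crude bound (\#queries)$\times\Delmax$ cannot be invoked. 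Verifying that the operating assumptions place us in the regime where the loss guarantee transfers verbatim to regret is the crux.
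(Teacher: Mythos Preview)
Your reduction via the Remark is the gap, and you correctly flagged it yourself: subtracting a common baseline only works when $\min_j f(x_j)$ is the same for every $f\in\F$, which the proposition does \emph{not} assume. Without that assumption the identity ``regret $=$ loss'' fails, and the loss bound of Theorem~\ref{thm:loss_minimization_upper_bound_inefficient} controls only $\sum_{t\le\bar T} f^*(x_{I_t})$, which can exceed the regret by $\bar T\cdot\min_j f^*(x_j)$. Bounding the loss by $4\,\bestext(\F)\ln(|\F|)\max_{y\in\Y}y$ (your chain through Proposition~\ref{prop:relate_regret_td_best_td}) therefore only gives regret at most $4\,\bestext(\F)\ln(|\F|)(y_*+\Delmax)$, and there is no reason $y_*$ should be dominated by $\Delmax$.

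The paper does not try to place itself in the uniform-minimum regime. Instead it writes the regret as $\sum_{t\le\bar T} f^*(x_{I_t}) - \bar T\min_j f^*(x_j)$ and splits on the size of $\bar T$, the (deterministic) round at which the best arm is identified. If $\bar T\ge 4\,\bestext(\F)\ln(|\F|)$, the loss term is bounded by $4\,\bestext(\F)\ln(|\F|)\max_{y\in\Y}y$ exactly as you argue, and the subtracted term is at least $4\,\bestext(\F)\ln(|\F|)\min_{y\in\Y}y$, so the difference is at most $4\,\bestext(\F)\ln(|\F|)\Delmax$. If $\bar T\le 4\,\bestext(\F)\ln(|\F|)$, the crude per-round bound $f^*(x_{I_t})-\min_j f^*(x_j)\le\Delmax$ suffices directly. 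Your intuition that ``(\#queries)$\times\Delmax$ cannot be invoked'' is thus misplaced: one never needs a global bound on $\bar T$, only the tautological case split, and in the case where $\bar T$ is large the subtracted baseline term is exactly what converts $\max_{y\in\Y}y$ into $\Delmax$. The final passage from $\bestext$ to $\hd$ via Proposition~\ref{prop:relate_HD_teach_dim} is as you wrote.
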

The above Proposition says that if the horizon is long enough, then the cumulative regret in Theorem \ref{thm:loss_minimization_upper_bound_inefficient} is never that much worse than the cumulative regret of the algorithm in \cite{amin2011bandits}.

On the other hand, there exists an instance where the regret of our Algorithm in Theorem \ref{thm:loss_minimization_upper_bound_inefficient} is significantly better than the regret of the algorithm in \cite{amin2011bandits}.

\begin{proposition}\label{prop:haystack_regret_suboptimal}
For any $\xi > 0$, there exists an instance with $\Delmax \geq \Delmin + \xi$ where (ignoring logarithmic factors) the regret minimization algorithm from \cite{amin2011bandits} obtains a regret of at least $\Delmax n/2$ while the guarantee in Theorem \ref{thm:loss_minimization_upper_bound_inefficient} is $n \Delmin + \Delmax$. 
\end{proposition}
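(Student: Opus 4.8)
The plan is to exhibit a single explicit needle-type instance on which the loss-agnostic greedy rule of \cite{amin2011bandits} is forced to spend its queries on high-value arms, while our loss-aware guarantee only charges for low-value arms. Concretely, I would take $n$ hypotheses $f_1,\dots,f_n$ ($n$ even) that are pairwise ``needles'': each $f_k$ has a unique minimizer at a dedicated \emph{cheap} arm $a_k$ with $f_k(a_k)=0$ and $f_k(a_j)=\Delmin$ for $j\neq k$, so that querying $a_j$ tests exactly one hypothesis (it either reveals $f^*=f_j$ or eliminates $f_j$) at cost $\Delmin$. In addition I would introduce $n/2$ \emph{expensive} arms $b_1,\dots,b_{n/2}$, where $b_m$ is tied to the pair $\{f_{2m-1},f_{2m}\}$: set $f_{2m-1}(b_m)$ and $f_{2m}(b_m)$ to two distinct values strictly below $\Delmax$, and $f_k(b_m)=\Delmax$ for every $k$ outside the pair. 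The point of $b_m$ is that observing $\Delmax$ there simultaneously eliminates \emph{both} members of the pair. I would set $\Y=\{0,\Delmin,\dots,\Delmax\}$ with $\Delmax\geq \Delmin+\xi$ and $\Delmin$ taken small, so that $n\Delmin+\Delmax \ll \Delmax n/2$.

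For our side of the bound, I would establish $\lossext(\F)=\Theta(n\Delmin)$ and invoke Theorem~\ref{thm:loss_minimization_upper_bound_inefficient}. The key is that the maximizing labeling $g$ is \emph{not} a member of $\F$: taking $g(a_j)=\Delmin$ on every cheap arm and $g(b_m)=\Delmax$ on every expensive arm, a teacher must drive the consistent set down to at most one needle, and under this $g$ each cheap-arm query eliminates only one needle while each expensive-arm query eliminates only a pair, so the cheapest teaching set is the $n$ cheap arms at total cost $(n-1)\Delmin$ after dropping one element (the expensive alternative costs $\approx (n/2)\Delmax$ and is avoided when $\Delmin$ is small). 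For the matching upper bound I would check that \emph{no} labeling forces more: if $g$ ever takes the value $0$ or an off-value on some cheap arm $a_j$, the teacher queries $a_j$ alone and the consistent set collapses at cost $0$ (after dropping that single element); otherwise $g\equiv\Delmin$ on the cheap arms and the bound above applies. Hence $\lossext(\F)=\Theta(n\Delmin)$, and with $\max_{i,f}f(x_i)=\Delmax$, Theorem~\ref{thm:loss_minimization_upper_bound_inefficient} gives the guarantee $n\Delmin+\Delmax$ up to the $\log|\F|$ factor.

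For the lower bound on the algorithm of \cite{amin2011bandits}, I would compute its greedy selection criterion (worst-case version-space reduction, i.e.\ the $\gamma$ quantity underlying $\hd$) on this instance. A cheap arm $a_j$ has worst-case reduction exactly $1$ (the adversarial observation $\Delmin$ removes only $f_j$, and no other arm is its minimizer), whereas an expensive arm $b_m$ tied to a surviving pair has worst-case reduction exactly $2$ (the adversarial observation $\Delmax$ removes both pair members, and no hypothesis has $b_m$ as its minimizer). Since $2>1$ strictly and no arm ever has worst-case reduction exceeding $2$, the greedy rule always prefers an expensive arm over any cheap arm, and since survivors always consist of whole untested pairs this preference persists throughout the run. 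Fixing $f^*=f_k$ with $k$ in the pair resolved last, each of the (at least $n/2-1$) expensive queries on the other pairs returns the adversarial value $\Delmax=f_k(b_m)$, so the incurred regret is at least $\Delmax(n/2-1)=\Omega(\Delmax n)$, which is $\geq \Delmax n/2$ after padding the class with a few dummy hypotheses; crucially, the loss-agnostic rule has no mechanism to substitute the cheap arms that our loss-aware algorithm would use.

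The main obstacle is the third step: the $\Delmax n/2$ bound is a statement about the \emph{actual} behavior of a specific competitor algorithm, so it requires pinning down exactly which arm the rule of \cite{amin2011bandits} selects (and how it breaks ties) and verifying that the strict domination $2>1$ forces expensive pulls \emph{throughout} the run, not merely at the first step. A secondary but genuine subtlety is the upper bound $\lossext(\F)\leq(n-1)\Delmin$: because $g$ ranges over \emph{all} functions $\X\mapsto\Y$ rather than only members of $\F$, I must confirm that no adversarial labeling—in particular none exploiting the expensive arms' values—can force the teacher off the cheap arms, which reduces to the observation that the cheap arms alone always separate the needles.
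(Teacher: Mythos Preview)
Your proposal is correct and follows essentially the same construction as the paper: a class with cheap arms (each eliminating one hypothesis at regret $\Delmin$) and expensive arms (each eliminating two hypotheses at regret $\Delmax$), so that the loss-agnostic greedy rule of \cite{amin2011bandits} strictly prefers expensive arms throughout while the loss-aware algorithm prefers cheap ones. The only differences are cosmetic---your expensive arms achieve two-removal via inconsistency ($\F_t((x_i,y))$) rather than via being shared minimizers ($\F_t(x_i)$) as in the paper, and you bound $\lossext$ explicitly rather than arguing directly about the trajectory of Algorithm~\ref{alg:loss_min}.
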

Thus, the regret of \cite{amin2011bandits} scales as $\Delmax n$, while the regret of Algorithm \ref{alg:loss_min} scales as $\Delmin n$, and the gap between $\Delmax$ and $\Delmin$ can be made aribtrarily large. \emph{The key advantage of our algorithm over the work of \cite{amin2011bandits} is that their algorithm is an explore-and-then-commit algorithm, ignoring the cost of information gain, whereas our algorithm Algorithm \ref{alg:loss_min} (see Appendix) is cost-aware weighing the tradeoff between information gain and loss incurred.} 


\textbf{Relation to Regret Minimization.} We conclude this section by discussing the relationship between loss minimization and regret minimization.
Our first observation is that any minimax-optimal regret-minimizing algorithm is also a minimax-optimal loss-minimizing algorithm, as shown by the following Proposition.


\begin{proposition}\label{prop:regret_loss_comparison}
Fix $T_0 \in \N$. Let $\text{loss}(\A;f;T_0) = \sum_{t=1 }^{T_0} f(x_{I_t}) $ and $\text{regret}(\A;f;T_0) = \sum_{t=1 }^{T_0} f(x_{I_t}) - \min_{j \in [n]} f(x_j)$ denote the loss and regret incurred by an algorithm $\A$ over $T_0$ rounds. Let $\bar{\A}$ be an algorithm such that $\max_{f \in \F} \text{regret}(\bar{\A};f;T_0) \leq c \min_{\A } \max_{f \in \F} \text{regret}(\A;f;T_0)$. Then,
\begin{align*}
\max_{f \in \F} \text{loss}(\bar{\A};f;T_0) \leq (c+1) \min_{\A}  \max_{f \in \F} \text{loss}(\A;f;T_0) .
\end{align*}
\end{proposition}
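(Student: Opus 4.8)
The plan is to exploit the fact that loss and regret differ only by an additive term that does not depend on the algorithm. Writing $I_1,\ldots,I_{T_0}$ for the arms pulled by $\A$, we have $\text{loss}(\A;f;T_0) = \sum_{t=1}^{T_0} f(x_{I_t})$ and $\text{regret}(\A;f;T_0) = \sum_{t=1}^{T_0} \left(f(x_{I_t}) - \min_{j\in[n]} f(x_j)\right)$, so
\begin{align*}
\text{loss}(\A;f;T_0) = \text{regret}(\A;f;T_0) + T_0 \min_{j \in [n]} f(x_j).
\end{align*}
First I would set $m(f) := T_0 \min_{j \in [n]} f(x_j)$ and record the two structural facts about it: (i) $m(f)$ is \emph{independent of the algorithm} $\A$, and (ii) $m(f) \geq 0$ because, by the standing assumption of this section, $\min_{y \in \Y} y \geq 0$, hence every $f(x_j) \geq 0$.

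Next I would introduce the minimax quantities $R^\ast := \min_{\A} \max_{f \in \F} \text{regret}(\A;f;T_0)$ and $L^\ast := \min_{\A} \max_{f \in \F} \text{loss}(\A;f;T_0)$ and establish two comparisons, both of which say $L^\ast$ dominates the pieces we need. For the first, since $m(f) \geq 0$ the decomposition gives $\text{loss}(\A;f;T_0) \geq \text{regret}(\A;f;T_0)$ pointwise in $f$, so $\max_f \text{loss}(\A;f;T_0) \geq \max_f \text{regret}(\A;f;T_0)$ for every $\A$; taking the minimum over $\A$ yields $L^\ast \geq R^\ast$. For the second, since each pulled arm satisfies $f(x_{I_t}) \geq \min_j f(x_j)$, we have $\text{loss}(\A;f;T_0) \geq m(f)$ pointwise, so $\max_f \text{loss}(\A;f;T_0) \geq \max_f m(f)$ for every $\A$, and minimizing over $\A$ gives $L^\ast \geq \max_{f \in \F} m(f)$.

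Finally I would assemble the bound for $\bar{\A}$. Using the decomposition together with the elementary subadditivity $\max_f(a_f + b_f) \leq \max_f a_f + \max_f b_f$ and the hypothesis $\max_f \text{regret}(\bar{\A};f;T_0) \leq c R^\ast$,
\begin{align*}
\max_{f \in \F} \text{loss}(\bar{\A};f;T_0) \leq \max_{f \in \F} \text{regret}(\bar{\A};f;T_0) + \max_{f \in \F} m(f) \leq c R^\ast + \max_{f \in \F} m(f),
\end{align*}
and then substituting $R^\ast \leq L^\ast$ and $\max_f m(f) \leq L^\ast$ gives $\max_f \text{loss}(\bar{\A};f;T_0) \leq (c+1) L^\ast$, as claimed. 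There is no genuinely hard step here; the only point requiring care is the subadditivity move, since the $f$ maximizing the regret of $\bar{\A}$ need not be the same as the $f$ maximizing $m(f)$. This is precisely why one upper-bounds each term by its own maximum rather than trying to evaluate both at a common worst-case $f$, and why nonnegativity of the losses is essential to both comparisons with $L^\ast$.
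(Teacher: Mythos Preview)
Your proposal is correct and follows essentially the same approach as the paper: decompose loss into regret plus the algorithm-independent term $T_0\min_j f(x_j)$, split the maximum over $f$ via subadditivity, apply the hypothesis on $\bar{\A}$, and then bound each piece by $L^\ast$ using nonnegativity. The paper's proof is terser but the steps and the two key inequalities ($R^\ast \leq L^\ast$ and $\max_f m(f) \leq L^\ast$) are identical.
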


On the other hand, a minimax optimal algorithm for loss minimization can have regret that is arbtrarily worse than the minimax lower bound, implying that regret minimization subsumes loss minimization.

\begin{proposition}\label{prop:regret_not_loss}
For any $\xi > 0$, there exists $\F$ such that the minimax regret is $1$, but a loss minimizing algorithm obtains a regret of $\xi$.
\end{proposition}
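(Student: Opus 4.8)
The plan is to prove the statement by an explicit two-arm, two-function construction followed by a direct computation of both the minimax regret and the behaviour of the loss-minimizing algorithm. The guiding observation is that on any single instance $f$ one has $\sum_{t\le T} f(x_{I_t}) = \big[\sum_{t\le T}(f(x_{I_t})-\min_j f(x_j))\big] + T\cdot\min_j f(x_j)$, i.e.\ loss equals regret plus (number of rounds) times the optimal value. Consequently, if I give the two instances very different optimal values, I can make the \emph{loss}-ordering of the arms opposite to their \emph{regret}-ordering, which is precisely the separation the proposition asserts.

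Concretely, I would fix $\xi \ge 1$, take $\X=\{x_1,x_2\}$ and $\F=\{f_1,f_2\}$ with $f_1(x_1)=\xi+1,\ f_1(x_2)=\xi+2$ and $f_2(x_1)=\xi,\ f_2(x_2)=0$, so that $\Y=\{0,\xi,\xi+1,\xi+2\}$ and $\min_{y\in\Y}y\ge 0$ as required for loss minimization. The best arm is $x_1$ under $f_1$ (optimal value $\xi+1$) and $x_2$ under $f_2$ (optimal value $0$); since the two functions disagree about the best arm, any correct algorithm must query at least once, and because a single query to either arm already separates $f_1$ from $f_2$, the analysis reduces to comparing the two one-shot strategies ``query $x_1$'' and ``query $x_2$''.

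I would then tabulate the four outcomes. Querying $x_1$ yields regret $0$ under $f_1$ and $\xi$ under $f_2$, with losses $\xi+1$ and $\xi$; querying $x_2$ yields regret $1$ under $f_1$ and $0$ under $f_2$, with losses $\xi+2$ and $0$. Hence the worst-case regrets are $\xi$ and $1$ respectively, so the minimax regret equals $\min(\xi,1)=1$, attained by querying $x_2$. On the other hand the worst-case losses are $\xi+1$ and $\xi+2$; since querying both arms only increases the loss, the \emph{unique} loss-minimizing algorithm queries $x_1$, and its worst-case regret is $\xi$, attained on instance $f_2$. This is exactly the claimed gap.

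The one genuinely non-routine step is arranging for the loss-optimal arm to be the regret-bad arm, and the construction makes this work through the differing optimal values: the large optimum $\xi+1$ of $f_1$ inflates the loss of the regret-safe arm $x_2$ on $f_1$ (to $\xi+2$) above the worst-case loss $\xi+1$ of the regret-bad arm $x_1$, flipping the loss preference relative to the regret preference. I would also remark that, since the worst-case regret of any fixed algorithm is at least the minimax regret, the statement is only meaningful for $\xi\ge 1$; in this regime the loss-minimizer's regret $\xi$ exceeds the minimax regret $1$ by an arbitrarily large amount, which is the point of the proposition.
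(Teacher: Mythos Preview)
Your proof is correct and follows essentially the same approach as the paper's: a two-arm, two-function construction followed by a direct tabulation of the worst-case regret and worst-case loss of the two one-shot strategies, showing that the loss-optimal query is the regret-bad one. The paper uses slightly different numbers (namely $f_1(x_1)=\Delta$, $f_1(x_2)=\Delta-1$, $f_2(x_1)=1$, $f_2(x_2)=\Delta/2+1$, with the roles of the two arms swapped relative to yours), but the mechanism and level of argument are identical.
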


As discussed above, \emph{we provide the first upper bounds for loss minimization that match the minimax lower bound up to logarithmic factors, as well as state-of-the-art results for regret minimization with general function classes.}

\section{Experiments}

\begin{figure*}
     \centering
     \hfill
     \begin{minipage}[b]{0.33\textwidth}
         \centering
         \includegraphics[width=\linewidth]{./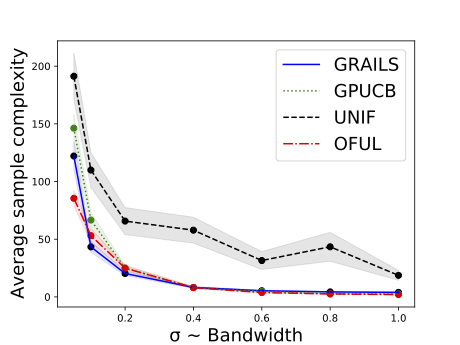}
         \caption{Varying $\sigma$\label{fig:vary_sig}}
     \end{minipage}
     \hfill
     \begin{minipage}[b]{0.335\textwidth}
         \centering
         \includegraphics[width=\linewidth]{./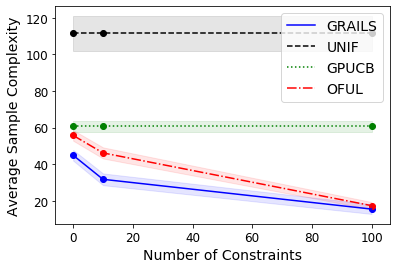}
         \caption{Prior Knowledge}
         \label{fig:prior_knowledge}
     \end{minipage}
          \begin{minipage}[b]{0.32\textwidth}
         \centering
         \includegraphics[width=\linewidth]{./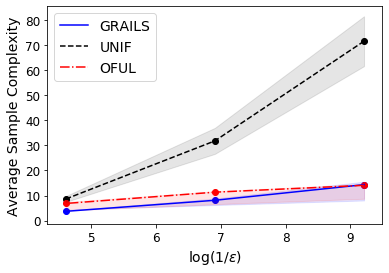}
         \caption{Convex Functions}
         \label{fig:convex_functions}
     \end{minipage}
     \vspace{-2em}
\end{figure*}

In this Section, we present experiments comparing \texttt{GRAILS} to uniform sampling (\texttt{UNIF}), \texttt{GP-UCB} \citep{srinivas2009gaussian}, and \texttt{OFUL} \citep{abbasi2011improved}. We consider a version of \texttt{GRAILS} that is designed for continuous output spaces, and which we present and analyze in the appendix. For \texttt{OFUL}, we construct the lower confidence bound by solving a constrained optimization problem based on the prior feedback and the function class structure. Our algorithm outperforms the other algorithms and demonstrates the ability to seamlessly incorporate prior knowledge to accelerate learning, and works for function classes ranging from kernel methods to convex functions. While \texttt{OFUL} tends to perform only slightly worse than \texttt{GRAILS}, unlike \texttt{GRAILS}, \emph{it does not have general theoretical guarantees} that can handle the incorporation of constraints given by prior knowledge or specially structured spaces like the set of convex functions, and therefore we consider it a strong, if heuristic, baseline. \emph{Indeed, in the appendix, we show that \texttt{OFUL} is minimax suboptimal for some function classes.} Due to space constraints, we provide additional implementation details in the appendix.

\textbf{Varying $\sigma$.} In Figure~\ref{fig:vary_sig}, we plot the average number of samples necessary to achieve a simple regret of less than $0.01$ with the shaded region showing 1 standard error taken over 36 independent trials. 
We generated random 2d functions living in a RBF Reproducing Kernel Hilbert Space (RKHS) with parameter $\sigma$ varying. We evaluate the function at $400$ points taken in a grid of $[0,1]^2$. 
As $\sigma$ decreases and the effective dimension increases, the performance of all methods becomes similar to uniform sampling. Conversely, when $\sigma$ is large and the dimension decreases, the active methods improve markedly upon uniform sampling. In all cases, \texttt{GRAILS} matches or slightly exceeds the performance of the baseline methods. 

\textbf{Kernel Methods with Prior Knowledge:} In many practical settings, prior knowledge is available to practitioners, e.g., of the form $f^*(x) \geq f^*(x^\prime)$ for some pairs $x,x^\prime \in X$. It is straightforward to incorporate into our algorithm any prior knowledge that can be expressed in terms of a polyhedron in the output space $\R^n$. By contrast, while there are a variety of approaches to incorporate constraints into GP regression methods \cite{swiler2020survey}, it is not immediately clear how to use constrained GP models in adaptive sampling methods.
In this experiment, we use $x_1,\ldots, x_n \in [0,6]$ equally spaced with $n = 250$. We use the Gaussian RBF kernel with $\sigma = 0.075$. For each of the 60 trials, we draw a random function from the RKHS. We vary the number of random pairwise constraints given to \texttt{GRAILS} and \texttt{OFUL}. Figure \ref{fig:prior_knowledge} depicts the average number of samples required to achieve a simple regret of $0.005$ and shows that as \texttt{GRAILS} and \texttt{OFUL} obtain more prior knowledge, their performance improves. 

\textbf{Convex Functions:} In this experiment, the algorithms use the function class $\F$ consisting of all convex functions. We generate $300$ points in an equally spaced grid on the interval $[0,1]$. $f^*(x) = 5(x-x_{\min})^2$ where for each of the $30$ trials $x_{\min}$ is drawn uniformly at random from $[0,1]$. We only consider \texttt{GRAILS}, \texttt{OFUL}, and \texttt{UNIF}. Figure~\ref{fig:convex_functions} depicts the average number of samples that each algorithm uses to obtain an $\epsilon$-accurate solution, showing that \texttt{GRAILS} does slightly better than \texttt{OFUL}, while, as expected, the sample complexity of \texttt{UNIF} blows up as $\epsilon$ decreases.

\section{Conclusion}

Our work leaves many open questions. First, while \texttt{GRAILS} is computationally efficient as evidenced by our experiments, it becomes computationally burdensome as $n$ grows very large. Further work is required to scale \texttt{GRAILS} to the regime where $n$ is extremely large (e.g., $n\approx 10,000$). Second, we have designed computationally efficient algorithms that match the minimax lower bound up to logarithmic factors in the \emph{noiseless setting} and it remains an open question how to achieve near minimax optimality in the noisy setting. Finally, it is an important future direction to develop computationally efficient and near minimax optimal algorithms in the continuous output setting.

\newpage

\begin{ack}
Kevin Jamieson was supported in part by the NSF TRIPODS II grant DMS 2023166. This work was partially supported by AFOSR grant FA9550-18-1-0166.
\end{ack}

\bibliographystyle{unsrt}
\bibliography{refs}

\newpage

\newpage

\appendix

\tableofcontents

\newpage

\section{Suboptimality of Agnostic Algorithms Proofs}

\subsection{Active Classification}

\begin{proof}[Proof of Proposition \ref{thm:agnostic_subopt}]
Let $m \in \N$ and $\ell = m+1$. Note $n = m \ell+m$. Define for all $i \in [m]$
\begin{align*}
f_i(x_j) & = \begin{cases}
1 & j \in  \{\ell (i-1) + 1, \ldots, i\ell\} \cup \{m \ell +1, \ldots, m \ell+i\} \\
-1 & \text{o/w}
\end{cases}
\end{align*}
Further, define $f_0(x_i) = -1$ for all $i \in [n]$. Let $\F = \{f_0,f_1, \ldots, f_m\}$. This instance essentially couples disjoint sets and thresholds. 

Suppose that $y_i = f_0(x_i)$ for all $i$, so that $f_0 = f^*$ (and note that the realizability assumption holds). 

Note that if an algorithm assumes realizability, then it suffices to perform binary search on $x_{ml+1},\ldots, x_{m(l+1)}$ (which is a thresholds instance) and this procedure terminates and outputs $f_0$ after $O(\log(m))$ queries.

Next, we provide a lower bound for any $\delta$-agnostic algorithm wrt active classification on this instance. By corollary \ref{cor:noiseless_classification_lower_bound}, any $\delta$-agnostic algorithm wrt active classification requires
\begin{align*}
\frac{\log(1/\delta)}{\log(n/\delta)} \phi^*
\end{align*}
samples in expectation, so it suffices to lower bound $\phi^*$. Define the alternative instances for all $i \in [m]$.
\begin{align*}
\bar{y}_j^{(i)} = \begin{cases}
1 & \text{ if } j \in  \{\ell (i-1) + 1, \ldots, i\ell\} \\
y_j & \text{ if } \text{ otherwise}
\end{cases}
\end{align*}
Note that since $\ell = m+1$, we have that 
\begin{align*}
\sum_{i=1}^n \one \{f_j(x_i) \neq \bar{y}_i^{(j)}\} = j < m+1 = \sum_{i=1}^n \one \{f_0(x_i) \neq \bar{y}_i^{(j)}\} 
\end{align*}
and thus $\bar{y}^{(j)}$ is an alternative instance. Therefore, we have that 
\begin{align*}
\phi^* & = \min_{\lambda} \max_{\tilde{y} \in \{-1,1\}^n: \argmin_{f \in \F} \sum_{i=1}^n \one\{\tilde{y}_i \neq f(x_i)} \frac{1}{\sum_{i=1}^n \lambda_i \one \{y_i \neq \tilde{y}_i \}}  \\
& \geq \min_{\lambda} \max_{ i \in [m]} \frac{1}{\sum_{j \in \{\ell (i-1) + 1, \ldots, i\ell\}} \lambda_j } \\ 
\end{align*}
The minimizing $\lambda$ puts $\frac{1}{\sum_{j \in \{\ell (i-1) + 1, \ldots, i\ell\}} \lambda_j  } = C$ for all $i \in [m]$ for some constant $C \in \R$. We also have that
\begin{align*}
1 = \sum_{i=1}^{m \ell} \lambda_i = \sum_{i=1}^m \sum_{j \in \{\ell (i-1) + 1, \ldots, i\ell\}} \lambda_j = \sum_{i=1}^m \frac{1}{C} = \frac{m }{C}
\end{align*}
Therefore, we have shown that $\phi^* \geq m = \Omega(\sqrt{n})$. This completes the proof.

\end{proof}

Recall that we say an algorithm $\A$ is $\delta$-agnostic wrt $\F$ if for any $y \in \Y^n$, $\A$ outputs $\argmin_{f \in \F} \sum_{i=1}^n \one\{y_i \neq f(x_i)\}$ with probability at least $1-\delta$.

We define the \emph{Gaussian Active Classification problem} as follows: when arm $i$ is pulled, $y_i + \eta$ where $\eta \sim N(0,1)$ is observed. The goal is to identify $\argmin_{f \in \F} \sum_{i=1}^n \one\{y_i \neq f(x_i)\}$  We call a pair $(y, \F)$ where $y \in \{-1,1\}^n$ an \emph{instance}.
%

\begin{proposition}\label{prop:reduction_lower_bound}
Let $\delta \in (0,\frac{1}{n})$. Fix $\Y = \{-1,1\}$, $\F \subset \Y^\X$, and $y \in \{-1,1\}$. If there exists a $\delta$-agnostic algorithm wrt $\F$ for noiseless active classification that achieves an expected sample complexity of $\tau$ on the instance $(y,\F)$, then there exists a $2 \delta$-agnostic algorithm wrt $\F$ for  Gaussian active classification that achieves an expected sample complexity of $c\log(n/\delta) \tau $ samples on the instance $(y, \F)$, where $c >0$ is a universal constant.
\end{proposition}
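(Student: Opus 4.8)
The plan is to simulate the noiseless algorithm inside a Gaussian one by replacing each label query with a short batch of repeated Gaussian pulls followed by a sign (majority) vote, and then to argue via a coupling that, on a high-probability event, this simulation reproduces the noiseless execution exactly. Concretely, I would build the Gaussian algorithm $\A'$ as follows: run a copy of the given $\delta$-agnostic noiseless algorithm $\A$, and the first time $\A$ queries an arm $i$, pull arm $i$ exactly $m := \ceil{2\log(n/\delta)}$ times in the Gaussian model, form the empirical mean of the $m$ observations, and feed $\hat y_i := \sign(\text{mean})$ back to $\A$ as its ``noiseless'' observation; cache $\hat y_i$ so that any later query to $i$ returns the same value. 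Caching is legitimate because in the noiseless problem re-querying an arm returns the same deterministic label, so the cached simulation is faithful to how $\A$ would behave against true labels.

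Next comes the concentration step. The $m$-fold average of i.i.d. $N(y_i,1)$ samples is distributed as $N(y_i,1/m)$, and since $|y_i|=1$ the sign is incorrect only if this average deviates past $0$; a standard Gaussian tail bound then gives $\P(\hat y_i \neq y_i) \le e^{-m/2}$. To keep the coupling clean I would define, for every arm $i\in[n]$, the sign estimate obtained from its first $m$ samples, and set $E = \{\hat y_i = y_i \ \text{for all}\ i \in [n]\}$; crucially, $E$ is a function of the Gaussian noise arrays alone and is therefore independent of $\A$'s internal randomness. A union bound over the $n$ arms yields $\P(E^c) \le n e^{-m/2} \le \delta$ by the choice of $m$.

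For correctness, observe that on the event $E$ every label fed back to $\A$ equals the true $y_i$, so $\A'$ executes exactly as $\A$ would on the noiseless instance $(y,\F)$, producing the same output. Coupling the two runs through the same internal randomness $R$ and using $E \perp R$, the failure probability of $\A'$ is at most $\P(E^c) + \P(\A\ \text{fails in the noiseless problem}) \le \delta + \delta = 2\delta$, so $\A'$ is $2\delta$-agnostic. For the sample complexity, $\A'$ spends $m$ pulls per \emph{distinct} arm queried; on $E$ the distinct arms coincide with those of the noiseless run, whose expected count is at most $\tau$, and on $E^c$ the count is trivially at most $n$. Hence the expected number of Gaussian pulls is at most $m\,(\tau + n\,\P(E^c)) \le m\,(\tau + n\delta)$, and since $\delta < 1/n$ gives $n\delta < 1 \le \tau$ (we may assume $\tau \ge 1$, as otherwise no queries are needed and the claim is trivial), this is $O(\log(n/\delta)\,\tau)$, matching the stated bound.

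The main obstacle is getting the coupling right so that \emph{both} the output and the number of distinct queries of the simulation agree with the noiseless algorithm. This is what forces me to define $E$ on the raw sample arrays (making it independent of $\A$'s randomness) rather than on the adaptively revealed observations, and it is also why I must separately control the otherwise-unbounded behavior on $E^c$, which is handled precisely by the hypothesis $\delta < 1/n$ so that the $m\,n\delta$ contribution is absorbed into the $O(\log(n/\delta)\,\tau)$ budget.
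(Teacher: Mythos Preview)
Your proposal is correct and follows essentially the same route as the paper: simulate $\A$ by replacing each noiseless query with $O(\log(n/\delta))$ Gaussian pulls and a sign vote, union-bound over the $n$ arms to get a good event of probability at least $1-\delta$, and decompose the expected sample count over this event and its complement (using $\delta<1/n$ to absorb the bad-event contribution). Your explicit handling of the coupling---defining $E$ on the raw noise arrays so that it is independent of $\A$'s internal randomness, and caching estimates so re-queries are consistent---is slightly more careful than the paper's exposition, but the argument is the same.
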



\begin{proof}
We perform a reduction by constructing a new algorithm $\A^\prime$ for the Gaussian active classification problem instantiated by $\F$. Define $\A^\prime$ as follows: at round $t$, pull the arm $I_t$ that $\A$ would choose to pull in the noiseless active classification problem $O(\log(n/\delta) )$ times. If the resulting mean is positive, then tell $\A$ that $\widehat{y}_{I_t} = 1$ otherwise tell $\A$ that $\widehat{y}_{I_t} = -1$. If the algorithm $\A$ terminates and outputs an $f \in \F$, then output $f$ and terminate.

Define $f^* := \argmin_{f \in \F} \sum_{i=1}^n \one\{y_i \neq f(x_i)\}$ and let $\widehat{f}$ be the classifier returned by $\A^\prime$. Note that
\begin{align*}
\P(  \widehat{f} \neq f^* ) & \leq \P(\{\exists i \in [n] : \widehat{y}_i \neq y_i \} \cup \widehat{f} \neq f^*) \\
& \leq 2 \delta
\end{align*}
using standard subGaussian bounds and the assumption that $\A$ is $\delta$-agnostic. Thus, $\A^\prime$ is $2 \delta$-agnostic algorithm wrt $\F$ for Gaussian active classification. 

Let $T$ denote the (random) number of queries taken by $\A^\prime$ on the instance $(y, \F)$. Define the event $\Sigma = \{\widehat{y}_i \neq y_i \forall i \in [n]\}$. We have that
\begin{align*}
\E[T] & = \E[T|\Sigma]\P(\Sigma) + \E[T|\Sigma^c]\P(\Sigma^c) \leq \tau  \log(n/\delta) + n\log(n/\delta) \frac{1}{n} \leq 2 \tau \log(n/\delta)
\end{align*}

\end{proof}

\begin{corollary}\label{cor:noiseless_classification_lower_bound}
Let $y \in \{-1,1\}^n$, and $\F \subset \{-1,1\}^\X$, and define $f^* := \argmin_{f \in \F} \sum_{i=1}^n \one\{y_i \neq f(x_i)\}$. Let $\A$ be a $\delta$-agnostic algorithm wrt $\F$ on the active classification problem. Suppose $\A$ an expected sample complexity of $\E[\tau_y]$. Then,
\begin{align*}
\E[\tau_y] \geq c\frac{\log(1/4.8 \delta)}{\log(n/\delta)} \phi^*
\end{align*}
where 
\begin{align*}
\phi^* =\min_{\lambda} \max_{\tilde{y} \in \{-1,1\}^n: \argmin_{f \in \F} \sum_{i=1}^n \one\{\tilde{y}_i \neq f(x_i)\} \neq f^*} \frac{1}{\sum_{i=1}^n \lambda_i \one \{y_i \neq \tilde{y}_i \}} 
\end{align*}
\end{corollary}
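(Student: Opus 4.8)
The plan is to reduce the noiseless lower bound to a lower bound for the Gaussian active classification problem and then invoke a standard change-of-measure (transportation) argument from the bandit literature. Concretely, suppose $\A$ is $\delta$-agnostic for noiseless active classification with expected sample complexity $\E[\tau_y]$ on the instance $(y,\F)$. By Proposition \ref{prop:reduction_lower_bound}, this yields a $2\delta$-agnostic algorithm $\A'$ for Gaussian active classification whose expected sample complexity on $(y,\F)$ is at most $c\log(n/\delta)\E[\tau_y]$. It therefore suffices to show that any $2\delta$-agnostic Gaussian algorithm must use at least $\tfrac{1}{2}\log(1/(4.8\delta))\,\phi^*$ samples in expectation on $(y,\F)$; dividing by $c\log(n/\delta)$ then gives the claim.

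For the Gaussian lower bound, first I would fix an arbitrary alternative labeling $\tilde y \in \{-1,1\}^n$ feasible in the definition of $\phi^*$, i.e. one with $\argmin_{f\in\F}\sum_i \one\{\tilde y_i \neq f(x_i)\} \neq f^*$. Consider the two Gaussian instances $\nu$ (means $y_i$) and $\nu'$ (means $\tilde y_i$), each arm carrying unit-variance Gaussian noise, together with the event $\calE = \{\A' \text{ outputs } f^*\}$. Since $\A'$ is $2\delta$-agnostic, $\P_\nu(\calE) \geq 1-2\delta$ while $\P_{\nu'}(\calE) \leq 2\delta$ because $f^*$ is not the optimal classifier under $\tilde y$. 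The transportation lemma of \cite{garivier2016optimal} gives $\sum_{i=1}^n \E_\nu[N_i]\,\mathrm{KL}(\calN(y_i,1),\calN(\tilde y_i,1)) \geq \mathrm{kl}(\P_\nu(\calE),\P_{\nu'}(\calE))$, where $N_i$ is the number of pulls of arm $i$. Using $\mathrm{KL}(\calN(y_i,1),\calN(\tilde y_i,1)) = \tfrac12(y_i-\tilde y_i)^2 = 2\,\one\{y_i\neq \tilde y_i\}$, the monotonicity of binary KL, and the elementary bound $\mathrm{kl}(1-2\delta,2\delta)\geq \log(1/(4.8\delta))$, this reduces to $2\sum_i \E_\nu[N_i]\,\one\{y_i\neq\tilde y_i\} \geq \log(1/(4.8\delta))$.

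To turn this per-alternative inequality into the min-max quantity $\phi^*$, I would write $\tau_G := \sum_i \E_\nu[N_i]$ for the expected number of Gaussian queries and set $\lambda_i := \E_\nu[N_i]/\tau_G$, which is a probability vector and hence feasible for the outer minimization defining $\phi^*$. The display above rearranges to $\tau_G \geq \tfrac{\log(1/(4.8\delta))}{2}\cdot \tfrac{1}{\sum_i \lambda_i \one\{y_i \neq \tilde y_i\}}$; since $\tilde y$ was arbitrary I may maximize over alternatives, and then $\max_{\tilde y}\tfrac{1}{\sum_i \lambda_i \one\{y_i\neq \tilde y_i\}} \geq \min_\lambda \max_{\tilde y}(\cdots) = \phi^*$. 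Hence $\tau_G \geq \tfrac12\log(1/(4.8\delta))\,\phi^*$, and combining with $\tau_G \leq c\log(n/\delta)\E[\tau_y]$ from the reduction completes the argument.

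The hard part is the change-of-measure setup rather than any computation: one must choose the alternative instance $\nu'$ and the event $\calE$ so that the two probabilities straddle $1/2$ in the right direction, which is guaranteed by applying the agnostic property to both $y$ and $\tilde y$, and one must recognize that the algorithm's own expected allocation $\lambda$ is exactly a feasible point for the inner $\min_\lambda$, which is what lets the instance-specific transportation bound be lower bounded by the algorithm-independent $\phi^*$. The factor $4.8\delta$ is simply $2.4$ times the $2\delta$ failure probability produced by the reduction.
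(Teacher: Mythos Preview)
Your proposal is correct and follows essentially the same approach as the paper: reduce to the Gaussian problem via Proposition~\ref{prop:reduction_lower_bound}, apply the transportation lemma of \cite{garivier2016optimal} to each alternative $\tilde y$, compute the Gaussian KL as $2\,\one\{y_i\neq\tilde y_i\}$, normalize the expected pull counts into a feasible $\lambda$, and then lower bound the resulting max over alternatives by $\phi^*$. The only cosmetic difference is that you spell out the event $\calE$ and the binary-KL bound $\mathrm{kl}(1-2\delta,2\delta)\geq\log(1/(4.8\delta))$ explicitly, whereas the paper invokes the transportation lemma directly in the form $\sum_i \E[T_i]\,\kl(\nu_{y_i},\nu_{\tilde y_i})\geq \log(1/4.8\delta)$.
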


\begin{proof}
Since $\A$ is $\delta$-agnostic wrt $\F$ for active classification, by Proposition \ref{prop:reduction_lower_bound}, we have that there exists a $2\delta$-agnostic algorithm $\A^\prime$ wrt $\F$ for Gaussian active classification and that it has expected sample complexity of at most $c\log(n/\delta) \E[\tau_\theta]$. Define $\nu_{\bar{y}_i} = N(\bar{y}_i, 1)$. Using the Transportation Lemma from \cite{kaufmann2016complexity}, since $\A^\prime$ is $2 \delta$-agnostic wrt $\F$, we have that for any $\tilde{y} \in \{-1,1\}^n$ such that  $\argmin_{f \in \F} \sum_{i=1}^n \one\{\tilde{y}_i \neq f(x_i)\} \neq f^*$,
\begin{align*}
\sum_{i=1}^n \E[T_i] \kl(\nu_{y_i} , \nu_{\tilde{y}_i} ) \geq \log(\frac{1}{4.8 \delta})
\end{align*}
where $T_i$ denotes the number of times that arm $i$ is pulled. Defining $\lambda_i =\frac{ \E[T_i]}{\sum_j \E[T_j]}$ and noting that 
\begin{align*}
\kl(\nu_{y_i} , \nu_{\tilde{y}_i} ) = 
\begin{cases}
2 & y_i = y_i \\
0 & \text{otherwise}
\end{cases}
\end{align*}
this implies that
\begin{align*}
\sum_{i=1}^n  \E[T_i] \geq \frac{1}{2}\log(1/4.8 \delta)\frac{1}{\sum_{i=1}^n \lambda_i \one \{y_i \neq \tilde{y}_i \}}
\end{align*}

Maximizing over $\tilde{y} \in \{-1,1\}^n$ such that $\argmin_{f \in \F} \sum_{i=1}^n \one\{\tilde{y}_i \neq f(x_i)\} \neq f^*$ and then minimizing over $\lambda \in \simp_n$, we have that
\begin{align*}
c\E[\tau_y] & \log(n/\delta) \\
& \geq \sum_{i=1}^n  \E[T_i] \\
& \geq \frac{1}{2} \log(1/4.8 \delta) \min_{\lambda} \max_{\tilde{y} \in \{-1,1\}^n: \argmin_{f \in \F} \sum_{i=1}^n \one\{\tilde{y}_i \neq f(x_i)\} \neq f^*} \frac{1}{\sum_{i=1}^n \lambda_i \one \{y_i \neq \tilde{y}_i \}} .
\end{align*}
Rearranging the above bound give the result.
\end{proof}



\subsection{Best Arm Identification}

Recall that for $k \in \N$ and $\delta \in (0,1)$, we say an algorithm $\A$ is $(\delta,k)$-agnostic if for any $y \in \Y^n$ such that $\min_{f \in \F} \sum_{i=1}^n \one\{f(x_i) \neq y_i\} \leq k$, $\A$ identifies $\argmin_{i \in [n]} y_i$ with probability at least $1-\delta$.

We define the \emph{Gaussian Best Arm Identification problem} as follows: when arm $i$ is pulled, $y_i + \eta$ where $\eta \sim N(0,1)$ is observed. The goal is to identify $\argmin_{i \in [n]} y_i$. We call a pair $(y, \F)$ where $y \in \{-1,1\}^n$ an \emph{instance}.
%

\begin{proposition}\label{prop:reduction_lower_bound_best_arm}
Let $\delta \in (0,\frac{1}{n})$. Fix $\Y = \{-1,1\}$, $\F \subset \Y^\X$, and $y \in \{-1,1\}$. Let $\A$ be a $(\delta,k)$-agnostic algorithm wrt $\F$ for noiseless best arm identification that achieves an expected sample complexity of $\tau$ on the instance $(y,\F)$ and has a probability $p_i$ of sampling $i \in [n]$. Then there exists a $(2 \delta,k)$-agnostic algorithm wrt $\F$ for  Gaussian best arm identification that achieves an expected sample complexity of $c\log(n/\delta) \tau $ samples on the instance $(y, \F)$, where $c >0$ is a universal constant, and has a probability $p_i$ of sampling $i \in [n]$.
\end{proposition}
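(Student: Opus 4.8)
The plan is to follow the reduction used in the proof of Proposition~\ref{prop:reduction_lower_bound} almost verbatim, replacing the classification target $f^*$ by the best arm $i_\ast = \argmin_{i\in[n]} y_i$, and additionally carrying along the misspecification budget $k$ and the per-arm sampling probabilities $p_i$. First I would construct the Gaussian algorithm $\A'$ by simulating $\A$: whenever $\A$ wishes to query an arm $I_t$, $\A'$ pulls $x_{I_t}$ a total of $m = c'\log(n/\delta)$ times, thresholds the empirical mean at $0$ to produce a label $\widehat y_{I_t}\in\{-1,1\}$, feeds $\widehat y_{I_t}$ back to $\A$ as if it were the noiseless response, and finally reports whatever arm $\A$ reports. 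Since $\A'$ selects exactly the arms $\A$ would select, each merely replayed $m$ times, the randomized selection rule is inherited unchanged, so $\A'$ samples arm $i$ with the same probability $p_i$. This bookkeeping is the only genuinely new ingredient relative to Proposition~\ref{prop:reduction_lower_bound}, and it is immediate from the construction.

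Next I would verify the $(2\delta,k)$-agnostic guarantee. Fix any Gaussian instance $(y,\F)$ with $\min_{f\in\F}\sum_{i=1}^n\one\{f(x_i)\neq y_i\}\le k$ and let $\Sigma$ be the event that every recovered label is correct. Each label is the sign of an average of $m$ unit-variance Gaussians with mean $y_i\in\{-1,1\}$, so a single mislabel has probability at most $2\exp(-m/2)$; choosing the constant $c'$ so that this is at most $\delta/n$ and union bounding over the at most $n$ arms gives $\P(\Sigma^c)\le\delta$. On $\Sigma$ the labels fed to $\A$ are exactly $y$, which still obeys the $k$-budget, so the $(\delta,k)$-agnostic property of $\A$ yields $i_\ast$ with probability at least $1-\delta$. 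Combining,
\begin{align*}
\P(\widehat i\neq i_\ast)\le \P(\Sigma^c) + \P\big(\widehat i\neq i_\ast\mid\Sigma\big)\le \delta+\delta = 2\delta,
\end{align*}
so $\A'$ is $(2\delta,k)$-agnostic.

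Finally, for the sample complexity on the fixed instance $(y,\F)$, the natural accounting is that $\A'$ pulls arm $i$ exactly $m$ times for each query $\A$ makes to $i$, so $\E[T_i^{\A'}] = m\,\E[T_i^{\A}]$ and the sampling probabilities $p_i$ are preserved exactly. To bound the total count I would reuse the conditional decomposition: on $\Sigma$ the algorithm faithfully simulates $\A$ on its valid instance $(y,\F)$, which terminates in an expected $\tau$ rounds at cost $m$ each, so $\E[T^{\A'}\mid\Sigma]\le \tau m$; on $\Sigma^c$, whose probability is at most $\delta\le 1/n$, I would cap the run at $n m$ total pulls, giving $\E[T^{\A'}\mid\Sigma^c]\le n m$. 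Hence
\begin{align*}
\E[T^{\A'}] = \E[T^{\A'}\mid\Sigma]\P(\Sigma)+\E[T^{\A'}\mid\Sigma^c]\P(\Sigma^c)\le \tau m + n m\cdot\tfrac1n = 2\tau m = c\log(n/\delta)\,\tau.
\end{align*}

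There is no serious obstacle here, since the statement is a near-exact analogue of Proposition~\ref{prop:reduction_lower_bound}; the only points deserving care are checking that the $k$-misspecification budget survives the reduction (it does, because on the good event $\Sigma$ the relabelled instance coincides with the original $y$), and reconciling the cap used to control the $\Sigma^c$ term with the per-arm proportions. The latter is harmless: the cap only alters behavior on an event of probability at most $1/n$, and so does not disturb the claimed sampling probabilities $p_i$, which are determined by $\A$'s selection rule on the dominant event.
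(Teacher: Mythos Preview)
Your proposal is correct and follows exactly the approach the paper takes: the paper's own proof simply says ``identical to the proof of Proposition~\ref{prop:reduction_lower_bound}'' and adds the one-line observation that the sampling probabilities $p_i$ are preserved by the construction of $\A'$. Your write-up is considerably more explicit than the paper's, and your remark that the cap does not disturb the $p_i$ is overly cautious (since $\A'$ queries arm $i$ iff $\A$ does, the equality $p_i^{\A'}=p_i^{\A}$ holds exactly, on every sample path, regardless of $\Sigma$), but nothing is wrong.
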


\begin{proof}
The proof is identical to the proof of Proposition \ref{prop:reduction_lower_bound}. The probability $p_i$ is the same for Gaussian best arm identification setting by the construction of $\A^\prime$ in the proof of Proposition \ref{prop:reduction_lower_bound}.
\end{proof}

\begin{proof}[Proof of Proposition \ref{thm:best_arm_agnostic_suboptimal}]
Let $\Y = \{w_1,\ldots, w_L\}$ with $w_1 < w_2 < \ldots < w_L$.  
Let $\F = \{f^*\}$ with $f^* : \X \mapsto \{w_1,\ldots, w_L\}$ such that for all $i \in [n]$ $f^*(x_i) \in \{w_2,\ldots, w_{L-1}\}$ and $\argmin_{i \in [n]} f^*(x_i) = \{i_* \}$. 

\textbf{Step 0: there exists an arm that is not queried with constant probability.} Let $\theta \in \{w_1,\ldots, w_L\}^n$ such that $\theta_i = f^*(x_i)$ for all $i \in [n]$. Let $T_i$ denote the number of times that arm $i$ is queried (a random variable). Towards a contradiction, suppose that $\E_\theta[\sum_{i=1}^n T_i] < n/8$. Since
\begin{align*}
n/8 > \E_{\theta}[\sum_{i=1}^n T_i] \geq \min_{i \in n} n \E_\theta [T_i].
\end{align*}
Thus, there exists $i_0 \in [n]$ such that $1/8 > \E_\theta [T_{i_0}] \geq \P_\theta (T_{i_0} > 0)$.

\textbf{Step 1: Reduction} 

By Proposition \ref{prop:reduction_lower_bound_best_arm}, since $\delta < 1/n$, there exists a $(2 \delta,1)$-agnostic algorithm $\A^\prime$ wrt $\F$ for  Gaussian best arm identification that achieves an expected sample complexity of $c\log(n/\delta) \tau $ samples on the instance $(y, \F)$ and the probability of querying $i_0$ is at most $1/8$.

\textbf{Step 2.1: Constructing a bad alternative} Let $\widehat{i}$ denote the arm returned at the end of the game. Suppose $i_0 \not \in \argmin_{j \in [n]} f^*(x_j)$. Then,  since  $\delta < \frac{1}{40}$ and $\A^\prime$ is $(2\delta,1)$-agnostic wrt $\F$ for Gaussian best arm identification
\begin{align*}
\P_\theta(\widehat{i} \neq i_0 ) \geq 19/20.
\end{align*}

Define $B = \{T_{i_0} = 0\} \cap \{\widehat{i} \neq i_0 \}$. Then,
\begin{align*}
\P_{\theta}( B^c) \leq \P_{\theta}(T_{i_0} > 0) + \P_{\theta}(\widehat{i} = i_0) \leq 1/8 + 1/20  \leq 1/5.
\end{align*}
Define $\tilde{\theta} \in \{w_1,\ldots, w_L\}^n$ by
\begin{align*}
\tilde{\theta}_j & = \begin{cases}
f^*(x_j) & j \neq i_0 \\
w_1 & j = i_0
\end{cases}.
\end{align*}
Note that since by assumption $f^*(x_i) \in \{w_2,\ldots, w_{L-1}\}$ for all $i \in [n]$, under $\tilde{\theta}$, $i_0$ is the unique minimizer. 

Define
\begin{align*}
\widehat{\text{kl}}_{i,T_i} = \sum_{s=1}^{T_i} \log(f_{\theta}(Z_s)/f_{\tilde{\theta}}(Z_s))
\end{align*}
where  $Z_s$ is the observation on the $s$th pull of arm $i$, $f_{\bar{\theta}}$ denotes the density of the distribution associated with arm $i$ under $\bar{\theta}$, specifically $N(\bar{\theta}_i,1)$. Then, by the change of measure identity (Lemma 18) from \citep{kaufmann2016complexity},
\begin{align*}
 \P_{\tilde{\theta}}(\widehat{i} \neq i_0) &\geq \P_{\tilde{\theta}}(B) \\
 &= \E_\theta[\one\{B\} \exp(-T_i \widehat{\text{kl}}_{i,T_i})] \\
 & = \P_\theta(B) \\
 & \geq \frac{4}{5}.
\end{align*}
where we used the fact that the only difference between problem $\theta$ and problem $\tilde{\theta}$ is the mean of $i$th arm and on the event $A$, $T_i = 0$. But, this is a contradiction since under $\tilde{\theta}$, $i_0$ is the minimizer, $\sum_{i=1}^n \one\{\theta_i \neq \tilde{\theta}_i\} = 1$, and the algorithm $\A$ is $(2 \delta,1)$-agnostic with $\delta \in (0,1/20)$. 

\textbf{Step 2.2: } 


Suppose that $i_0 = i_*$. Define $\bar{\theta} \in \{w_1,\ldots, w_L\}^n$ by
\begin{align*}
\bar{\theta}_j & = \begin{cases}
f^*(x_j) & j \neq i_* \\
w_L & j = i_*
\end{cases}.
\end{align*}
Define the events $\Sigma_1 = \{T_{i_*} = 0\}, \Sigma_2 = \{i_* = \widehat{i}\}, \Sigma = \Sigma_1 \cap \Sigma_2$. 
Note that
\begin{align*}
\P_\theta(\Sigma^c) & \leq \P_\theta(\Sigma^c_1) + \P_\theta(\Sigma^c_2) \\
& = \P_\theta (T_{i_*} > 0) + \P_\theta (\widehat{i}_* \neq \widehat{i})\\
& \leq 1/8 + 2\delta \\
& \leq 1/5
\end{align*}
This implies by a similar argument to the one made in Step 2.1 that
\begin{align*}
\P_{\bar{\theta}}(\Sigma_2) &  \geq \P_{\bar{\theta}}(\Sigma)
= \E_\theta[\one\{\Sigma\}\exp(-T_{i_*} \widehat{\text{kl}}_{i,T_{i_*}})] \\ 
& = \P_{\theta}(\Sigma) \\
& \geq 4/5.
\end{align*}
But, this contradicts the assumption that $\A$ is $(2\delta,1)$-agnostic since $\sum_{i=1}^n \one\{\theta_i \neq \bar{\theta}_i\} = 1$ and $i_*$ is suboptimal in the instance given by $\bar{\theta}$  and $\A$ therefore makes a mistake with probability at least $4/5 > \delta$, a contradiction.


\end{proof}

\subsection{Regret Minimization}

\begin{proof}[Proof of Proposition~\ref{thm:regret_agnostic}]

Consider an agnostic algorithm $\A$ for noiseless regret minimization. We assume there exist $n$ arms $x_1, \cdots, x_n =: \X$ and an unknown true function $f^\ast$ in a class $\F: \X\rightarrow \R$. In the noiseless regime, we assume that pulling $x_i$ returns $f^\ast(x_i)$. In the noisy regime, we assume that pulling $x_i$ returns $f^\ast(x_i) + \eta$ where $\eta$ is a $1$ sub-Gaussian random variable.

In the noiseless regime, 
we assume that $\A$ achieves a bounded regret $R_{\A}(f^\ast, \X) < \infty$ independent of the horizon. This is a mild assumption as noise is not present in the responses. Indeed, the simple algorithm that samples each $x_i$ once and then chooses the $x^\ast$ achieving minimal loss $f(x)$ thereafter has bounded regret. Additionally, we may assume without loss of generality that $\A$ never re-samples a suboptimal arm as its value is known. 

\textbf{Step 1}: Reduction from regret minimization in the noiseless regime to the noisy regime. 

By \cite{lattimore2017end}, an algorithm $\A'$ for the noisy regime is said to be \emph{consistent} if its regret at time $T$, $R^{\A'}_{f^\ast}(T)$ obeys
\begin{align*}
    \limsup_{T\rightarrow\infty} \frac{\log(R^{\A'}_{f^\ast}(T))}{\log(T)} \leq 0.
\end{align*}

We say that $f^\ast$ is $\epsilon$-separated on $x_1, \cdots, x_n$ if $\min_{i,j: f^\ast(x_i) \neq f^\ast(x_j)} |f^\ast(x_i) - f^\ast(x_j)| \geq \epsilon$. Next, we show how to use algorithm $\A$ in the noiseless regime to form a consistent algorithm $\A'$ for $1$-separated instances. For simplicity, we assume that $f^\ast$ takes values over the integers. 
Let $T > 0$ denote the time horizon and assume that $\max |f(x_i) - f(x_j)| \leq B$.
At each time $t$, $\A'$
\begin{enumerate}
    \item queries $\A$ for an arm $x_i$ to sample next
    \item samples $x_i$ $c\log(nT)$ times for sufficiently large constant $c$.
    \item Average the losses and round the number to the nearest integer.
    \item Pass the rounded average to $\A$. 
\end{enumerate}
Next, we argue that $\A'$ is consistent. For sufficiently large $c$ and a union bound over the $n$ arms, with probability $1 - 1/T$, we have that $f^\ast(x_i) = R\left(\frac{1}{c\log(nT)}\sum_{j=1}^{c\log(nT)}f(x_i) +\eta_j\right)$ where $\eta_j$ is the noise on the $j^\text{th}$ sample of $x_i$ and $R(\cdot)$ is a rounding function that rounds to the nearest integer value. In this case, since $\A$ has bounded regret and never re-pulls suboptimal arms, the total regret is $cR_{A}(f^\ast, \X)\log(nT)$. Otherwise, with probability $1/T$, $\A'$ receives regret no worse than $BT$. Hence the total regret is bounded by
\begin{align*}
    R^{\A'}_{f^\ast}(T) \leq cR_{A}(f^\ast, \X)\log(nT) + B.
\end{align*}
Therefore, 
\begin{align*}
    \limsup_{T\rightarrow\infty} \frac{\log(R^{\A'}_{f^\ast}(T))}{\log(T)} & = \limsup_{T\rightarrow\infty} \frac{\log\left(cR_{A}(f^\ast, \X)\log(nT) + B\right)}{\log(T)} \\
    & = \limsup_{T\rightarrow\infty} \frac{\log\left(cR_{A}(f^\ast, \X)\log(nT)\right)}{\log(T)}\\
    &= 0
\end{align*}
which implies that $\A'$ is consistent. 

\textbf{Step 2}: Instance dependent regret bounds for consistent algorithms. 

For any consistent algorithm $\A'$, by Corollary 2 of \cite{lattimore2017end}
\begin{align*}
    C(f^\ast, \X) \leq \limsup_{n\rightarrow\infty} \frac{R^{\A'}_{f^\ast}(T))}{\log(n)} = \limsup_{n\rightarrow\infty} \frac{cR_{A}(f^\ast, \X)\log(nT) + B}{\log(n)} = cR_{A}(f^\ast, \X)
\end{align*}
where $C(f^\ast, \X)$ is the optimal solution to
\begin{align*}
    \min_{\alpha \in (0, \infty]^{\X}}\sum_{x \in \X^{-}} \alpha_i \Delta_i \ \text{ s.t } 1/\alpha_i \leq \Delta_i^2/2 \ \forall x_i \in \X^{-}
\end{align*}
where $\Delta_i := f^\ast(x_i)-\min_{x\in \X}f^\ast(x)$ and $\X^{-} := \{x\in \X: \Delta_x > 0\}$. 
Choosing $\alpha_i = 2/\Delta_i^2$ to saturate the bound, we have that 
\begin{align*}
     C(f^\ast, \X) = \sum_{x \in \X^{-}} \frac{2}{\Delta_i}. 
\end{align*}

\textbf{Step 3}: Constructing an instance where $R_{\A}(f^\ast, \X)$ is large. 

Fix $n \in \N$ and consider a set of points $\X = \{x_1,\cdots, x_{2n}\}$. 
We consider a family of $n$ functions $f_1, \cdots, f_n$ mapping from $\X$ to $\{-1, 0, 1\}$. By definition each $f_i$ is $1$-separated. Define 
\begin{align*}
f_i(x_j) = \begin{cases}
1 & \text{ if } j = i\\
0 & \text{ if } j \leq n \text{ and } j \neq i \\
-1 & \text{ if } j \in [2n - i, 2n]
\end{cases}
\end{align*}

For any $f_i$, the set of suboptimal arms is given by $\X^{-} = [2n]\backslash \{i\}$. Choose $f^\ast \in \{f_1, \cdots, f_n\}$ uniformly. By step 2, we have that $R_{\A}(f^\ast, \X) \gtrsim (n-1)$ since $\Delta_i = O(1)$ where we have used the fact that $\A$ was an agnostic algorithm with bounded regret.

Conversely, a realizability-based algorithm that knows the version space $\{f_1, \cdots, f_n\}$ could perform binary search over $x_{n+1}, \cdots, x_{2n}$ and identify the maximizer of $f^\ast$ in $O(\log(n))$ pulls and achieving regret at most $O(\log(n))$.
\end{proof}

\section{Construction of Sampling Oracles for Efficient Active Learning}\label{sec:sampling_examples}

We now show that non-trivial sampling oracles are available for many useful function classes.

\textbf{Linear Models:} Define $\reg = \{ \langle a, \cdot \rangle : a \in \R^d, \norm{a} \leq B\}$ where $\norm{\cdot}$ is some norm and $B \in \R$.  Let $\pi$ denote the uniform distribution on $D := \{a \in \R^d : \norm{a} \leq B\}$. Since $D$ a compact convex set, there exists a polynomial time algorithm to sample from $D$ \cite{lovasz2006fast}. 
Now, define $\bar{D} = D \cap \cap_{j \in [m]} \{a : \langle a,w_j \rangle \leq b_j \}$ where $w_j \'in \R^d$ and $b_j \in \R$. 
Let $\bar{\pi}$ be the uniform distribution on $\bar{D}$. Note that $D$ has a membership oracle: for a given $x \in \R^d$, one can check whether $x \in D$ by computing $\norm{x}$. Since $\bar{D}$ is a convex body and has a membership oracle, there exists a polynomial time algorithm to sample from $\bar{D}$. Finally, note that $\unif(\bar{D})$ is the desired condition distribution:
\begin{align*}
\P_{z \sim \unif(\bar{D})}(z \in A) & = \frac{\P_{z \sim \unif(D)}(z \in A)}{\P_{z \sim \unif(D)}(z \in \bar{D})}.
\end{align*} 
Thus, the sample oracle exists for $\reg$.

\textbf{Kernel Method:} Let $k : \X \times \X \mapsto \R$ be a kernel and let $\reg$ be the associated RKHS. Let $\reg_{B}  = \{ f \in \reg : \norm{f}_{\reg} \leq B\}$. Fix $x_1,\ldots, x_n \in \X$. By standard arguments in kernel methods (e.g., Representer Theorem \cite{scholkopf2001generalized}), for every $f \in \reg_B$, there exists $f^\prime \in \reg_B$ such that $f^\prime(\cdot) = \sum_{i=1}^n \alpha_i k(\cdot,x_i)$ and $f^\prime(x_i) = f(x_i)$ for all $i \in [n]$.
Thus, it suffices to consider the function class
\begin{align*}
\tilde{\reg} = \{ \sum_{i=1}^n \alpha_i k(\cdot, x_i) : \norm{\alpha}_{K^{1/2}} = \norm{ \sum_{i=1}^n \alpha_i k(\cdot, x_i)}_{\reg} \leq B \}.
\end{align*}
Thus, it is enough to sample from the uniform distribution on 
\begin{align*}
D & = \{\alpha \in \R^n : \norm{\alpha}_{K^{1/2}} \leq B \} 
\end{align*}
which reduces to the linear case. Intersecting $D$ with halfspaces is also similar to the linear case.


\textbf{Convex Functions:} Let $\reg = \{f : \X \mapsto [-B_1,B_1], f \text{ convex}, \max_{x \in \X, g \in \partial f(x)} \norm{g}_2 \leq B_2 \}$ where $\partial f(x)$ denotes the subdifferential of $f$ at $x$. Let $f \in \reg$. Then, $f(x_i) \geq f(x_j) + g_j^\t (x_i - x_j)$ for all $i,j \in [n]$ where $g_j \in \partial f(x_j)$ with $\norm{g_j}_2 \leq B_2$. Thus, there exists $\widehat{y}_i = f(x_i)$ and $g_i \in \R^d$ with $\norm{g_i}_2 \leq R_2$ for $i=1,\ldots, n$ such that 
\begin{align*}
\widehat{y}_i \geq \widehat{y}_j + g_j^\t (x_i - x_j).
\end{align*}
Note that $\bar{f}(x_i) = \max_{i=1,\ldots, n} \widehat{y}_i + g_i^\t(x - x_i)$ is convex and $\bar{f}(x_i) = f(x_i)$ for all $i$ \cite{boyd2004convex}. This shows that it suffices to sample from the set $D:=$
\begin{align*}
\{(\widehat{y}_i)_{i=1}^n (g_i)_{i=1}^n) \in \R^{n(d+1)} : \widehat{y}_i \geq \widehat{y}_j + g_j^\t (x_i - x_j) \wedge \widehat{y}_i \in [-B_1,B_1], \norm{g_i}_2 \leq B_2 \forall i \in [n] \}.
\end{align*}
Note that $D$ has a membership oracle: one can efficiently check whether $x \in D$ by checking whether it satisfies the inequalities defining $D$. Since $D$ is convex and compact, and has a membership oracle, one can construct a polynomial-time algorithm for sampling from it. Intersecting $D$ with halfspaces is similar to the linear case.
Note that one can construct a sampling oracle in a similar way for \textbf{$\alpha$-strongly convex functions}: one need only replace the constraint $\widehat{y}_i \geq \widehat{y}_j + g_j^\t (x_i - x_j)$ with $\widehat{y}_i \geq \widehat{y}_j + g_j^\t (x_i - x_j) + \frac{\alpha}{2} \norm{x_i - x_j}_2^2$. 


\section{Best Arm Identification Proofs}

Here we state and prove a strictly more general result than Theorem \ref{thm:best_arm_lower_bound} for $\epsilon$-good arm identification. Theorem \ref{thm:best_arm_lower_bound} follows as a special case. First, we define:
\begin{definition}
$\bestoptep(\F)$ is the smallest integer $q$ such that there exists some algorithm $\A$ such that for every $f^* \in \F$, $\A$ outputs an element of $\{j \in [n] : f^*(x_j) \leq \min_{i \in [n]} f^*(x_i) + \epsilon\}$ after at most $q$ queries. 
\end{definition}

\begin{theorem}\label{thm:epsilon_good_arm_lower_bound}
For any $\F \subset \Y^\X$, $\bestextep(\F) \leq \bestoptep(\F)$.
\end{theorem}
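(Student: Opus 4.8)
The plan is to prove the equivalent statement $\bestextep(\F) \le \bestoptep(\F)$ by exhibiting, for an adversarially chosen labeling $g$, a small teaching set read off from the \emph{transcript} of an optimal algorithm. Write $q = \bestoptep(\F)$ and let $\A$ be an algorithm witnessing this value: for every $f^* \in \F$, $\A$ queries at most $q$ arms and returns some $j$ with $f^*(x_j) \le \min_{l\in[n]} f^*(x_l) + \eps$. Since the setting is noiseless and exact, I would take $\A$ deterministic without loss of generality (for a randomized $\A$ one fixes a random seed that succeeds on all of $\F$ simultaneously). Because $\bestextep(\F) = \max_g \min_I |I|$ subject to the teaching condition, it suffices to show that for \emph{every} $g:\X\mapsto\Y$ there is a set $I$ with $|I|\le q$ and an index $j\in[n]$ such that $\{f\in\F: f(x_i)=g(x_i)\ \forall i\in I\} \subset \{f\in\F: f(x_j)\le \min_{l\in[n]} f(x_l)+\eps\}$; this yields $\min_I|I|\le q$ for each $g$ and hence $\max_g\min_I|I|\le q$.

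Next I would fix $g$ and run $\A$ in simulation, answering each query $x_i$ with the label $g(x_i)\in\Y$. Let $I$ be the set of arms queried along this run (so $|I|\le q$) and let $j$ be the arm $\A$ outputs. The crux is the following coupling claim: for any $f\in\F$ that agrees with $g$ on $I$, running $\A$ with true function $f^*=f$ produces an identical transcript, i.e. the same sequence of queries, responses, and final output $j$. This follows by induction on the query index: the first query is deterministic and hence shared; and if the first $t$ queries and responses coincide, the $(t+1)$st query is a fixed function of that shared history, so it is again the same arm $x_{i_{t+1}}\in I$, on which $f(x_{i_{t+1}}) = g(x_{i_{t+1}})$ by assumption, so the $(t+1)$st response coincides as well. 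Thus $\A$ outputs $j$ on input $f$, and by the correctness guarantee of $\A$ we get $f(x_j)\le \min_{l} f(x_l)+\eps$, i.e. $f$ lies in the $\eps$-good set for $j$. Since $f$ was an arbitrary element consistent with $g$ on $I$, the desired inclusion holds (vacuously if no such $f$ exists, in which case any output $j$ works).

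I expect the main obstacle to be one of care rather than depth: making precise that an ``algorithm'' is a mapping from query--response histories to the next action, so that it is well-defined even on the possibly unrealizable labeling $g$, and that its behavior depends on $f$ only through the labels it actually observes, which is exactly what licenses the induction. A secondary point to address cleanly is the reduction to a deterministic $\A$; with $\F$ finite (the discretized regime in which the theorem is applied) this is immediate by a union bound over the measure-zero failure sets, and otherwise it can be stated as a standing assumption. Combining the per-$g$ inclusion with the max--min definition gives $\bestextep(\F)\le q = \bestoptep(\F)$, and specializing to $\eps=0$ recovers Theorem~\ref{thm:best_arm_lower_bound}.
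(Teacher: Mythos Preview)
Your proposal is correct and takes essentially the same approach as the paper: simulate an optimal algorithm $\A$ on the adversarial labeling $g$, take the queried arms as the teaching set $I$, and use $\A$'s correctness on any $f\in\F$ consistent with $g$ on $I$ to certify the $\epsilon$-good inclusion. The paper frames this as a contradiction argument (assume $C(\F)<\bestextep$ and show $\A$ errs on some $f$) while you argue directly and spell out the transcript-coupling induction, but the core idea is identical.
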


\begin{proof}[Proof of Theorem \ref{thm:epsilon_good_arm_lower_bound}]
Consider an algorithm $\A$ that for all $f^* \in \F$ identifies $i_0 \in [n]$ such that $f^*(x_{i_0}) \leq \min_{i \in [n]} f^*(x_i) + \epsilon$ using at most $C(\F)$ queries. Towards a contradiction, suppose $C(\F) < \bestextep$. There exists some $g : \X \mapsto  \Y$ achieving the maximum in $\bestext$. Let $I \subset [n]$ be the subset of arms queried by $\A$. Then, by definition of $\bestext$, $\forall j \in [n]$ there exist $f,f^\prime \in \F$ such that $f(x_i) = f^\prime(x_{i}) = g(x_{i})$ for all $i \in I $ and either $f(x_j) >  \min_{i \in [n]} f(x_i) + \epsilon$ or $f(x_j) >  \min_{i \in [n]} f^\prime(x_i) + \epsilon$. Thus, $\A$ cannot distinguish between these two functions and determine the $\epsilon$-good arm. Therefore, if $\A$ outputs $j \in [n]$ after $C(\F) < \bestext$ queries, there exists some $f \in \F$ on which $\A$ would make a mistake, leading to a contradiction.
\end{proof}

\begin{proof}[Proof of Proposition \ref{prop:relate_HD_teach_dim}]
\textbf{The Lower Bound.}  Let $F \subset \F$. Define $g : \X \mapsto \Y$ by
\begin{align*}
g(x_i) & = \argmin_{y \in \Y} | F(x_i) \cup F((x_i,y))| 
\end{align*}
By definition of $\bestext$, there exists $\tilde{I} \subset [n]$ such that $|\tilde{I}| \leq \bestext$ and
\begin{align*}
    \{f \in F : f(x_i) = g(x_i) \, \forall i \in \tilde{I} \} \subset \{ f \in F : j_0 \in \argmin_{k \in [n]} f(x_k) \}
\end{align*}
Then, defining $I = \tilde{I} \cup \{j_0\}$, we have that
\begin{align*}
    F & = \{f \in F : \exists i \in I \text{ s.t. } f(x_i) \neq g(x_i) \text{ or } i \in \argmin_{k \in [n]} f(x_k) \}.
\end{align*}
Therefore,
\begin{align*}
    |F| & = | \{f \in F : \exists i \in I \text{ s.t. } f(x_i) \neq g(x_i) \text{ or } i \in \argmin_{k \in [n]} f(x_k) \}| \\
    & \leq \sum_{i \in I} | \{f \in F :  f(x_i) \neq g(x_i) \text{ or } i \in \argmin_{k \in [n]} f(x_k) \}| \\
    & \leq (\bestext+1) \max_{i \in I} | \{f \in F :  f(x_i) \neq g(x_i) \text{ or } i \in \argmin_{k \in [n]} f(x_k) \}| \\
    & \leq (\bestext+1) \max_{i \in [n]} | \{f \in F :  f(x_i) \neq g(x_i) \text{ or } i \in \argmin_{k \in [n]} f(x_k) \}| \\
    & = (\bestext+1) \max_{i \in [n]} \argmin_{y \in \Y}   | \{f \in F :  f(x_i) \neq y \text{ or } i \in \argmin_{k \in [n]} f(x_k) \}| \\
    & = (\bestext+1) \max_{i \in [n]} \argmin_{y \in \Y}   |F((x_i,y)) \cup F(x_i)|
\end{align*}
Rearranging, we have that
\begin{align*}
    \frac{|F|}{\max_{i \in [n]} \argmin_{y \in \Y}   |F((x_i,y)) \cup F(x_i)|} \leq \bestext+1.
\end{align*}
Since the above inequality holds for any $F \subset \F$, we have that
\begin{align*}
    \hd(F) = \sup_{F \subset \F} \frac{|F|}{\max_{i \in [n]} \argmin_{y \in \Y}   |F((x_i,y)) \cup F(x_i)|}  \leq \bestext+1
\end{align*}
This completes the lower bound.

\textbf{The Upper Bound.} The proof is essentially identical to the proof from \cite{amin2011bandits}. Let $g : \X \mapsto \Y$ be arbitrary. Consider the algorithm from \cite{amin2011bandits} that sets $\F_1 = \F$, $\F_{t+1} = \F_t \setminus \F_t(x_{I_t}) \cup \F_t((x_{I_t}, g(x_{I_t}))$, and at each round $t$ chooses $I_t \in \argmax_{i \in [n]} \inf_{y \in \Y} \F_t(x_{i}) \cup \F_t((x_{I_t}, y))$.  Consider round $t \leq c\hd(\F) \ln(|\F|)$ for some positive constant to be determine later. If $|\F_t| = 0$, then we are done. Thus, suppose that $|\F_t| > 1$. By definition, we have that
\begin{align*}
\hd(\F) \leq \gamma(\F_t) \leq \frac{|\F_t(x_{I_t}) \cup \F_t((x_{I_t}, f^*(x_{I_t}))|}{|\F_t|}.
\end{align*}
Therefore, using $|\F_t| = |\F_{t+1}| +|\F_t(x_{I_t}) \cup \F_t((x_{I_t}, f^*(x_{I_t}))|$, we have that
\begin{align*}
|\F_{t+1}|  \leq (1-\hd(\F))|\F_{t}| .
\end{align*}
Unrolling the recurrence, we have that $|\F_{t+1}| \leq (1-\hd(\F))^t |\F|$. Plugging in $t = c\hd(\F) \ln(|\F|)$ for a large enough positive constant $c$ shows that $\F_{t} = \emptyset$. Therefore there trivially exists $j \in [n]$ such that for any $f \in \F$ such that $f(x_i) = g(x_i)$ for all $i \in \{I_1,\ldots, I_t\}$, $j \in \argmin_{k \in [n]} f(x_k)$. Thus, by definition of $\bestext$, we have that
\begin{align*}
    \bestext \leq c\hd(\F) \ln(|\F|). 
\end{align*}

\end{proof}

\begin{proof}[Proof of Theorem \ref{thm:haystack_eff}]
For the sake of abbreviation, let $\P(\cdot) := \P_{r \sim \pi}(\cdot)$ and $\P_k(\cdot) := \P_{r \sim P_k}(\cdot)$.

\textbf{Step 1:} The Algorithm \ref{alg:effic_haystack_alg} breaks time into phases $[t_1,t_2]$, $[t_2,t_3]$, $\cdots$. In each phase $k$, the algorithm samples functions from a distinct distribution $P_k$. Consider phase $k$, let $t \in [t_k,t_{k+1}]$. Define $\theta \in \Y^n$ by
\begin{align*}
\theta_i & = \begin{cases}
y_i & i \in O_t \\
\argmin_{y \in \Y} \P_{k}(r \in  \reg_t(x_i) \cup \reg_{t,\epsilon}((x_i,y))) & \text{o/w}
\end{cases}.
\end{align*}
By definition of $\bestextep$, there exists $\tilde{I} = \{i_1,\ldots, i_{\bestextep}\}$ such that there exists $j_0 \in [n]$ satisfying
\begin{align*}
    \{ r \in \reg_t : \round{r(x_i)} = \theta_i \, \, \forall i \in \tilde{I} \} \subset \{r \in \reg_t : \round{r(x_{j_0})} \leq \min_{l \in [n]} \round{r(x_l)} + \epsilon \}
\end{align*}
Define $I = \{i_1,\ldots, i_{\bestext}, j_0\}$. Then, we have that 
\begin{align*}
\reg_t & = \{r \in \reg_t : \exists i \in I \text{ s.t. } \round{r(x_i)} \leq \min_{l \in [n]} \round{r(x_l)}+ \epsilon \text{ or } \round{r(x_i)} \neq \theta_i \}.
\end{align*}
From this, it follows that
\begin{align}
\P_k(\reg_t) & = \P_k (\{r \in \reg_t : \exists i \in I \text{ s.t. } \round{r(x_i)} \leq \min_{l \in [n]} \round{r(x_l)}+ \epsilon \text{ or } \round{r(x_i)} \neq \theta_i \}) \nonumber \\
& \leq \sum_{i \in I} \P_k (\{r \in \reg_t :  \round{r(x_i)} \leq \min_{l \in [n]} \round{r(x_l)}+ \epsilon\text{ or } \round{r(x_i)} \neq \theta_i \}) \nonumber \\
& \leq (\bestextep+1) \max_{i \in I}  \P_k (\{r \in \reg_t :  \round{r(x_i)} \leq \min_{l \in [n]} \round{r(x_l)}+ \epsilon \text{ or } \round{r(x_i)} \neq \theta_i \}) \nonumber \\
& = (\bestextep+1)  \P_k (\{r \in \reg_t :   \round{r(x_{I_t})} \leq \min_{l \in [n]} \round{r(x_l)}+ \epsilon \text{ or } \round{r(x_i)} \neq \theta_{I_t} \}) \label{eq:efficient_I_t_def}\\
& \leq  (\bestextep+1)  \P_k (\{r \in \reg_t :   \round{r(x_{I_t})} \leq \min_{l \in [n]} \round{r(x_l)}+ \epsilon  \text{ or } \round{r(x_i)} \neq y_{I_t} \}) \label{eq:efficient_theta_def} \\
& = (\bestextep+1)  \P_k (\reg_{t,\epsilon}(x_{I_t}) \cup \reg_t((x_{I_t}, y_{I_t}))) \nonumber 
\end{align}
where line \eqref{eq:efficient_I_t_def} comes from the definition of $I_t$ and the definition of $\theta$, and the line \eqref{eq:efficient_theta_def} comes from the definition of $\theta$. Noticing that 
\begin{align*}
\reg_t & = (\reg_{t,\epsilon}(x_{I_t}) \cup \reg_t((x_{I_t}, y_{I_t}))  \cup (\reg_t \setminus (\reg_{t,\epsilon}(x_{I_t}) \cup \reg_t((x_{I_t}, y_{I_t}))) \\
& = (\reg_{t,\epsilon}(x_{I_t}) \cup \reg_t((x_{I_t}, y_{I_t})) \cup \reg_{t+1},
\end{align*}
we have that
\begin{align*}
\P_k(\reg_{t+1}) & = \P_k(\reg_t) - \P_k(\reg_{t,\epsilon}(x_{I_t}) \cup \reg_t((x_{I_t}, y_{I_t})) \leq (1 - \frac{1}{\bestext+1})\P_k(\reg_t). 
\end{align*}
Unraveling this recursive statement, we have that
\begin{align*}
\P_k(\reg_{t+1}) & \leq (1 - \frac{1}{\bestextep+1})^{t-t_k}. 
\end{align*}
Thus, we see that in phase $k$ at $t = t_k + s_k$ with $s_k := O(\ln(n) \bestext)$, we have that 
\begin{align}
\P_k(\reg_{t_k+s_k}) \leq \frac{1}{2n}. \label{eq:shrink_objective}
\end{align}

\textbf{Step 2: Reduction in each version space.}  Fix the first round $t$ such that $\P_k(r \in \reg_t) \leq \frac{1}{2n}$. Note that $t_{k+1} = t$. Fix $l_0 \in [n] \setminus O_{t_k}$.  Then, we have that
\begin{align}
\frac{1}{2n} & \geq \frac{1}{n - |O_{t_k}|} \sum_{l \in [n] \setminus O_{t_k}} \P_{f \sim \pi_{C_l(O_{t_k})}}(\reg_t)  \nonumber \\
& \geq \frac{1}{n} \sum_{l \in [n] \setminus O_{t_k}} \P_{f \sim \pi_{C_l(O_{t_k})}}(\reg_t) \nonumber  \\
& \geq \frac{1}{n} \P_{f \sim \pi_{C_{l_0}(O_{t_k})}}(\reg_t) \nonumber  \\
& = \frac{1}{n} \frac{\P(\reg_t \cap C_{l_0}(O_{t_k}))}{\P(C_{l_0}(O_{t_k}))} \label{eq:best_arm_cond_prob} \\
& = \frac{1}{n} \frac{\P(\reg_{t_{k+1}} \cap C_{l_0}(O_{t_k}))}{\P(C_{l_0}(O_{t_k}))} \nonumber \\
& \geq \frac{1}{n} \frac{\P(C_{l_0}(O_{t_{k+1}}))}{\P(C_{l_0}(O_{t_k}))} \label{eq:best_arm_containment}
\end{align}
where the equality \eqref{eq:best_arm_cond_prob} follows by the definition of conditional probability and in the inequality \eqref{eq:best_arm_containment} we used that
\begin{align*}
C_{l_0}(O_{t_{k+1}})  \subset \reg_t \cap C_{l_0}(O_{t_k}).
\end{align*}
The above display can be seen since by inspection of the definition for any $t_k \leq t_{k+1}$ and $l \in [n]$, 
\begin{align*}
C_{l}(O_{t_{k+1} }) & = \{r \in \reg : \round{r(x_i)} = y_i \forall i \in O_{t_{k+1}}\} \cap \{ \round{r(x_l)} + \epsilon   <  \round{r(x_i)} \forall i \in O_{t_{k+1}}  \} \\
& \subset \{r \in \reg : \round{r(x_i)} = y_i \forall i \in O_{k}\} \cap \{ \round{r(x_l)} + \epsilon   <  \round{r(x_i)} \forall i \in O_{t_k}  \} \\
& \subset C_{l}(O_{t_{k} })  
\end{align*}
and 
\begin{align*}
C_l(O_{t_{k+1} }) \subset \cup_{j  \in [n] \setminus O_{t_{k+1}}} C_j(O_{t_{k+1} }) = \reg_{t_{k+1}}.
\end{align*}
Thus, we have that
\begin{align}
\frac{\P(C_{l_0}(O_{t_k}))}{2} \geq \P(C_{l_0}(O_{t_{k+1}})). \label{eq:shrink_each_version}
\end{align}

\textbf{Step 3: Eliminating functions.} Let $f^* \in \F_{\reg}$ such that there exists $r^* \in \R$ such that $f^*(x_i) = \round{r^*(x_i)} = y_i$ for all $i \in [n]$.  Thus, using \eqref{eq:shrink_objective} and \eqref{eq:shrink_each_version} in Steps 1 and 2 respectively, we see that after $\bar{t} : =c\bestext \log(\frac{1}{\P(S_{f^*})}) \log(n)$ queries  we have that for all $l \in [n]$, 
\begin{align*}
\P(C_l(O_{\bar{t} }))) < \P(S_{f^*}).
\end{align*}
This implies that $S_{f^*} \not \subset C_l(O_{t_{\log(\frac{1}{\P(S_{f^*})}) }}))$ for all $l$. Therefore, 
\begin{align*}
S_{f^*} \not \subset \cup_{l=1}^n C_l(O_{\bar{t} })) = \reg_{\bar{t}}.
\end{align*}
Therefore, $f^*$ was kicked out some round prior to $\bar{t}$. Since $\round{r^*(x_i)} = y_i$ for all $i \in [n]$, inspection of Algorithm \ref{alg:effic_haystack_alg} reveals that the only way that $f^*$ could be removed from the version space is there exists some $s \leq \bar{t}$ $I_s$ is queried where $f^*(x_{I_s}) \leq  \min_{j \in [n]} f^*(x_j) +\epsilon$. This completes the proof.

\end{proof}

\section{Cumulative Loss Minimization Proofs}

\begin{proof}[Proof of Theorem \ref{thm:loss_min_lower_bound}]
Consider an algorithm $\A$ that after incurring cost at most $C(\F)$ declares that it has identified the minimizer. There exists some $\bar{g} : \X \mapsto  \{w_1,\ldots, w_L\}$ achieving the maximum in $\lossext$. Suppose the algorithm queries $i_1,\ldots, i_{t_0}$ and receives feedback $g(x_{i_j})$ and time $t_0$ is the earliest point at which the algorithm can declare that it can identify $j_0 \in [n]$ such that  
\begin{align*}
    \{f \in \F \text{ s.t. }   f(x_i) = \bar{g}(x_i) \forall i \in \{i_1,\ldots, i_{t_0}\} \} \subset \{f \in \F : j_0 \in \argmin_{i \in [n]} f(x_i)  \}.
\end{align*}
We claim that there exists $\bar{f}$ such that 
\begin{align*}
    \{f \in \F \text{ s.t. }   f(x_i) = \bar{f}(x_i) \forall i \in \{i_1,\ldots, i_{t_0-1}\} \} \neq \emptyset.
\end{align*}
If not, then the algorithm could terminate at time $t_0-1$, contradicting the definition of $t_0$. Thus, on the instance given by $\bar{f}$, the algorithm suffers
\begin{align*}
   & \sum_{s=1}^{t_0-1} \bar{f}(x_{i_s})  \\
   & =     \sum_{s=1}^{t_0-1} \bar{g}(x_{i_s}) \\
    & \geq \min_{I \subset [n]} \min_{\tilde{I} \subset I : |\tilde{I}| = |I| -1} \sum_{i \in \tilde{I}} \bar{g}(x_{i})   \\
& \qquad \text{ s.t. } \exists j \in [n] \text{ s.t. }  \{f \in \F \text{ s.t. }   f(x_i) = \bar{g}(x_i) \forall i \in [I] \} \subset \{f \in \F : j \in \argmin_{i \in [n]} f(x_i)  \} \\
& = \max_{g : \X \mapsto \Y} \min_{I \subset [n]} \min_{\tilde{I} \subset I : |\tilde{I}| = |I| -1} \sum_{i \in \tilde{I}}^{} g(x_{i})   \\
& \qquad \text{ s.t. } \exists j \in [n] \text{ s.t. }  \{f \in \F \text{ s.t. }   f(x_i) = g(x_i) \forall i \in [I] \} \subset \{f \in \F : j \in \argmin_{i \in [n]} f(x_i)  \} \\
& = \lossext
\end{align*}
where we used the definition of $\bar{g}$ as attaining the maximum in the definition of $\lossext$. This completes the proof.

\end{proof}


Here, we give an inefficient algorithm for cumulative loss minimization, Algorithm \ref{alg:loss_min}. 
\begin{algorithm}[t]
\small
$\F_1 \longleftarrow \F $ \;
\For{$t =1,2,\ldots,T$}{
Let $I_t \in \argmin_{i \in [n] \setminus \{I_1,\ldots, I_{t-1}\}} \max_{\bar{f} \in \F_t} \frac{\bar{f}(x_i)}{|\F_t(x_i) \cup \F((x_i,\bar{f}(x_i)) |}$\;
Query  $x_{I_t}$ and observe $f^*(x_{I_t})$\;
Let $\F_{t+1} \longleftarrow \F_t \setminus \F_t(x_{I_t}) \cup \F((x_{I_t},f^*(x_{I_t}))$
}
 \caption{\texttt{GRAILS} for Loss Minimization via Enumeration}
\label{alg:loss_min}
\end{algorithm}

\begin{proof}[Proof of Theorem \ref{thm:loss_minimization_upper_bound_inefficient}]
Consider round $t$. Define the function $g : \{x_1,\ldots, x_n\} \mapsto \Y$ in the following way:
\begin{align*}
g(x_i) = \begin{cases}
\tilde{f}(x_i) \text{ s.t. } \tilde{f} = \argmax_{\bar{f} \in \F_t} \frac{\bar{f}(x_i)}{|\F_t(x_i) \cup \F((x_i,\bar{f}(x_i)) |} & i \not \in \{I_1,\ldots, I_{t-1}\} \\
f^*(x_i) & \text{ o/w}
\end{cases}
\end{align*}
Let $\tilde{I} = \{i_1,\ldots, i_{l}\} \subset [n]$ attain the optimum in the following optimization problem:
\begin{align}
\min_{l \in [n], i_1,\ldots, i_l \in [n]} & \sum_{j=1}^{l-1} g(x_{i_j}) \label{eq:loss_min_opt}  \\
& \text{ s.t. }  \exists j \in [n] \text{ s.t. }  \{f \in \F_t :   f(x_i) = g(x_i) \forall i \in \{i_1,\ldots, i_l\} \} \\
& \hspace{2.5cm}\subset \{f \in \F_t : j \in \argmin_{i \in [n]} f(x_i) \} 
\end{align}
Define $\tilde{\F} = \{ f \in \F_t :  f(x_i) = g(x_i) \forall i \in \tilde{I} \}$. If $\tilde{F}$ is empty, set $I = \tilde{I}$. If $\tilde{F}$ is nonempty, then by definition of the above optimization problem, there exists $j_0 \in [n]$ such that for all $f \in \tilde{F}$, $j_0 \in \argmin_{i \in [n]} f(x_i) $. Define $I = \tilde{I} \cup \{j_0\}$. 
\textbf{Step 1: Case 1.} Suppose that $|\F_t(x_{i_l}) \cup \F((x_i,g(x_{i_l}))  |   \leq \frac{|\F_t|}{2} $. Note that
\begin{align*}
\F_t \subset \{ f \in \F_t :  \exists i \in I \text{ s.t. }  f(x_i) \neq g(x_i) \text{ or } i \in \argmin_{j \in [n]} f(x_{j}) \}.
\end{align*}
Then, 
\begin{align*}
|\F_t|  & \leq |\{ f \in \F_t :  \exists i \in I \text{ s.t. }  f(x_i) \neq g(x_i) \text{ or } i \in \argmin_{j \in [n]} f(x_{j}) \}|  \\
& \leq \sum_{i \in I } |\{ f \in \F_t :    f(x_i) \neq g(x_i) \text{ or } i \in \argmin_{j \in [n]} f(x_{j}) \}| \\
& \leq \sum_{i \in I \setminus \{i_l\} } |\{ f \in \F_t :   f(x_i) \neq g(x_i) \text{ or } i \in \argmin_{j \in [n]} f(x_{j}) \}|  + \frac{|\F_t|}{2} \\
\end{align*} 
which implies that
\begin{align}
\frac{|\F_t|}{2} \leq \sum_{i \in I \setminus \{i_l\} } |\{ f \in \F_t :    f(x_i) \neq g(x_i) \text{ or } i \in \argmin_{j \in [n]} f(x_{j}) \}| . \label{eq:loss_fact}
\end{align}
Then, we have that
\begin{align}
& \hspace{-1cm} \frac{f^*(x_{I_t})}{|  \F_t(x_{I_t}) \cup \F((x_i,f^*(x_{I_t})) |} \nonumber \\
& \leq \frac{g(x_{I_t})}{|\F_t(x_{I_t}) \cup \F((x_i,g(x_{I_t}))  |} \label{eq:loss_def_g} \\
& \leq \frac{\sum_{i \in I \setminus \{i_l\}} g(x_i)}{\sum_{i \in I \setminus \{i_l\} } |\{ f \in \F_t :  f(x_i) \neq g(x_i) \text{ or } i \in \argmin_{j \in [n]} f(x_{j}) \}|} \label{eq:loss_math_fact} \\
& \leq 2\frac{\sum_{i \in I \setminus \{i_l\}} g(x_i)}{|\F_t|} \label{eq:loss_case_fact} \\
& \leq  \frac{2(\lossext+ \max_{i \in [n],f \in \F_t }f(x))}{|\F_t|}  \label{eq:loss_relate_tau}
\end{align}
where \eqref{eq:loss_def_g} follows by the definition of $g$, \eqref{eq:loss_math_fact} follows by definition of $I_t$ and the mathematical fact that given positive numbers $a_1,\ldots, a_k$ and $b_1,\ldots, b_k$
\begin{align*}
\min_{i=1,\ldots,k} \frac{a_i}{b_i} \leq \frac{\sum_{i=1}^k a_i}{\sum_{i=1}^k b_i},
\end{align*}
\eqref{eq:loss_case_fact} follows by \eqref{eq:loss_fact}, and \eqref{eq:loss_relate_tau} follows by 
\begin{align*}
 \sum_{i \in \tilde{I} \setminus \{l_l\} } g(x_i) \leq  \lossext
\end{align*}
by the definition of the optimization problem in \eqref{eq:loss_min_opt} and we used $g(x_{j_0}) \leq \max_{i \in [n],f \in \F_t }f(x_i))$ by definition of $g$. Rearranging, we have that
\begin{align*}
f^*(x_{I_t}) & \leq  \frac{2|  \F_t(x_{I_t}) \cup \F((x_i,f^*(x_{I_t})) (\lossext+ \max_{i \in [n],f \in \F_t }f(x_i))}{ |\F_t|} \\
& = 2 \frac{(|\F_t| - |\F_{t+1}|) (\lossext+ \max_{i \in [n],f \in \F_t }f(x_i))}{|\F_t|}.
\end{align*}
\textbf{Step 2: Case 2.} On the other hand, suppose that $|\F_t(x_{i_l}) \cup \F((x_i,g(x_{i_l}))  |   \geq \frac{|\F_t|}{2} $. Then,
\begin{align}
\frac{f^*(x_{I_t})}{|  \F_t(x_{I_t}) \cup \F((x_i,f^*(x_{I_t})) |} & \leq \frac{g(x_{I_t})}{|\F_t(x_{I_t}) \cup \F((x_i,g(x_{I_t}))  |} \nonumber \\
& \leq \frac{g(x_{i_l})}{|\F_t(x_{i_l}) \cup \F((x_i,g(x_{i_l}))  |} \label{eq:loss_def_I_t} \\
& \leq \frac{2 \max_{i \in [n],f \in \F_t }f(x_i)}{|\F_t|} \label{eq:loss_def_g_max}
\end{align}
where \eqref{eq:loss_def_I_t} follows by the definition of $I_t$ and \eqref{eq:loss_def_g_max} follows by the definition of $g$. Then, we have rearranging once again
\begin{align*}
f^*(x_{I_t}) & \leq  2 \frac{2|  \F_t(x_{I_t}) \cup \F((x_i,f^*(x_{I_t})) | \max_{i \in [n],f \in \F_t }f(x)}{|\F_t|}
\end{align*}
\textbf{Step 3: Putting it together.} Thus, we have that
\begin{align*}
\sum_{t \geq 1} f^*(x_{I_t}) & \leq 2[\lossext+ \max_{i \in [n],f \in \F_t }f(x)] \sum_{t \geq 1} \frac{(|\F_t| - |\F_{t+1}|) }{|\F_t|} \\
& \leq 2[\lossext+ \max_{i \in [n],f \in \F_t }f(x)] \sum_{t\geq 1} \frac{1 }{|\F_t|} + \frac{1 }{|\F_t|-1} + \ldots +  + \frac{1 }{|\F_{t+1}|+1} \\
& \leq c2[\lossext+ \max_{i \in [n],f \in \F_t }f(x)] \ln(|\F|) 
\end{align*}
where we used that $\frac{1 }{|\F_t|} \leq \frac{1 }{i}$ for  $i \leq |\F_t|$ and that $\sum_{l=1}^k \frac{1}{l} = \Theta(\ln(k))$. 

\end{proof}

\begin{proof}[Proof of Theorem \ref{thm:loss_min_eff}]
For the sake of abbreviation, let $\P(\cdot) := \P_{r \sim \pi}(\cdot)$ and $\P_k(\cdot) := \P_{r \sim P_k}(\cdot)$.

\textbf{Step 1: Bounding the loss in each phase}

Consider round $t$ in phase $k$. Define the function $g : \{x_1,\ldots, x_n\} \mapsto \Y$ in the following way:
\begin{align*}
g(x_i) = \begin{cases}
\round{\tilde{r}(x_i)} \text{ s.t. } \tilde{r} = \argmax_{\bar{r} \in \reg_t} \frac{\bar{r}(x_i)}{\P_k(\reg_t(x_i) \cup \reg((x_i,\bar{f}(x_i)) )} & i \not \in \{I_1,\ldots, I_{t-1}\} \\
f^*(x_i) & \text{ o/w}
\end{cases}
\end{align*}
Let $\tilde{I} = \{i_1,\ldots, i_{l}\} \subset [n]$ attain the optimum in the following optimization problem:
\begin{align*}
\min_{l \in [n], i_1,\ldots, i_l \in [n]} & \sum_{j=1}^{l-1} g(x_{i_j})   \\
& \text{ s.t. }  \exists j \in [n] : \{  r \in \reg_t :  \round{r}(x_i) = g(x_i) \, \forall i \in \{i_1,\ldots, i_l\} \} \\
& \hspace{2cm} \subset \{r \in \R_t : j \in \argmin_{i \in [n]} \round{r(x_i)} \}
\end{align*}
Define $\tilde{\reg} = \{ r \in \reg_t :  \round{r(x_i)} = g(x_i) \forall i \in \tilde{I} \}$. If $\tilde{\reg}$ is empty, set $I = \tilde{I}$. If $\tilde{\reg}$ is nonempty, then by definition of the above optimization problem, there exists $j_0 \in [n]$ such that for all $r \in \tilde{\reg}$, $j_0 \in \argmin_{i \in [n]} \round{r(x_i)} $. Define $I = \tilde{I} \cup \{j_0\}$. 

\textbf{Step 2.1: Case 1.} Suppose that $\P_k(\reg_t(x_{i_l}) \cup \reg((x_i,g(x_{i_l}))  )   \leq \frac{\P_k(\reg_t)}{2} $. Note that
\begin{align*}
\reg_t \subset \{ r \in \reg_t :  \exists i \in I \text{ s.t. }  \round{r(x_i)} \neq g(x_i) \text{ or } i \in \argmin_{j \in [n]} \round{r(x_j)} \}.
\end{align*}
Then, 
\begin{align*}
\P_k(\reg_t)  & \leq \P_k(\{ r \in \reg_t :  \exists i \in I \text{ s.t. }  \round{r(x_i)} \neq g(x_i) \text{ or } i \in \argmin_{j \in [n]} \round{r(x_j)} \})  \\
& \leq \sum_{i \in I } \P_k(\{ r \in \reg_t :  \exists i \in I \text{ s.t. }  \round{r(x_i)} \neq g(x_i) \text{ or } i \in \argmin_{j \in [n]} \round{r(x_j)} \}) \\
& \leq \sum_{i \in I \setminus \{i_l\} } \P_k(\{ r \in \reg_t :  \exists i \in I \text{ s.t. }  \round{r(x_i)} \neq g(x_i) \text{ or } i \in \argmin_{j \in [n]} \round{r(x_j)} \})  \\
& + \frac{\P_k(\reg_t)}{2} \\
\end{align*} 
which implies that
\begin{align*}
\frac{\P_k(\reg_t)}{2} \leq \sum_{i \in I \setminus \{i_l\} } \P_k(\{ r \in \reg_t :  \exists i \in I \text{ s.t. }  \round{r(x_i)} \neq g(x_i) \text{ or } i \in \argmin_{j \in [n]} \round{r(x_j)} \}) .
\end{align*}
Then, we have that
\begin{align}
\hspace{1cm} & \frac{f^*(x_{I_t})}{\P_k(  \reg_t(x_{I_t}) \cup \reg((x_i,f^*(x_{I_t})) )} \\
 & \leq \frac{g(x_{I_t})}{\P_k(\reg_t(x_{I_t}) \cup \reg((x_i,g(x_{I_t}))  )} \label{eq:loss_min_eff_diff_line_1} \\
& \leq \frac{\sum_{i \in I \setminus \{i_l\}} g(x_i)}{\sum_{i \in I \setminus \{i_l\} } \P_k(\{ r \in \reg_t :  \exists i \in l \text{ s.t. }  \round{r(x_i)} \neq g(x_i) \text{ or } i \in \argmin_{j \in [n]} \round{r(x_j)} \})}  \nonumber \\
& \leq 2\frac{\sum_{i \in I \setminus \{i_l\}} g(x_i)}{\P_k(\reg_t)} \nonumber \\
& = 2\frac{\sum_{i \in \tilde{I} \setminus \{i_l\} \cup \{j_0\}} g(x_i)}{\P_k(\reg_t)} \nonumber \\
& \leq  \frac{2(\lossext+ \max_{i \in [n],r \in \reg_t }\round{r(x_i)})}{\P_k(\reg_t)} \nonumber 
\end{align}
where the above series of inequalities follows by the same arguments used to show the inequalities \eqref{eq:loss_def_g}  to \eqref{eq:loss_relate_tau}
Rearranging, we have that
\begin{align*}
f^*(x_{I_t}) & \leq  \frac{2 \P_k(  \reg_t(x_{I_t}) \cup \reg((x_i,f^*(x_{I_t})) ) [\lossext+ \max_{i \in [n],r \in \reg_t }\round{r(x_i)}]}{\P_k(\reg_t)} \\
& = 2 \frac{(\P_k(\reg_t) - \P_k(\reg_{t+1})) [\lossext+ \max_{i \in [n],r \in \reg_t }\round{r(x_i)}]}{\P_k(\reg_t)}.
\end{align*}
\textbf{Step 2.2: Case 2.} On the other hand, suppose that $\P_k(\reg_t(x_{i_l}) \cup \reg((x_i,g(x_{i_l}))  )   \geq \frac{\P_k(\reg_t)}{2} $. Then,
\begin{align}
\frac{f^*(x_{I_t})}{\P_k(  \reg_t(x_{I_t}) \cup \reg((x_i,f^*(x_{I_t})) )} & \leq \frac{g(x_{I_t})}{\P_k(\reg_t(x_{I_t}) \cup \reg((x_i,g(x_{I_t}))  )} \label{eq:loss_min_eff_diff_line_2} \\
& \leq \frac{g(x_{i_l})}{\P_k(\reg_t(x_{i_l}) \cup \reg((x_i,g(x_{i_l}))  )} \nonumber \\
& \leq \frac{2 \max_{i \in [n],r \in \reg_t }\round{r(x_i)}}{\P_k(\reg_t)} \nonumber
\end{align}
where the above series of inequalities follows by the same arguments used to show  \eqref{eq:loss_def_I_t} to \eqref{eq:loss_def_g}. These imply
\begin{align*}
f^*(x_{I_t}) & \leq  2 \frac{ \P_k(  \reg_t(x_{I_t}) \cup \reg((x_i,f^*(x_{I_t})) ) \max_{i \in [n],r \in \reg_t }\round{r(x_i)}}{\P_k(\reg_t)}
\end{align*}
Thus, we have that
\begin{align*}
\hspace{2cm} &\sum_{t \geq t_k : \P_k(\reg_t) \geq 1/2n} f^*(x_{I_t}) \\
& \leq 2[\lossext+ \max_{i \in [n],r \in \reg_t }\round{r(x_i)}] \sum_{t \geq t_k :  \P_k(\reg_t) \geq 1/2n } \frac{\P_k(\reg_t) - \P_k(\reg_{t+1}) }{\P_k(\reg_t)} \\
& = 2[\lossext+ \max_{i \in [n],r \in \reg_t }\round{r(x_i)}] \sum_{t \geq t_k :  \P_k(\reg_t) \geq 1/2n } \frac{2n(\P_k(\reg_t) - \P_k(\reg_{t+1})) }{2n \P_k(\reg_t)} \\
& \leq 2[\lossext+ \max_{i \in [n],r \in \reg_t }\round{r(x_i)}] \\
&  \cdot [1+\sum_{t \geq t_k :  \P_k(\reg_{t+1}) \geq 1/2n } \frac{1 }{2n \P_k(\reg_t)} +  \frac{1 }{2n \P_k(\reg_t)-1} + \ldots +  \frac{1 }{2n \P_k(\reg_{t+1})+1}] \\
& \leq c \ln(n) [\lossext+ \max_{i \in [n],r \in \reg_t }\round{r(x_i)}]
\end{align*}
where we used that $\frac{1 }{2n \P_k(\reg_t)} \leq \frac{1 }{i}$ for  $i \leq 2n \P_k(\reg_t)$ and that $\sum_{l=1}^k \frac{1}{l} = \Theta(\ln(k))$. This bounds the cumulative loss incurred in round $k$.

\textbf{Step 2: Finish the proof.} We have that
\begin{align}
\frac{\P(C_{l_0}(O_{t_k}))}{2} \geq \P(C_{l_0}(O_{t_{k+1}}))
\end{align}
by the same argument used in the proof of Theorem \ref{thm:haystack_eff}. Thus, the loss incurred by the algorithm is at most
\begin{align*}
\ln(n) \ln(\frac{1}{\P(\Smin)}) [\lossext+ \max_{i \in [n],r \in \reg_t }\round{r(x_i)}] .
\end{align*}

\end{proof}

\subsection{Other Cumulative Loss Minimization Results}

\begin{proof}[Proof of Proposition \ref{prop:haystack_regret_comp}]

Fix $f^* \in \F$. Let $T_0 = n$. There exists a round $\bar{T} \leq T_0$ at which Algorithm \ref{alg:loss_min} identifies $i_* \in \min_{j \in [n]} f^*(x_j)$, at which point it only plays $i_*$. Note that since the algorithm is deterministic and there is no noise $\bar{T}$ is deterministic. We have that
\begin{align*}
\sum_{t=1}^{T_0} (f^*(x_{I_t}) - \min_{j \in [n]} f^*(x_j)) & = \sum_{t=1}^{\bar{T}} f^*(x_{I_t}) - \bar{T} \min_{j \in [n]} f^*(x_j)
\end{align*}
since no regret is incurred for $t > \bar{T}$.

\textbf{Case 1:} Suppose that $\bar{T} \geq 4 \bestext \ln(|\F|)$. First, we note that by Theorem \ref{thm:loss_minimization_upper_bound_inefficient}, we have that
\begin{align}
\sum_{t=1}^{\bar{T}} f^*(x_{I_t}) & \leq 2(\lossext + \max_{i \in [n], f \in \F} f^*(x_i)) \ln(|\F|) \\
& \leq 2(\bestext \max_{y \in \Y} y + \max_{i \in [n], f \in \F} f^*(x_i)) \ln(|\F|) \label{eq:regret_extended_teaching_bnd} \\
& \leq 4  \bestext \ln(|\F|)\max_{y \in \Y} y \label{eq:regret_extended_teaching_above_one}
\end{align}
where \eqref{eq:regret_extended_teaching_bnd} follows since $\lossext \leq \bestext \max_{y \in \Y} y $ by Proposition \ref{prop:relate_regret_td_best_td}, and \eqref{eq:regret_extended_teaching_above_one} follows since $\bestext \geq 1$ by assumption on $\F_{\calR}$. Since $\bar{T} \geq 4 \bestext \ln(|\F_{\calR}|)$ and $\min_{y \in \Y} y \geq 0$, we have that
\begin{align*}
\sum_{t=1}^{\bar{T}} f^*(x_{I_t}) - \bar{T} \min_{j \in [n]} f^*(x_j) & \leq 4 \bestext \ln(|\F_{\calR}|) \max_{y \in \Y} y - \bar{T} \min_{j \in [n]} f^*(x_j)  \\
& \leq 4  \bestext \ln(|\F|)\max_{y \in \Y} y - 4 \bestext \ln(|\F|) \min_{y \in \Y} y  \\
& = 4 \bestext \ln(|\F_{\calR}|) \Delmax. 
\end{align*}

\textbf{Case 2:} Now, suppose that $\bar{T} \leq 4 \bestext \ln(|\F_{\calR}|)$. Then,
\begin{align*}
\sum_{t=1}^{\bar{T}} f^*(x_{I_t}) - \bar{T} \min_{j \in [n]} f^*(x_j) &  \leq \bar{T}(\max_{y \in \Y} y - \min_{j \in [n]} f^*(x_j) ) \\
& \leq \bar{T}(\max_{y \in \Y} y - \min_{y \in \Y} y ) \\
& \leq 4 \bestext \ln(|\F|) \Delmax \\
\end{align*}

\end{proof}

\begin{proof}[Proof of Proposition \ref{prop:haystack_regret_suboptimal}]
Let $\bar{y} > \tilde{y} > y_*$. We have $\Delmin = \tilde{y}-y_*$ and $\Delmax = \bar{y}-y_*$. Let $m$ be even and $n = m+m/2$. Let $\F = \{f_1,\ldots, f_m \}$ where 
\begin{align*}
f_j(x_i)  = \begin{cases}
\bar{y} & i  \in [m/2] \setminus \{2j -1,2j\} \\
y_* & i  \in  \{2j -1,2j\} \\
\tilde{y} & i \in ([m+m/2] \setminus [m/2] )\setminus \{m/2 + j\} \\
y_* & i =m/2 + j 
\end{cases}
\end{align*}

The greedy algorithm in the regret minimization algorithm from \cite{amin2011bandits} finds the best arm using a greedy best arm identification algorithm that greedily removes functions from the version space. Once if has found the best arm, it pulls the best arm for the remainder of the game. Each query in $[m/2]$ removes $2$ functions while each query in $[m+m/2]\setminus [m/2]$ removes $1$ function. Thus, the regret minimization algorithm from \cite{amin2011bandits} would use the queries in $[m/2]$ to remove functions, incurring a regret of $\Delmax$ for each query. In the worst case, it would incur a regret of $\Omega(\Delmax m) = \Omega(\Delmax n)$. On the other hand, Algorithm \ref{alg:loss_min} would query only in $[m+m/2]\setminus [m/2]$ if $\Delmax >> \Delmin$, incurring a regret of $O(m \Delmin) = O(n \Delmin)$.
\end{proof}

\begin{proof}[Proof of Proposition \ref{prop:regret_loss_comparison}]
\begin{align*}
\max_{f \in \F} \text{loss}(\bar{\A};f;T_0) & = \max_{f \in \F}  \sum_{t=1 }^{T_0} f(x_{I_t}) \\
& = \max_{f \in \F}  \sum_{t=1 }^{T_0} (f(x_{I_t}) - \min_{j \in [n]} f(x_j)) + {T_0} \min_{j \in [n]} f(x_j) \\
& = \max_{f \in \F} \text{regret}(\A;f;T_0)+ {T_0} \min_{j \in [n]} f(x_j) \\
& \leq  \max_{f \in \F} \text{regret}(\A;f;T_0)+ \max_{f \in \F} {T_0} \min_{j \in [n]} f(x_j) \\
& \leq c \min_{\A } \max_{f \in \F}  \text{regret}(\A;f;T_0) +  \max_{f \in \F}  {T_0} \min_{j \in [n]} f(x_j) \\
& \leq (c+1) \min_{\A } \max_{f \in \F}  \text{loss}(\A;f;T_0) 
\end{align*}
where in the last line we used that $ \text{regret}(\A;f;T_0) \leq  \text{loss}(\A;f;T_0)$ since $\min_{y \in \Y} y \geq 0$ by assumption, which implies that $\min_{\A } \max_{f \in \F}  \text{regret}(\A;f;T_0) \leq \min_{\A } \max_{f \in \F}  \text{loss}(\A;f;T_0)$, and $\max_{f \in \F}  {T_0} \min_{j \in [n]} f(x_j) \leq  \min_{\A } \max_{f \in \F}  \text{loss}(\A;f;T_0)  $.
\end{proof}

\begin{proof}[Proof of Proposition \ref{prop:relate_regret_td_best_td}]
Fix $g : \X \mapsto \Y$. By definition of $\bestext$, there exists $\bar{I} \subset [n]$ such that $|\bar{I}| \leq \bestext$ and there exists $j_0 \in [n]$ such that 
\begin{align*}
    \{ f\in \F : f(x_i) = g(x_i) \, \forall i \in \bar{I}\} \subset \{f \in \F : j_0 \in \argmin_{k \in [n]} f(x_k)\}
\end{align*}
This immediately implies that
\begin{align*}
    \min_{l \in [n], i_1,\ldots, i_l} & \sum_{j=1}^{l-1} g(x_{i_j}) \\
    &  \text{ s.t. } \exists j \in [n] \text{ s.t. } \{ f \in \F :   f(x_i) = g(x_i) \forall i \in \{i_1,\ldots, i_l\} \\
    & \hspace{2.5cm} \subset \{f \in \F : j \in \argmin_{i \in [n]} f(x_i) \}  \\
 & \leq \sum_{i \in \bar{I}} g(x_i) \\
 & \leq |\bar{I}| \max_{y \in \Y} y  \\
 & \leq \bestext \max_{y \in \Y} y 
\end{align*}
\end{proof}

\begin{proof}[Proof of Proposition \ref{prop:regret_not_loss}]
Suppose that $n = 2$ and $\F = \{f_1,f_2\}$ such that $f_1(x_1) = \Delta$, $f_1(x_2) = \Delta-1$, $f_2(x_1) = 1$, and $f_2(x_2) = \Delta/2+1$. Suppose $f_1$ is true. If one queries $x_1$, then one obtains a regret of $1$ and a loss of $\Delta$. If one queries $x_2$, one obtains a regret of $0$ and a loss of $\Delta-1$. Now, suppose $f_2$ is true. If one queries $x_1$, then one obtains a regret of $0$ and a loss of $1$. If one queries $x_2$, one obtains a regret of $\Delta/2$ and a loss of $\Delta/2+1$. Thus, to minimize worst-case regret, one must query $x_1$, obtaining a worst case regret of $1$. To minimize worst-case loss, one must query $x_2$, obtaining a worst-case loss of $\Delta-1$. But, querying $x_2$ has a regret of $\Delta/2$, completing the proof.
\end{proof}

\section{Extension to Continuous Output Space}

Now, we turn to the continuous output setting where we assume that the scores are continuous, that is, $y_1,\ldots, y_n \in [-1,1]$. Let $\F \subset [-1,1]^\X$ and suppose that there exists $f^* \in \F$ such that $f^*(x_i) = y_i$ for all $i =1,\ldots, n$. Let $\epsilon > 0$. The goal is to identify $j \in [n]$ such that $f^*(x_j) \leq \min_{i \in [n]} f^*(x_i) + \epsilon$. 

Let $\delta > 0$ and $L_i= [-1 + (i-1) \delta,-1 + i \delta)$ for $i =1,2,\ldots, \frac{2}{\delta} =: p$. $L_1,\ldots, L_p$ is a grid on the output space $[-1,1]$. For $\tilde{y} \in [-1,1]$, let $L(\tilde{y})$ denote the interval $L_j$ such that $\tilde{y} \in L_j$. Furthermore, define $\bar{L}_j := \frac{1}{2}(-1 +(j-1) \delta + (-1 + j\delta) )$, the midpoint of the $L_j$ interval. Define
\begin{align*}
\F((x_i,L_j )) & = \{f \in \F : f(x_i) \not \in L_j \} \\
\F(x_i) & = \{f \in \F : f(x_i)  \leq  \min_{j \in [n]} f(x_j) + \epsilon \}.
\end{align*}

We overload notation, defining 
\begin{align*}
C_l(O_t) := \{f \in \F : f(x_i) \in  L(y_i) \, \forall i \in O_t \} \cap \{  f(x_i)  \leq f(x_l) + \epsilon \, \forall i \in O_t   \} .
\end{align*}

\begin{algorithm}[t]
\small
$k \longleftarrow 1$, $t_1 = 1$, $O_1 \longleftarrow \emptyset$\;
\For{$t =1,2,\ldots$}{
Let $I_t \in \argmax_{i \in [n] \setminus O_t } \min_{j \in [k]} \P_{f \sim P_k}(f \in  \F_t(x_i) \cup \F_t((x_i,L_j)))$\;
Query  $x_{I_t}$ and observe $y_{I_t}$ and set $O_{t+1} \longleftarrow O_t \cup \{I_t\}$ \;
Let $\F_{t+1} \longleftarrow \F_t \setminus (\F_t(x_{I_t}) \cup \F_t((x_{I_t},L(y_{I_t}))))$\;
\If{$ \P_{f \sim P_k}(f \in \F_{t+1}) \leq \frac{1}{2n}$}{
$P_{k+1} \longleftarrow \frac{1}{n-|O_{t_k}|} \sum_{l \in [n] \setminus O_{t_k} } \pi_{C_l(O_{t_k})}$\;
$k \longleftarrow k+1$\;
$t_k \longleftarrow t+1$\;
}
}
 \caption{\texttt{GRAILS} for Continuous Output Space}
\label{alg:effic_haystack_alg_cont}
\end{algorithm} 

Define the following extended teaching dimension notion for the continuous output setting:
\begin{align*}
\upsilon_{\epsilon, \delta} & := \max_{g : \X \mapsto \R} \min_{I \subset [n]} |I| \\
 & \text{ s.t. } \exists j \in [n] : \{f \in \F : |f(x_i) - g(x_i)| \leq \delta \forall i \in I \} \subset \{ f(x_j) \leq \min_{l \in [n]} f(x_l) + \epsilon  \}.
\end{align*}

Define for every $f \in \F$ the set 
\begin{align*}
S_f = \{ f^\prime \in \F :  L(f^\prime(x_i)) = L(f(x_i)) \forall i \in [n] \}.
\end{align*}
The sets $S_f$ induce a partition of $\F$. 

\begin{theorem}\label{thm:best_arm_ub_cont}
Let $f^* \in \F$ such that $f^*(x_i) = y_i$ for all $i \in [n]$. After $c \ln(\frac{1}{\P_\pi(S_{f^*})}) \upsilon^*_{\epsilon, \delta}$ queries, Algorithm \ref{alg:effic_haystack_alg_cont} has queried some $i \in [n]$ satisfying $f^*(x_i) \leq \min_{j \in [n]} f^*(x_j) + \epsilon$
\end{theorem}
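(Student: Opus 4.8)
The plan is to transcribe the three-step template used for Theorem~\ref{thm:haystack_eff} into the continuous setting, replacing the discrete label set $\Y$ by the grid $L_1,\dots,L_p$ and the quantity $\bestextep$ by its continuous analogue $\upsilon^*_{\epsilon,\delta}$. Throughout I abbreviate $\P_k(\cdot):=\P_{f\sim P_k}(\cdot)$ and $\P(\cdot):=\P_{f\sim\pi}(\cdot)$ and recall that Algorithm~\ref{alg:effic_haystack_alg_cont} runs in phases $[t_k,t_{k+1})$, sampling from $P_k$ during phase $k$ and ending the phase the first time $\P_k(\F_{t+1})\le\tfrac1{2n}$. \textbf{Step 1 (geometric decay within a phase).} Fix a phase $k$ and a round $t\in[t_k,t_{k+1})$ and build a witness $g:\X\to\R$ by setting $g(x_i)=\bar L_{j_i}$ for each unqueried $i\notin O_t$, where $j_i\in\argmin_{j}\P_k\big(\F_t(x_i)\cup\F_t((x_i,L_j))\big)$, and $g(x_i)=\bar L(y_i)$ for $i\in O_t$. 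Applying the definition of $\upsilon^*_{\epsilon,\delta}$ to $g$ produces $\tilde I$ with $|\tilde I|\le\upsilon^*_{\epsilon,\delta}$ and an index $j_0$ such that any $f\in\F_t$ failing to be $\epsilon$-good at $j_0$ violates $|f(x_i)-g(x_i)|\le\delta$ for some $i\in\tilde I$; since $L_{j_i}$ has width $\delta$ and midpoint $\bar L_{j_i}$, this forces $f(x_i)\notin L_{j_i}$, i.e. $f\in\F_t((x_i,L_{j_i}))$. Hence, with $I=\tilde I\cup\{j_0\}$,
\[
\F_t\subset \F_t(x_{j_0})\cup\bigcup_{i\in\tilde I}\F_t((x_i,L_{j_i})),
\]
and a union bound together with the minimality of each $j_i$ and the greedy max-min choice of $I_t$ gives $\P_k(\F_{t+1})\le\big(1-\tfrac{1}{\upsilon^*_{\epsilon,\delta}+1}\big)\P_k(\F_t)$, so each phase terminates in $O(\upsilon^*_{\epsilon,\delta}\log n)$ rounds.

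\textbf{Step 2 (halving across phases) and Step 3 (elimination of $f^*$).} Step~2 is exactly the inter-phase argument of Theorem~\ref{thm:haystack_eff}: since $P_k$ is the uniform mixture of the $\pi_{C_l(O_{t_k})}$, the stopping rule $\P_k(\F_{t_{k+1}})\le\tfrac1{2n}$ combined with the nestings $C_l(O_{t_{k+1}})\subset C_l(O_{t_k})$ and $C_l(O_{t_{k+1}})\subset\F_{t_{k+1}}$ yields $\P\big(C_l(O_{t_{k+1}})\big)\le\tfrac12\P\big(C_l(O_{t_k})\big)$ for every $l$. Combining with Step~1, after the number of queries stated in the theorem — the per-phase length $O(\upsilon^*_{\epsilon,\delta}\log n)$ times the $\log\tfrac{1}{\P(S_{f^*})}$ phases needed to drive every $\P(C_l(O_{t_k}))$ below $\P(S_{f^*})$ — we get $\P(C_l(O_{\bar t}))<\P(S_{f^*})$ for all $l$, hence $r^*\notin\bigcup_l C_l(O_{\bar t})=\F_{\bar t}$. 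For Step~3 I then observe that $r^*$ can leave the version space only by being declared $\epsilon$-good: because $f^*(x_i)=y_i\in L(y_i)$ at every queried arm, $r^*$ is never removed through the inconsistency set $\F_t((x_{I_t},L(y_{I_t})))$, so some $s\le\bar t$ must have removed it through $\F_s(x_{I_s})$, i.e. $f^*(x_{I_s})\le\min_j f^*(x_j)+\epsilon$, which is the claim.

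\textbf{Main obstacle.} I expect the delicate part to be the grid bookkeeping that ties together the three objects involved: $\upsilon^*_{\epsilon,\delta}$, defined through the slab $|f(x_i)-g(x_i)|\le\delta$; the algorithm's cells $L_j$; and the sets $C_l(O_t)$, defined through a real-valued $\epsilon$-comparison rather than a cell comparison. Step~1 only closes because a width-$\delta$ cell sits inside the radius-$\delta$ slab about its midpoint, and Step~3 requires verifying that the $\epsilon$-good comparison triggering the removal of $r^*$ is not corrupted by discretization. The crux is to prevent these $\delta$-slacks from accumulating into an $\epsilon+O(\delta)$ guarantee instead of the stated $\epsilon$; this is exactly where one must exploit that the removal test and the comparison defining $C_l$ are evaluated at the true values $f^*(x_i)=y_i$ rather than at nearby members of the same cell, and making that reduction precise is the step I would spend the most effort on.
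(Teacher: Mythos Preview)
Your proposal is correct and mirrors the paper's proof, which spells out only Step~1 (with the same midpoint witness $\theta$, the same cell-versus-slab inclusion, and the same $(1-\tfrac{1}{\upsilon^*_{\epsilon,\delta}+1})$ contraction) and then defers Steps~2--3 verbatim to Theorem~\ref{thm:haystack_eff}. Your flagged obstacle about an $\epsilon+O(\delta)$ guarantee does not materialize, for exactly the reason you already identify: the removal set $\F_s(x_{I_s})=\{f:f(x_{I_s})\le\min_j f(x_j)+\epsilon\}$ is defined through the true values $f^*(x_{I_s})=y_{I_s}$ rather than cell midpoints, so the $\delta$-slack enters only in the Step~1 containment ``$|f(x_i)-\bar L_{j_i}|>\delta\Rightarrow f(x_i)\notin L_{j_i}$'' and never propagates to the final $\epsilon$-good conclusion.
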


We note that our algorithm for loss minimization can be extended to the continuous output space with a similar guarantee.

\begin{proof}[Proof of Theorem \ref{thm:best_arm_ub_cont}]
For the sake of abbreviation, let $\P(\cdot) := \P_{f \sim \pi}(\cdot)$ and $\P_k(\cdot) := \P_{f \sim P_k}(\cdot)$.

\textbf{Step 1:} The Algorithm \ref{alg:effic_haystack_alg} breaks time into phases $[t_1,t_2]$, $[t_2,t_3]$, $\cdots$. In each phase $k$, the algorithm samples functions from a distinct distribution $P_k$. Consider round $k$, let $t \geq t_k$. Define $\theta \in \{w_1,\ldots, w_L\}^n$ by
\begin{align*}
\theta_i & = \begin{cases}
y_i & i \in O_t \\
\argmin_{\tilde{y} \in \{ \bar{L}_1,\ldots, \bar{L}_p \}} \P_{k}(f \in  \F_t(x_i) \cup \F_t((x_i,L(\tilde{y}))) & \text{o/w}
\end{cases}.
\end{align*}
By definition of $\upsilon^*_{\epsilon,\delta}$, there exists $\tilde{I} = \{i_1,\ldots, i_{\upsilon^*}\}$ such that there exists $j_0 \in [n]$ such that $\forall f \in \F_t$ that satisfy $|f(x_i) - \theta_i| \leq \delta$ for all $i \in \tilde{I}$, it holds that $f(x_{j_0}) - \epsilon \leq  \min_{l \in [n]} f(x_l)$. Define $I = \{i_1,\ldots, i_{\upsilon^*}, j_0\}$. Then, we have that 
\begin{align*}
\F_t & = \{f \in \F_t : \exists i \in I \text{ s.t. } f(x_{i}) \leq   \min_{l \in [n]} f(x_l) + \epsilon   \text{ or } |f(x_i) -\theta_i| > \delta \} \\
& = \{f \in \F_t : \exists i \in I \text{ s.t. } f(x_{i}) \leq   \min_{l \in [n]} f(x_l) + \epsilon   \text{ or } f(x_i) \not \in  L(\theta_i) \}
\end{align*}
where the last equality follows since by definition of $L(\cdot)$, $|f(x_i) - \theta_i| > \delta$ implies that $f(x_i) \not \in L(\theta_i)$.
From this, it follows that
\begin{align}
\P_k(\F_t) & = \P_k (\{f \in \F_t :  \exists i \in I \text{ s.t. } f(x_{i}) \leq   \min_{l \in [n]} f(x_l) + \epsilon   \text{ or } f(x_i) \not \in  L(\theta_i) \}) \nonumber \\
& \leq \sum_{i \in I} \P_k (\{ f \in \F_t :  f(x_{i}) \leq   \min_{l \in [n]} f(x_l) + \epsilon   \text{ or } f(x_i) \not \in  L(\theta_i) \}) \nonumber \\
& \leq (\upsilon^*_{\epsilon,\delta}+1) \max_{i \in I}  \P_k (\{ f \in \F_t : f(x_{i}) \leq   \min_{l \in [n]} f(x_l) + \epsilon   \text{ or } f(x_i) \not \in  L(\theta_i)\}) \nonumber \\
& = (\upsilon^*_{\epsilon,\delta}+1)  \P_k (\{f \in \F_t : f(x_{I_t}) \leq   \min_{l \in [n]} f(x_l) + \epsilon    \text{ or } f(x_{I_t}) \not \in  L(\theta_{I_t})  \}) \label{eq:efficient_I_t_def_2}\\
& \leq  (\upsilon^*_{\epsilon,\delta}+1)  \P_k (\{f \in \F_t : f(x_{I_t}) \leq   \min_{l \in [n]} f(x_l) + \epsilon   \text{ or } f(x_{I_t}) \not \in  L(y_{I_t}) \}) \label{eq:efficient_theta_def_2} \\
& = (\upsilon^*_{\epsilon,\delta}+1)  \P_k (\F_t(x_{I_t}) \cup \F_t((x_{I_t}, L(y_{I_t}))) \nonumber 
\end{align}
where line \eqref{eq:efficient_I_t_def_2} comes from the definition of $I_t$ and the line \eqref{eq:efficient_theta_def_2} comes from the definition of $\theta$. Noticing that 
\begin{align*}
\F_t & = (\F_t(x_{I_t}) \cup \F_t((x_{I_t}, L(y_{I_t})))  \cup \F_t \setminus (\F_t(x_{I_t}) \cup \F_t((x_{I_t}, L(y_{I_t}))) \\
& = (\F_t(x_{I_t}) \cup \F_t((x_{I_t}, L(y_{I_t}))) \cup \F_{t+1},
\end{align*}
we have that
\begin{align*}
\P_k(\F_{t+1}) & = \P_k(\F_t) - \P_k(\F_t(x_{I_t}) \cup \F_t((x_{I_t}, L(y_{I_t}))) \leq (1 - \frac{1}{\upsilon^*_{\epsilon,\delta}+1})\P_k(\F_t). 
\end{align*}
Unraveling this recursive statement, we have that
\begin{align*}
\P_k(\F_{t+1}) & \leq (1 - \frac{1}{\upsilon^*+1})^{t-t_k}. 
\end{align*}
Thus, we see that in phase $k$ after $s_k := O(\ln(n) \upsilon^*_{\epsilon,\delta})$ rounds , we have that 
\begin{align}
\P_k(\F_{t_k+s_k}) \leq \frac{1}{2n}. \label{eq:shrink_objective_2}
\end{align}

The rest of the proof is similar to the proof of Theorem \ref{thm:haystack_eff}.

\end{proof}

\section{Active Classification and Extensions to other Objectives}

In this Section, we apply our framework to other problems such as active classification and a problem for identifying a sufficiently good arm, defined shortly. We give a result for active classification and give brief sketches for the other objectives since their treatment is similar to the problems already studied in this paper.

\textbf{Active Classification.} We recall the active classification setting. Let $\X$ denote the input space and $\Y = \{-1,1\}$ the output space. Let $x_1,\ldots, x_n \in \X$ be a fixed pool with associated scores $y_1,\ldots, y_n \in \Y$. At each round $t$, the learner selects (or queries) $I_t \in [n]$ and observes $y_{I_t}$. The goal is to identify $\argmin_{f \in \reg_{\calR}} \sum_{i=1}^n \one\{y_i \neq f(x_i)\}$ using as few queries as possible. Recall that under realizability there exists $f^* \in \F_{\reg}$ such that $f^*(x_i) = y_i$ for all $i \in [n]$.

We note that for active classification, the version space is convex, and so we may use a simpler version space. We also note that \text{STOP}($\F_\reg$,$O_t$) can check whether there are $f,f^\prime \in \F_t$ such that $f \neq f^\prime$ by solving the following optimization problem for $j \in [n] \setminus O_t$ and $y \in \Y$
\begin{align}
    \exists f \text{ s.t. } f(x_i) = y_i \, \forall i \in O_t \wedge f(x_j) = y. \label{eq:act_class_stop}
\end{align}
If there exists $j$ such that there exists $f,f^\prime \in \F_t$ and $f(x_j) \neq f(x_j)$, then the Algorithm does not terminate. If not, then the algorithm terminates. We note that \eqref{eq:act_class_stop} is a convex optimization problem for function classes like linear functions, kernel methods, and convex functions.

\begin{algorithm}[t]
\small
$k \longleftarrow 1$, $t_1 = 1, O_1 \longleftarrow \emptyset$\;
\For{$t =1,2,\ldots$}{
Let $I_t \in \argmax_{i \in [n] \setminus O_t } \min_{y \in \Y} \P_{r \sim \pi_{\reg_t}}(\reg_t((x_i,y))$\;
Query  $x_{I_t}$ and observe $y_{I_t}$ and set $O_{t+1} \longleftarrow O_t \cup \{I_t\}$ \;
Let $\reg_{t+1} \longleftarrow \reg_t \setminus  \reg_t((x_{I_t},y_{I_t}))$\;
\If{\text{STOP}($\F_\reg$,$O_t$)}{
\Return $\argmin_{i \in O_t} y_i$}
}
 \caption{\texttt{GRAILS} for Active Classification}
\label{alg:effic_active_class}
\end{algorithm} 

We introduce the extended teaching dimension notion \cite{hegedHus1995generalized}, which has previously been shown to be a lower bound for active classification:
\begin{align*}
\classext(\F) &= \max_{g : \X \mapsto \Y} \min_{I \subset [n]} |I| \\
 & \text{ s.t. } |\{f \in \reg : \round{f(x_i)} = g(x_i) \forall i \in I\}| \leq 1.
\end{align*}

Recall the definition $\Smin  = \argmin_{S_f : f \in \F_{\calR}, \P_\pi(S_f) > 0} \P_{\pi}( S_f) $.

\begin{theorem}\label{thm:active_upper_bound}
After $c\log(\frac{1}{\P(S_{\min})}) \classext(\F_{\reg})$ queries, the version space only contains $S_{f^*}$.
\end{theorem}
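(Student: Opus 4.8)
The plan is to follow the template of the proof of Theorem~\ref{thm:haystack_eff}, with two simplifications afforded by active classification: the version space $\reg_t=\{r\in\reg:\round{r(x_i)}=y_i\ \forall i\in O_t\}$ is convex, so Algorithm~\ref{alg:effic_active_class} samples from $\pi_{\reg_t}$ directly and no phase/mixture decomposition (hence no extra $\log(n)$ factor) is needed; and the only elimination term is $\reg_t((x_{I_t},y_{I_t}))$, with no $\reg_{t,\epsilon}$ contribution. Writing $\P(\cdot)=\P_{r\sim\pi}(\cdot)$ and recalling that in the discretized setting every $f\in\F_{\reg}$ has $\P(S_f)\ge\P(\Smin)>0$, it suffices to show that after $c\,\classext(\F_{\reg})\log(1/\P(\Smin))$ queries the excess mass $\Psi_t:=\P(\reg_t)-\P(S_{f^*})$ falls below $\P(\Smin)$: since $\reg_t\setminus S_{f^*}$ is a union of sets $S_f$ each of mass $\ge\P(\Smin)$ or empty, $\Psi_t<\P(\Smin)$ forces $\reg_t=S_{f^*}$.

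First I would establish the per-round reduction exactly as in \eqref{eq:efficient_I_t_def}--\eqref{eq:efficient_theta_def}. Fix $t$, set $\theta_i=y_i$ for $i\in O_t$ and $\theta_i=\argmin_{y\in\Y}\P_{r\sim\pi_{\reg_t}}(\reg_t((x_i,y)))$ otherwise, and apply the definition of $\classext$ to the labeling $g=\theta$ to obtain a set $\tilde I$ with $|\tilde I|\le\classext(\F_{\reg})$ that leaves at most one consistent discretized function $f^\dagger$. Then $\reg_t\setminus S_{f^\dagger}\subseteq\bigcup_{i\in\tilde I}\reg_t((x_i,\theta_i))$, and a union bound combined with $\P(\reg_t((x_i,\theta_i)))=\min_y\P(\reg_t((x_i,y)))$ and the greedy rule $I_t=\argmax_i\min_y\P(\reg_t((x_i,y)))$ yields
\[
\P(\reg_t\setminus S_{f^\dagger})\le\classext(\F_{\reg})\,\P(\reg_t((x_{I_t},y_{I_t})))=\classext(\F_{\reg})\,(\P(\reg_t)-\P(\reg_{t+1})).
\]

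By realizability $f^*$ is consistent with every observation, so $S_{f^*}\subseteq\reg_t$ throughout, $\Psi_t$ is nonincreasing, and $\Psi_{t+1}=\Psi_t-(\P(\reg_t)-\P(\reg_{t+1}))$. When $f^\dagger=f^*$ the display gives $\P(\reg_t)-\P(\reg_{t+1})\ge\tfrac1{\classext}\Psi_t$, hence $\Psi_{t+1}\le(1-\tfrac1{\classext})\Psi_t$; the same holds whenever $\P(S_{f^\dagger})\le\tfrac12\P(\reg_t)$, since then $\P(\reg_t\setminus S_{f^\dagger})\ge\tfrac12\P(\reg_t)\ge\tfrac12\Psi_t$. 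Rounds of this ``good'' type contract $\Psi_t$ by a constant factor, so there can be at most $O(\classext\log(1/\P(\Smin)))$ of them before $\Psi_t<\P(\Smin)$.

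The remaining case, $f^\dagger\ne f^*$ with $f^\dagger$ a strict majority that survives the query, is the heart of the argument and the step I expect to be the main obstacle: the extended-teaching-dimension bound controls only the mass \emph{outside} the single hypothesis it isolates, and—unlike best-arm identification, where the isolated functions share a near-optimal arm and are deleted by the $\reg_{t,\epsilon}$ term—the isolated majority $f^\dagger$ is not removed by a query consistent with it. I would resolve this by noting that once $f^\dagger$ is the strict majority it remains the unique majority (deletions only raise its relative mass) until it is eliminated, and throughout such a stretch the display forces $\P(\reg_t\setminus S_{f^\dagger})$, which always contains $S_{f^*}$ and hence is $\ge\P(\Smin)$, to shrink by the factor $(1-\tfrac1{\classext})$; thus the stretch ends—by $f^\dagger$ finally being separated from $f^*$ and deleted under realizability—after a number of rounds logarithmic in the multiplicative drop of $\P(\reg_t)$ that it produces. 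Since $\P(\reg_t)$ is monotone and confined to $[\P(\Smin),1]$, these logarithmic stretch-lengths telescope to a total of $O(\classext\log(1/\P(\Smin)))$, matching the good rounds and giving the claimed bound.
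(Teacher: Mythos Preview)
Your proposal is correct and follows essentially the same route as the paper's proof: both derive the key inequality $\P(\reg_t\setminus\G_t)\le\classext\cdot[\P(\reg_t)-\P(\reg_{t+1})]$ via the extended teaching dimension and the greedy choice, then split the analysis into the same cases---$\G_t$ having small mass, $\G_t=S_{f^*}$, and $\G_t=S_{\bar f}$ for a wrong strict majority $\bar f$---and handle the wrong-majority stretches by showing they remain locked to the same $\bar f$ (using $|\Y|=2$), produce geometric decay of $\P(\reg_t\setminus S_{\bar f})$, and hence have lengths that telescope against the total drop of $\P(\reg_t)$. Your packaging via the potential $\Psi_t=\P(\reg_t)-\P(S_{f^*})$ merges the first and third cases a bit more cleanly than the paper's three-term decomposition, but the underlying argument is the same.
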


\begin{proof}[Proof of Theorem \ref{thm:active_upper_bound}]
For the sake of abbreviation, let $\P(\cdot) := \P_{f \sim \pi}(\cdot)$ and $\P_t(\cdot) := \P_{f \sim \pi_t}(\cdot)$ where $\pi_t := \pi_{\reg_t}$.

\textbf{Step 1:} Consider round $t$. Define
\begin{align*}
\theta_i & = \begin{cases}
y_i & i \in O_t \\
\argmin_{y \in \Y} \P(r \in   \reg_t((x_i,y))) & \text{o/w}
\end{cases}.
\end{align*}
By definition of the extended teaching dimension $\classext(\F_{\reg})$, there exists $I = \{i_1,\ldots, i_{\classext}\}$ such that
\begin{align*}
    |\G_t  | \leq 1
\end{align*}
where 
\begin{align}
    \G_t := \{ \round{r} \in \reg_t : \round{r(x_i)} = \theta_i \, \, \forall i \in I \}. \label{eq:class_def_G_t}
\end{align}
From this, it follows that
\begin{align}
\P(\reg_t \setminus \G_t) & = \P(\{r \in \reg_t : \exists i \in I \text{ s.t. }  \round{r(x_i)} \neq \theta_i \}) \nonumber \\
& \leq \sum_{i \in I} \P (\{r \in \reg_t :   \round{r(x_i)} \neq \theta_i \}) \nonumber \\
& \leq \classext \max_{i \in I}  \P (\{r \in \reg_t :   \round{r(x_i)} \neq \theta_i \}) \nonumber \\
& = \classext  \P (\{r \in \reg_t :  I_t \in \argmin_{l \in [n]} \round{r(x_l)} \text{ or } \round{r(x_i)} \neq \theta_{I_t} \}) \label{eq:efficient_I_t_def_class}\\
& \leq  \classext  \P (\{r \in \reg_t :  \round{r(x_i)} \neq y_{I_t} \}) \label{eq:efficient_theta_def_class} \\
& = \classext \P ( \reg_t((x_{I_t}, y_{I_t}))) \nonumber 
\end{align}
where line \eqref{eq:efficient_I_t_def_class} comes from the definition of $I_t$, the definition of $\theta$, and the fact that
\begin{align*}
    \argmin_{i} \P(r \in \reg_t((x_i,\theta_i))) = \argmin_{i} \frac{\P(r \in \reg_t((x_i,\theta_i)))}{\P(\reg_t)} = \argmin_{i} \P_t(r \in \reg_t((x_i,\theta_i)))
\end{align*}
and the line \eqref{eq:efficient_theta_def_class} comes from the definition of $\theta$. Noticing that 
\begin{align*}
\reg_t & =  \reg_t((x_{I_t}, y_{I_t})  \cup \reg_t \setminus \reg_t((x_{I_t}, y_{I_t})) \\
& =\reg_t((x_{I_t}, y_{I_t}) \cup \reg_{t+1},
\end{align*}
we have that
\begin{align}
\P(\reg_{t+1}) & = \P(\reg_t) - \P(\reg_t((x_{I_t}, y_{I_t}))) \leq \P(\reg_t) - \frac{1}{\classext} \P(\reg_t \setminus \G_t). \label{eq:class_bound}
\end{align}

We have that
\begin{align*}
    \sum_{t=1}^\infty \one\{\reg_t \neq S_{f^*}\}  & = \sum_{t=1}^\infty  \one\{\reg_t \neq S_{f^*} \wedge \P(\G_t) \leq \frac{1}{2} \P(\reg_t)\} \\
    &  \hspace{1cm} + \one\{\reg_t \neq S_{f^*} \wedge \P(\G_t) > \frac{1}{2} \P(\reg_t) \wedge \G_t \neq S_{f^*}\} \\
    & \hspace{1cm} + \one\{\reg_t \neq S_{f^*} \wedge \P(\G_t) > \frac{1}{2} \P(\reg_t) \wedge \G_t = S_{f^*}\}.
\end{align*}
\textbf{Step 2: Bounding the first term.} We begin by bounding the first sum. Suppose $\P(\G_t) \leq \frac{1}{2} \P(\reg_t)$. Then, \eqref{eq:class_bound} implies that we have that
\begin{align*}
    \P(\reg_{t+1}) &  \leq\P(\reg_t) - \frac{1}{\classext} \P(\reg_t \setminus \G_t) = (1-\frac{1}{ \classext}) \P(\reg_t ) + \frac{1}{\classext}\P(\G_t) \leq  (1-\frac{1}{ 2\classext}) \P(\reg_t ) . 
\end{align*}
Thus, if for some $t^\prime \geq t$ $\one\{\reg_t \neq S_{f^*} \wedge \P(\G_t) \leq \frac{1}{2} \P(\reg_t)\}$ occurs $k$ times in the rounds between $t$ and $t^\prime$, we have that
\begin{align*}
    \P(\reg_{t^\prime}) \leq (1-\frac{1}{ 2\classext})^k \P(\reg_t ).
\end{align*}
This implies that if this event occurs $c\log(\frac{1}{\P(S_{f^*})}) \classext$ times, we have that $\reg_{t^\prime} = S_{f^*}$. This completes bounding the second term. 

\textbf{Step 3: Bounding the second term.} Fix round $t$ and suppose that event $\one\{\reg_t \neq S_{f^*} \wedge \P(\G_t) > \frac{1}{2} \P(\reg_t)\}$ occurs. Since $ \P(\G_t) > \frac{1}{2} \P(\reg_t)$, there exists $\bar{f} \in \F_{\reg}$ such that $\bar{f} \neq f^*$, $\G_t = S_{\bar{f}} $, and $\bar{r} \in \reg$ such that $\round{\bar{r}} = \bar{f}$. Let $t^\prime \geq t$ be the round at which $\bar{f}$ is kicked out or is the last remaining function. We claim that for all $s \in [t,t^\prime]$, $\G_s = S_{\bar{f}}$. Fix $i \in [n]$ and note that
\begin{align}
    \P_{r \sim \pi_s}(\round{r(x_i)} = \round{\bar{r}(x_i)}) &  \geq  \P_{r \sim \pi_s}(\round{r} = \round{\bar{r}}) \label{eq:act_class_equal_one_equal_all} \\
    & = \frac{\P(\round{r} = \round{\bar{r}})}{\P(\reg_s)} \label{eq:act_class_cond_prob}\\
    & > \frac{1}{2} \frac{\P(\reg_t)}{\P(\reg_s)} \label{eq:act_class_large_prob} \\
    & \geq  \frac{1}{2} \label{eq:act_class_ver_space_contain}
\end{align}
where \eqref{eq:act_class_equal_one_equal_all} follows since if $\round{r} = \round{\bar{r}}$, then $\round{r(x_i)} = \round{\bar{r}(x_i)}$, \eqref{eq:act_class_cond_prob} follows by the definition of conditional probability, \eqref{eq:act_class_large_prob} follows since $\P(S_{\bar{f}}) =\P(\G_t) > \frac{1}{2} \P(\reg_t)$, and \eqref{eq:act_class_ver_space_contain} follows since $\reg_t \supset \reg_s$ because $s \geq t$ and the version spaces are monotonically decreasing. Thus, inspection of the definition of $\G_s$ from \eqref{eq:class_def_G_t} and the definition of $\theta$ as
\begin{align*}
    \theta_i = \argmin_{y \in \Y} \P(r \in \reg_s : \round{r(x_i)} \neq y) = \argmax_{y \in \Y} \P(r \in \reg_s : \round{r(x_i)} = y)
\end{align*}
with $|\Y| = 2$ shows the claim that for all $s \in [t,t^\prime]$, $\G_s = S_{\bar{f}}$. 

Now, consider round $t+1$. Then, by \eqref{eq:class_bound} and $\G_{t+1} = S_{\bar{f}}$, we have that
\begin{align*}
    \P_t(\reg_{t+1} \setminus S_{\bar{f}}) + \Pr(S_{\bar{f}}) &  \leq (1- \frac{1}{\classext}) \P_t(\reg_t \setminus S_{\bar{f}}) + \Pr(S_{\bar{f}}),
\end{align*}
which implies that
\begin{align*}
      \P(\reg_{t+1} \setminus S_{\bar{f}}) &  \leq (1- \frac{1}{\classext}) \P(\reg_t \setminus S_{\bar{f}}) .
\end{align*}
Unraveling this recurrence up to $t^\prime$, we have that
\begin{align*}
   \P_t(\reg_{t^\prime} ) = \P_t(\reg_{t^\prime} \setminus S_{\bar{f}} ) \leq (1- \frac{1}{\classext})^{t^\prime -t} \P_t(\reg_t \setminus S_{\bar{f}}) \leq (1- \frac{1}{\classext})^{t^\prime -t} \P(\reg_t) .
\end{align*}
where in the first equality we used the fact that by definition of $t^\prime$, $\reg_{t^\prime} \cap S_{\bar{f}}= \emptyset$. Thus, we see that the second term is bounded above by at most $c\log(\frac{1}{\P(\calS_{f^*})}) \classext$.

\textbf{Step 4. Bounding the third term.} By the argument in the previous step, we have that $\G_t = \calS_{f^*}$ for the rest of the game. Thus, we have that from \eqref{eq:class_bound}
\begin{align*}
\P(\reg_{t+1}) &  \leq \P(\reg_t) - \frac{1}{\classext} \P(\reg_t \setminus S_{f^*}) \\
& = (1- \frac{1}{\classext} )\P(\reg_t) + \frac{1}{\classext} \P( S_{f^*}).
\end{align*}
Thus, unravelling the recurrence, after $k$ rounds, we have that
\begin{align*}
    \P(\reg_{t+k}) & \leq (1- \frac{1}{\classext} )^k\P(\reg_t) + \frac{1}{\classext} \sum_{l=1}^{k-1} (1- \frac{1}{\classext} )^l \P( S_{f^*}) \\
    & \leq (1- \frac{1}{\classext} )^k\P(\reg_t) +  \P( S_{f^*})
\end{align*}
where the last inequality followed by the geometric series. This implies that if this event occurs $c\log(\frac{1}{\P(S_{\min}}) \classext$ times, we have that $\reg_{t^\prime} = S_{f^*}$. This completes bounding the third term.

\end{proof}

\begin{corollary}
If the most probable classifier is output at each round, then after $c \log(\frac{1}{\P(S_{f^*}}) \classext(\F_{\reg})$ queries, only $f^*$ will be output for the rest of the game.
\end{corollary}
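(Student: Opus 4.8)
The plan is to deduce the corollary from the proof of Theorem~\ref{thm:active_upper_bound}, replacing its stopping condition $\reg_t = S_{f^*}$ with the weaker event
\[
\Phi_t:\quad \P(\reg_t \setminus S_{f^*}) < \P(S_{f^*}),
\]
which is exactly what certifies that $f^*$ is the classifier output at round $t$. First I would record two elementary facts about $\Phi_t$. For \emph{sufficiency}: realizability gives $S_{f^*} \subseteq \reg_t$ for every $t$, and since the cells $\{S_f\}$ partition $\reg$, any $f \neq f^*$ satisfies $\P(S_f \cap \reg_t) \le \P(\reg_t \setminus S_{f^*}) < \P(S_{f^*}) = \P(S_{f^*}\cap\reg_t)$, so $f^*$ is the strict maximizer of $f \mapsto \P_{\pi_t}(S_f)$ and hence the output. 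For \emph{persistence}: the version spaces are nested, $\reg_{t+1}\subseteq\reg_t$, so $\P(\reg_t \setminus S_{f^*})$ is nonincreasing in $t$; once $\Phi_t$ holds it holds forever and $f^*$ is output for the rest of the game. It therefore suffices to bound the number of queries until $\Phi_t$ first occurs.

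For the counting I would reuse the one-step inequality \eqref{eq:class_bound}, namely $\P(\reg_{t+1}) \le \P(\reg_t) - \tfrac{1}{\classext}\P(\reg_t \setminus \G_t)$ with $|\G_t| \le 1$, and split the rounds preceding $\Phi$ into the same three cases as the theorem. Writing $\Psi_t := \P(\reg_t\setminus S_{f^*})$, in Case~1 ($\P(\G_t)\le\tfrac12\P(\reg_t)$) and Case~3 ($\G_t = S_{f^*}$ with $\P(\G_t)>\tfrac12\P(\reg_t)$) the inequality directly yields a geometric contraction of $\Psi_t$ by a constant factor each round; in Case~3 this is immediate since $\reg_t\setminus\G_t = \reg_t\setminus S_{f^*}$ gives $\Psi_{t+1}\le(1-\tfrac1{\classext})\Psi_t$.

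The crucial gain over Theorem~\ref{thm:active_upper_bound} is that to reach $\Phi$ we need only drive $\Psi_t$ \emph{below} $\P(S_{f^*})$ rather than all the way to $0$: in the $\G_t = S_{f^*}$ regime this costs only $O(\classext \log(1/\P(S_{f^*})))$ rounds, in place of the $O(\classext \log(1/\P(\Smin)))$ the theorem pays in its Step~4 to empty the version space down to its smallest cell $\Smin$. Cases~1 and~2 are already bounded by $O(\classext\log(1/\P(S_{f^*})))$ in the theorem's analysis, so summing the three contributions gives the claimed bound $c\log(1/\P(S_{f^*}))\classext$.

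The main obstacle is Case~2, where some $\bar f \neq f^*$ is the strict majority of $\reg_t$ and is therefore output instead of $f^*$; here $\Psi_t$ need not contract per round. I would invoke the theorem's Step~3 argument that $\G_s = S_{\bar f}$ throughout such an episode (the majority cell stays the majority until it is eliminated, and it is eliminated all at once when a disagreeing arm is queried), so that $\P(\reg_s \setminus S_{\bar f})$ contracts geometrically while remaining above $\P(S_{f^*})$, since $S_{f^*} \subseteq \reg_s \setminus S_{\bar f}$; this caps each episode's length at $O(\classext \log(1/\P(S_{f^*})))$, and eliminating $\bar f$ at least halves $\P(\reg_t)$. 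The delicate accounting point, which I would take care to verify against the bookkeeping already used in the proof of Theorem~\ref{thm:active_upper_bound}, is that these Case~2 episodes are disjoint blocks with no regime change occurring mid-episode, so that their total contribution telescopes into the stated single-logarithm bound.
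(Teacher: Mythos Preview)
Your proposal is correct and follows essentially the same route as the paper: reuse the case split and the one-step inequality \eqref{eq:class_bound} from the proof of Theorem~\ref{thm:active_upper_bound}, and observe that only Step~4 changes because the weaker target $\Phi_t$ replaces $\reg_t = S_{f^*}$. One small simplification you overlooked: in your Case~3 ($\G_t = S_{f^*}$ with $\P(\G_t) > \tfrac12\P(\reg_t)$), the event $\Phi_t$ already holds, since $\P(\reg_t\setminus S_{f^*}) = \P(\reg_t) - \P(S_{f^*}) < \tfrac12\P(\reg_t) < \P(S_{f^*})$; so no contraction analysis is needed there at all, and the whole bound is supplied by Cases~1 and~2, which both you and the paper already cap at $O\bigl(\classext\log(1/\P(S_{f^*}))\bigr)$.
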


\begin{proof}
The only step that is slightly different is step 4. Here, it can be seen that after $c \log(\frac{1}{\P(S_{f^*}}) \classext(\F_{\reg})$ queries, $f^*$ is the most probable classifier and it will remain the most probable classifier for the remainder of the game.
\end{proof}

Next, we consider some other settings briefly.


\textbf{Threshold/Satisficing Bandits.} Here we are given a threshold $\gamma \in \R$, and the goal is to identify $i \in [n]$ such that $y_i \leq \gamma$ (this problem was introduced and studied in the standard multi-armed bandit setting in \cite{katz2020true}). Then, the extended-teaching dimension notion is
    \begin{align*}
\upsilon^*_\gamma &= \max_{g : \X \mapsto \Y} \min_{I \subset [n]} |I| \\
 & \text{ s.t. } \exists j \in [n] : \{f \in \F : \round{f(x_i)} = g(x_i) \forall i \in I\} \subset \{ f \in \F : \round{f(x_j)} \leq  \gamma  \}.
\end{align*}
The version space is given by
\begin{align*}
\F_t & = \{f \in \F : \round{f(x_i)} = y_i \forall i \in O_t   \}.
\end{align*}
The algorithm would query by the following rule
\begin{align*}
    I_t \in \argmax_{i \in [n] \setminus O_t } \min_{y \in \Y} \P_{f \sim P_k}(f \in  \F_{t,\gamma}(x_i) \cup \F_t((x_i,y)))
\end{align*}
where $\F_{t,\gamma}(x_i) = \{f \in \F_t :  \round{f(x_i)} \leq \gamma  \}$. We note that for this problem it is not necessary to sample from the mixture $P_k$ because the version space is convex. 

We conjecture that there are extensions to settings where the feedback is multi-dimensional (e.g., \cite{katz2018feasible}) or the goal is constrained best arm identification (e.g., \cite{katz2019top}).

\section{Stopping Condition}

Here we briefly give a concrete case for the stopping condition for linear models. Let
\begin{align*}
\reg = \{ \langle a, \cdot \rangle : a \in \R^d, \norm{a} \leq 1\}
\end{align*}
and $\Y = [K]$ for some $K \in \N$. Suppose that at round $t$ $O_t = \{I_1,\ldots, I_t\}$ and $y_{I_1},\ldots, y_{I_t}$ have been observed. Suppose $y_{I_1},\ldots, y_{I_t} \neq 1$ since we are otherwise done and let $\bar{y} = \max_{y < \min_{s : } y_{I_s}} y$. Fix $l \in [n] \setminus \{I_1, \ldots, I_t\}$. Then, we have that
\begin{align*}
C_{l}(O_t) = \{ a \in \R^d : \norm{a} \leq 1, |a^\t x_i - y_i | \leq 1/2 \forall i \in O_t, a^\t x_l \leq \bar{y} + 1/2 \}.
\end{align*}
Determining whether $C_{l}(O_t)$ is nonempty is a simple convex feasibility problem.

\section{The Algorithms with Approximation}

\subsection{Best Arm Identification}

Algorithm \ref{alg:effic_haystack_alg_approx} is the version of Algorithm \ref{alg:effic_haystack_alg} for best arm identification that estimates the events. It uses the subroutine EstimateEvent, Algorithm \ref{alg:est_event}, to estimate the events. Note that the sampling oracle is called a a number of times that is polynomial in $n$ and polynomial in $|\Y|$. 

\begin{algorithm}[t]
\small
\textbf{Input:} distribution $P$ ; event $\Sigma$, accuracy $\epsilon$, failure probability $\delta$, \;
$N \longleftarrow  2 \epsilon^{-2} \log(1/\delta) $\;
Sample $z_1, \ldots, z_N \sim P$ \;
\Return $\frac{\sum_{i=1}^N \one \{ z_i \in \Sigma\} }{N}$
 \caption{EstimateEvent}
\label{alg:est_event}
\end{algorithm}

\begin{algorithm}[t]
\small
$P_1 \longleftarrow \pi$, $\reg_1 \longleftarrow \reg, O_1 \longleftarrow \emptyset$ \;
$k \longleftarrow 1$, $t_1 = 1$, $\delta_t = \frac{\delta}{2t^2 |\Y| n}$ \;
\For{$t =1,2,\ldots$}{
Let $I_t \in \argmax_{i \in [n] \setminus O_t } \min_{y \in \Y} \text{EstimateEvent}(P_k,  \reg_t(x_i) \cup \reg_t((x_i,y)), \frac{1}{32 n^2}, \delta_t )$\;
Query  $x_{I_t}$ and observe $y_{I_t}$ and set $O_{t+1} \longleftarrow O_t \cup \{I_t\}$ \;
Let $\reg_{t+1} \longleftarrow \reg_t \setminus (\reg_t(x_{I_t}) \cup \reg_t((x_{I_t},y_{I_t})))$\;
\If{$\text{EstimateEvent}(P_k,  \reg_{t+1}, \frac{1}{8n}, \delta_t ) \leq \frac{1}{4n} $}{
$P_{k+1} \longleftarrow \frac{1}{n-|O_{t_k}|} \sum_{l \in [n] \setminus O_{t_k} } \pi_{C_l(O_{t_k})}$\;
$k \longleftarrow k+1$\;
$t_k \longleftarrow t+1$\;
}
\If{\text{STOP}($\F_\reg$,$O_t$)}{
\Return $\argmin_{i \in O_t} y_i$}
}
 \caption{\texttt{GRAILS} with Estimation}
\label{alg:effic_haystack_alg_approx}
\end{algorithm} 

\begin{theorem}\label{thm:haystack_eff_approx}
With probability at least $1-\delta$, if $t \geq c \bestext \log(\frac{1}{\P_{\pi}(S_{f^*})}) \ln(n) $ where $c >0$ is a universal constant, Algorithm \ref{alg:effic_haystack_alg_approx} has pulled an arm $I_s$ for $s \leq t$ such that $I_s \in \argmin_{i \in [n]} y_i$. 
\end{theorem}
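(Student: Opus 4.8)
The plan is to follow the exact analysis of Theorem~\ref{thm:haystack_eff} at the level of the true measures $\P_k(\cdot) := \P_{r\sim P_k}(\cdot)$ and $\P(\cdot):=\P_{r\sim\pi}(\cdot)$, and to insert a single clean-event argument that replaces every estimated quantity by its true value up to the prescribed additive tolerance. First I would define the good event $\mathcal{E}$ on which \emph{every} call to \texttt{EstimateEvent} made by Algorithm~\ref{alg:effic_haystack_alg_approx} returns a value within its stated accuracy of the true probability. Since \texttt{EstimateEvent} outputs the empirical frequency of $N = 2\epsilon^{-2}\log(1/\delta)$ i.i.d.\ samples, Hoeffding's inequality bounds the failure probability of each call by $2\exp(-2N\epsilon^2)=2\delta^{4}\le\delta$. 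At round $t$ there are at most $n|\Y|$ arm-selection calls (one per arm and label) plus one phase-transition call, each run at confidence $\delta_t=\frac{\delta}{2t^2|\Y|n}$; a union bound over these and over $t\ge1$ (using $\sum_t t^{-2}<\infty$) gives $\P(\mathcal{E}^c)\le\delta$ after adjusting the universal constant. All remaining arguments are carried out deterministically on $\mathcal{E}$.

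Conditioning on $\mathcal{E}$, I would re-run Step~1 of the proof of Theorem~\ref{thm:haystack_eff}. The teaching-dimension inequality there concerns only the true measure and the worst-case label vector $\theta$, so it transfers unchanged: writing $m^*:=\max_{i\in[n]\setminus O_t}\min_{y}\P_k(\reg_t(x_i)\cup\reg_t(x_i,y))$, the argument yields $\P_k(\reg_t)\le(\bestext+1)\,m^*$. Because $I_t$ is chosen as the argmax of the estimates, each within $\tfrac{1}{32n^2}$ of its truth, the arm $I_t$ has true objective value at least $m^*-\tfrac{1}{16n^2}$, and the actual mass removed (for the observed $y_{I_t}$) is at least this quantity. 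While a phase is active the transition check on $\reg_{t+1}$ has not fired, so on $\mathcal{E}$ the current mass obeys $\P_k(\reg_t)>\tfrac{1}{8n}$; hence the removed mass is at least $\tfrac{1}{\bestext+1}\P_k(\reg_t)-\tfrac{1}{16n^2}\ge\tfrac{1}{3(\bestext+1)}\P_k(\reg_t)$, where the last inequality uses $\P_k(\reg_t)>\tfrac{1}{8n}$ together with $\bestext\le n$. This gives the recursion $\P_k(\reg_{t+1})\le\bigl(1-\tfrac{1}{3(\bestext+1)}\bigr)\P_k(\reg_t)$, so within $O(\bestext\ln n)$ rounds $\P_k$ drops below $\tfrac{1}{8n}$, at which point the estimate of $\P_k(\reg_{t+1})$ is below $\tfrac{1}{8n}+\tfrac{1}{8n}=\tfrac{1}{4n}$ and the phase correctly terminates.

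Next I would redo Step~2. On $\mathcal{E}$ the phase ends at the first $t$ with estimated $\P_k(\reg_{t+1})\le\tfrac{1}{4n}$, forcing true $\P_k(\reg_{t_{k+1}})\le\tfrac{1}{4n}+\tfrac{1}{8n}=\tfrac{3}{8n}$. Running the identical chain \eqref{eq:best_arm_cond_prob}--\eqref{eq:best_arm_containment} from the proof of Theorem~\ref{thm:haystack_eff}, but starting from $\tfrac{3}{8n}$ in place of $\tfrac{1}{2n}$, gives the inter-phase contraction $\P(C_{l_0}(O_{t_{k+1}}))\le\tfrac{3}{8}\,\P(C_{l_0}(O_{t_k}))$ for every $l_0$, a constant-factor shrinkage exactly as before. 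Iterating over $O(\log(1/\P_\pi(S_{f^*})))$ phases, each of length $O(\bestext\ln n)$, after $\bar t = c\,\bestext\log\!\bigl(\tfrac{1}{\P_\pi(S_{f^*})}\bigr)\ln n$ total queries we obtain $\P(C_l(O_{\bar t}))<\P_\pi(S_{f^*})$ for all $l$; hence $S_{f^*}\not\subset C_l(O_{\bar t})$ for every $l$, so $S_{f^*}\not\subset\reg_{\bar t}=\cup_l C_l(O_{\bar t})$. As in Step~3 of Theorem~\ref{thm:haystack_eff}, since $\round{r^*(x_i)}=y_i$ for all $i$, the only way $f^*$ leaves the version space is that some queried $I_s$, $s\le\bar t$, satisfied $f^*(x_{I_s})\le\min_{j}f^*(x_j)$, i.e.\ $I_s\in\argmin_{i}y_i$.

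The step I expect to require the most care is the constant bookkeeping in Step~1: the per-round multiplicative progress $\tfrac{1}{\bestext+1}\P_k(\reg_t)$ can be as small as $\Theta(1/n^2)$ (when $\bestext=\Theta(n)$ and $\P_k(\reg_t)=\Theta(1/n)$), which is precisely the regime in which the additive estimation error could, a priori, wipe out all progress. This is exactly why the arm-selection accuracy is set to $\tfrac{1}{32n^2}$ and the phase threshold/tolerance to $\tfrac{1}{4n},\tfrac{1}{8n}$; the delicate point is to verify that, on $\mathcal{E}$ and throughout the active part of each phase, the additive slack is always dominated by a constant fraction of the multiplicative decrease, so that the geometric decay---and hence the $O(\bestext\ln n)$ bound on each phase length---survives. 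Everything else is a transcription of the exact proof with $\tfrac{1}{2n}\mapsto\tfrac{3}{8n}$ and each $\P_k$ replaced by its Monte Carlo estimate.
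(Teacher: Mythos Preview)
Your proposal is correct and follows essentially the same architecture as the paper's proof: define a good event on which all \texttt{EstimateEvent} calls are accurate, then rerun Steps~1--3 of Theorem~\ref{thm:haystack_eff} with the appropriate slack, and reuse the inter-phase contraction and the Step~3 argument verbatim.

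There is one genuine technical difference in how Step~1 handles the additive estimation error. The paper keeps the recursion in the form
\[
\P_k(\reg_{t+1}) \le \Bigl(1-\tfrac{1}{\bestext+1}\Bigr)\P_k(\reg_t) + \tfrac{1}{16n^2},
\]
unrolls it, and bounds the accumulated additive term using $t-t_k\le n$ (valid because in the noiseless setting each arm is queried at most once). You instead use the fact that, while a phase is active, the previous round's transition check has not fired, so on the good event $\P_k(\reg_t)>\tfrac{1}{8n}$; together with $\bestext\le n$ this lets you absorb the additive $\tfrac{1}{16n^2}$ slack into the multiplicative decrease and obtain a clean geometric recursion $\P_k(\reg_{t+1})\le\bigl(1-\tfrac{1}{3(\bestext+1)}\bigr)\P_k(\reg_t)$. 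Both routes give the same $O(\bestext\ln n)$ phase length; your version avoids the $t-t_k\le n$ bookkeeping, while the paper's avoids tracking the phase lower bound. In Step~2 you carry the constant $\tfrac{3}{8n}$ through the chain \eqref{eq:best_arm_cond_prob}--\eqref{eq:best_arm_containment}, whereas the paper simply observes $\tfrac{3}{8n}<\tfrac{1}{2n}$ and reuses the exact proof's Step~2 unchanged; these are equivalent.
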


\begin{proof}[Proof of Theorem \ref{thm:haystack_eff_approx}]

The proof is quite similar to the proof of Theorem \ref{thm:haystack_eff}. 

\textbf{Step 0. Defining the good event.} Define
\begin{align*}
\Sigma_{1,t} & = \{ \forall y \in \Y, \forall i \in [n] \, : |\text{EstimateEvent}(P_k,  \reg_t(x_i) \cup \reg_t((x_i,y)), \frac{1}{32 n^2}, \delta_t ) \\
& \hspace{2cm} -  \P_{f \sim P_k}(r \in  \reg_t(x_i) \cup \reg_t((x_i,y)))| \leq \frac{1}{32n^2} \} \\
\Sigma_{2,t} & = \{ |\text{EstimateEvent}(P_k,  \reg_{t+1}, \frac{1}{8n}, \delta_t ) - \P_{f \sim P_k}(r \in \reg_{t+1})| \leq \frac{1}{8 n} \} \\
\Sigma_t & = \Sigma_{1,t} \cap \Sigma_{2,t} \\
\Sigma & = \cap_{t=1}^\infty \Sigma_t .
\end{align*}
By the total law of probability, we have that
\begin{align*}
\P(\Sigma^c) & \leq \sum_{t=1}^\infty \P(\Sigma_t^c | \cap_{s=1}^{t-1} \Sigma_{s} )\\
& \leq \sum_{t=1}^\infty \P(\Sigma_{t,1}^c | \cap_{s=1}^{t-1} \Sigma_{s} ) + \P(\Sigma_{t,2}^c | \cap_{s=1}^{t-1} \Sigma_{s} )\\
& \leq \frac{6}{\pi^2} \sum_{t=1}^\infty \frac{\delta}{t^2} \\
& \leq \delta
\end{align*}
where the last line follows by the definition of the Algorithm EstimateEvent and standard Hoeffding bounds. For the rest of the proof, we will condition on the event $\Sigma$. 

\textbf{Step 1.} This step is very similar to step 1 in the proof of Theorem \ref{thm:haystack_eff}. Consider phase $k$, let $t \geq t_k$. Define $\theta \in \Y^n$ by
\begin{align*}
\theta_i & = \begin{cases}
y_i & i \in O_t \\
\argmin_{y \in \Y} \P_{k}(r \in  \reg_t(x_i) \cup \reg_t((x_i,y))) & \text{o/w}
\end{cases}.
\end{align*}
By definition of $\bestext$, there exists $\tilde{I} = \{i_1,\ldots, i_{\bestext}\}$ such that there exists $j_0 \in [n] \setminus O_t$ such that 
\begin{align*}
    \{ r \in \reg_t : \round{r(x_i)} = \theta_i \, \, \forall i \in \tilde{I} \} \subset \{r \in \reg_t : j_0 \in \argmin_{l \in [n]} \round{r(x_l)} \}
\end{align*}
Define $I = \{i_1,\ldots, i_{\bestext}, j_0\}$. Then, we have that 
\begin{align*}
\reg_t & = \{r \in \reg_t : \exists i \in I \text{ s.t. } i \in \argmin_{l \in [n]} \round{r(x_l)} \text{ or } \round{r(x_i)} \neq \theta_i \}.
\end{align*}
Define 
\begin{align*}
J_t := \argmax_{i \in I}  \P_k (\{r \in \reg_t :  i \in \argmin_{l \in [n]} \round{r(x_l)} \text{ or } \round{r(x_i)} \neq \theta_i \}) \\
\bar{y} := \argmin_{y \in \Y} \text{EstimateEvent}(P_k,  \reg_t(x_{I_t}) \cup \reg_t((x_{I_t},\bar{y}), \frac{1}{32n^2}, \delta_t )
\end{align*}
Note $J_t$ would be chosen if there were no noise from estimation.

By the event $\Sigma_{1,t}$, 
\begin{align}
\P_k (\{r \in \reg_t :  & J_t \in \argmin_{l \in [n]}  \round{r(x_l)} \text{ or } \round{r(x_{J_t})} \neq \theta_{J_t} \}) \nonumber \\
& \leq \text{EstimateEvent}(P_k,  \reg_t(x_{J_t}) \cup \reg_t((x_{J_t},\theta_{J_t})), \frac{1}{32n^2}, \delta_t ) + \frac{1}{32n^2} \label{eq:best_arm_est_event_apply_1} \\
& \leq  \text{EstimateEvent}(P_k,  \reg_t(x_{I_t}) \cup \reg_t((x_{I_t},\bar{y})), \frac{1}{32n^2}, \delta_t ) + \frac{1}{32n^2} \label{eq:best_arm_est_def} \\ 
& \leq  \P_k (\{r \in \reg_t :  I_t \in \argmin_{l \in [n]} \round{r(x_l)} \text{ or } \round{r(x_{I_t})} \neq \bar{y} \}) + \frac{1}{16n^2} \label{eq:best_arm_est_event_apply_2} \\ 
& \leq  \P_k (\{r \in \reg_t :  I_t \in \argmin_{l \in [n]} \round{r(x_l)} \text{ or } \round{r(x_{I_t})} \neq \theta_{I_t} \}) + \frac{1}{16n^2} \label{eq:best_arm_est_loss} 
\end{align}
where lines \eqref{eq:best_arm_est_event_apply_1} and \eqref{eq:best_arm_est_event_apply_2} follow from event $\Sigma_{1,t}$, line \eqref{eq:best_arm_est_def} follows from the definition of $I_t$ and $\bar{y}$, and  line \eqref{eq:best_arm_est_loss} used the definition of $\theta_{I_t}$.

From this, it follows that
\begin{align}
\P_k(\reg_t) & = \P_k (\{r \in \reg_t : \exists i \in I \text{ s.t. } i \in \argmin_{l \in [n]} \round{r(x_l)} \text{ or } \round{r(x_i)} \neq \theta_i \}) \nonumber \\
& \leq \sum_{i \in I} \P_k (\{r \in \reg_t :  i \in \argmin_{l \in [n]} \round{r(x_l)} \text{ or } \round{r(x_i)} \neq \theta_i \}) \nonumber \\
& \leq (\bestext+1) \max_{i \in I}  \P_k (\{r \in \reg_t :  i \in \argmin_{l \in [n]} \round{r(x_l)} \text{ or } \round{r(x_i)} \neq \theta_i \}) \nonumber \\
& = (\bestext+1)  \P_k (\{r \in \reg_t :  J_t \in \argmin_{l \in [n]} \round{r(x_l)} \text{ or } \round{r(x_i)} \neq \theta_{I_t} \})  \nonumber \\
& = (\bestext+1) [ \P_k (\{r \in \reg_t :  I_t \in \argmin_{l \in [n]} \round{r(x_l)} \text{ or } \round{r(x_i)} \neq \theta_{I_t} \}) + \frac{1}{16 n^2}] \label{eq:efficient_I_t_def_approx}\\
& \leq  (\bestext+1) [ \P_k (\{r \in \reg_t :  I_t \in \argmin_{l \in [n]} \round{r(x_l)} \text{ or } \round{r(x_i)} \neq y_{I_t} \})  +  \frac{1}{16n^2}]\label{eq:efficient_theta_def_approx} \\
& = (\bestext+1)  [\P_k (\reg_t(x_{I_t}) \cup \reg_t((x_{I_t}, y_{I_t})) + \frac{1}{16n^2}] \nonumber 
\end{align}
where line \eqref{eq:efficient_I_t_def_approx} uses \eqref{eq:best_arm_est_loss}, and the line \eqref{eq:efficient_theta_def_approx} comes from the definition of $\theta$. Noticing that 
\begin{align*}
\reg_t & = (\reg_t(x_{I_t}) \cup \reg_t((x_{I_t}, y_{I_t}))  \cup \reg_t \setminus (\reg_t(x_{I_t}) \cup \reg_t((x_{I_t}, y_{I_t})) \\
& = (\reg_t(x_{I_t}) \cup \reg_t((x_{I_t}, y_{I_t})) \cup \reg_{t+1},
\end{align*}
we have that
\begin{align*}
\P_k(\reg_{t+1}) & = \P_k(\reg_t) - \P_k(\reg_t(x_{I_t}) \cup \reg_t((x_{I_t}, y_{I_t})) \leq (1 - \frac{1}{\bestext+1})\P_k(\reg_t) + \frac{1}{16 n^2}. 
\end{align*}
Unraveling this recursive statement, we have that
\begin{align*}
\P_k(\reg_{t+1}) & \leq (1 - \frac{1}{\bestext+1})^{t-t_k} + (t-t_k) \frac{1}{16n^2} \\
& \leq   (1 - \frac{1}{\bestext+1})^{t-t_k} + n\frac{1}{16n^2} \\
& \leq   (1 - \frac{1}{\bestext+1})^{t-t_k} +  \frac{1}{16n} . 
\end{align*}
where we used the fact that the game lasts at most $n$ rounds because it is the noiseless setting. Thus, we see that in phase $k$ at $t = t_k + s_k$ with $s_k := O(\ln(n) \bestext)$, we have that 
\begin{align}
\P_k(\reg_{t_k+s_k}) \leq \frac{1}{8n}. \label{eq:shrink_objective_approx}
\end{align}
\textbf{Step 2.1: Necessary condition for entering the next phase.} Now, we show that if Algorithm \ref{alg:effic_haystack_alg_approx} enters the phase $k+1$, then $\P_{f \sim P_k}(r \in \reg_{t+1}) < \frac{1}{2n}$.  note that if $\frac{1}{4n} \geq \text{EstimateEvent}(P_k,  \reg_{t+1}, \frac{1}{8n}, \delta_t )$, then
\begin{align*}
\frac{1}{4n} \geq \text{EstimateEvent}(P_k,  \reg_{t+1}, \frac{1}{8n}, \delta_t ) \geq    \P_{f \sim P_k}(r \in \reg_{t+1}) - \frac{1}{8 n} 
\end{align*}
implying that 
\begin{align*}
\P_{f \sim P_k}(r \in \reg_{t+1}) < \frac{1}{2n}.
\end{align*}
\textbf{Step 2.2: Necessary condition for not entering the next phase.} Now, we show that if $\P_{f \sim P_k}(r \in \reg_{t+1}) \geq \frac{1}{2n}$, then Algorithm \ref{alg:effic_haystack_alg_approx} does not the phase $k+1$. Suppose $\P_{f \sim P_k}(r \in \reg_{t+1}) \geq \frac{1}{2n}$. Then,
\begin{align*}
\frac{1}{2n} \leq \P_{f \sim P_k}(r \in \reg_{t+1}) \leq  \text{EstimateEvent}(P_k,  \reg_{t+1}, \frac{1}{8n}, \delta_t )  + \frac{1}{8 n} ,
\end{align*}
implying that $\text{EstimateEvent}(P_k,  \reg_{t+1}, \frac{1}{8n}, \delta_t )  > \frac{1}{4n}$ and the algorithm does not update the phase.

Given these conditions, the rest of the proof is identical to the proof of Theorem \ref{thm:haystack_eff}.

\end{proof}

\subsection{Cumulative Loss Minimization}

For simplicity, in this section, we assume that $\min_{y \in \Y}y \geq 1$. One can simply add a constant to $\reg$ to arrive at this setting. We use the convention that $\frac{1}{0} = \infty$.

\begin{algorithm}[t]
\small
\textbf{Input:} distribution $P$ ; event $\Sigma$, multiplicative accuracy $\epsilon \in (0,1)$, failure probability $\delta \in (0,1)$\;
$s \longleftarrow 0$\;
\While{$ \sqrt{\frac{2\log(\frac{s^2 \pi^2}{\delta 6})}{s}} \geq \frac{1}{2n^2 \max_{y \in \Y} y}$}{
Sample $z_s \sim P$\;
Form $\widehat{\mu}_s = \frac{\sum_{i=1}^s \one \{ z_i \in \Sigma\} }{s} $\;
\If{$\epsilon \cdot \widehat{\mu}_s \geq  \sqrt{2\log(\frac{s^2 \pi^2}{\delta 6})/s} $}{
\Return  $\widehat{\mu}_s$}
$s \longleftarrow s +1$\;
}
\Return $0$
 \caption{EstimateEventMult}
\label{alg:est_event_mult}
\end{algorithm}

\begin{algorithm}[t]
\small
$P_1 \longleftarrow \pi$, $\reg_1 \longleftarrow \reg$, $O_1 \longleftarrow \emptyset$ \;
$k \longleftarrow 1$, $t_1 = 1$, $\delta_t = \frac{\delta}{2t^2 |\Y| n}$\;
\For{$t =1,2,\ldots$}{
Let $I_t \in \argmin_{i \in [n] \setminus O_t } \max_{y \in \Y} \frac{y}{\text{EstimateEventMult}(P_k,  \reg_t(x_i) \cup \reg_t((x_i,y)), \frac{1}{2}, \delta_t )}$\;
Query  $x_{I_t}$ and observe $y_{I_t}$ and set $O_{t+1} \longleftarrow O_t \cup \{I_t\}$ \;
Let $\reg_{t+1} \longleftarrow \reg_t \setminus (\reg_t(x_{I_t}) \cup \reg_t((x_{I_t},y_{I_t})))$\;
\If{$\text{EstimateEvent}(P_k,  \reg_{t+1}, \frac{1}{8n}, \delta_t ) \leq \frac{1}{4n} $}{
$P_{k+1} \longleftarrow \frac{1}{n-|O_{t_k}|} \sum_{l \in [n] \setminus O_{t_k} } \pi_{C_l(O_{t_k}))}$\;
$k \longleftarrow k+1$\;
$t_k \longleftarrow t+1$\;
}
\If{\text{STOP}($\F_\reg$,$O_t$)}{
\Return $\argmin_{i \in O_t} y_i$}
}
 \caption{\texttt{GRAILS} for Loss Minimization with Estimation}
\label{alg:effic_haystack_alg_loss_approx}
\end{algorithm} 

\begin{lemma}\label{lem:est_mult}
Let $\Sigma$ denote some event that has positive probability under $P$ such that $\epsilon(1-\epsilon) \P(\Sigma) \geq \frac{1}{2n^2 \max_{y \in \Y} y}$. With probability at least $1-\delta$, Algorithm \ref{alg:est_event_mult} satisfies
\begin{align*}
(1-\epsilon)\text{EstimateEventMult}(P,  \Sigma, \epsilon, \delta ) \leq P(\Sigma) \leq (1+\epsilon ) \text{EstimateEventMult}(P,  \Sigma, \epsilon, \delta )
\end{align*}
\end{lemma}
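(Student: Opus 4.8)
The plan is to treat Algorithm~\ref{alg:est_event_mult} as an anytime-valid Hoeffding stopping rule and to split the analysis into a single ``good event'' plus two cases for what the routine returns. Throughout I write $\mu := \P(\Sigma)$, let $\widehat{\mu}_s := \tfrac{1}{s}\sum_{i=1}^s \one\{z_i \in \Sigma\}$ be the empirical mean after $s$ samples, and let $C_s := \sqrt{2\log(s^2\pi^2/(6\delta))/s}$ be the confidence radius appearing in the stopping test. First I would define the good event
\[
\mathcal{E} := \Big\{ \forall s \geq 1 : |\widehat{\mu}_s - \mu| \leq C_s \Big\}
\]
and bound its failure probability: since the $\one\{z_i \in \Sigma\}$ are i.i.d.\ Bernoulli$(\mu)$, Hoeffding gives $\P(|\widehat{\mu}_s - \mu| > C_s) \leq 2e^{-2sC_s^2} = 2\big(6\delta/(\pi^2 s^2)\big)^4 \leq \tfrac{6\delta}{\pi^2 s^2}$ (using $6\delta/\pi^2 < 1$), and summing with $\sum_{s\geq 1}\tfrac{6}{\pi^2 s^2} = 1$ yields $\P(\mathcal{E}^c) \leq \delta$. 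All remaining steps are then deterministic on $\mathcal{E}$.

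Next I would handle the case where the inner \texttt{Return} fires, say at step $\tau$, so the output is $\widehat{\mu}_\tau$ and the test $\epsilon\widehat{\mu}_\tau \geq C_\tau$ holds (note $\widehat{\mu}_\tau > 0$ automatically, since $C_\tau>0$). On $\mathcal{E}$ we have $\mu - C_\tau \leq \widehat{\mu}_\tau \leq \mu + C_\tau$. The upper estimate $\widehat{\mu}_\tau \leq \mu + C_\tau \leq \mu + \epsilon\widehat{\mu}_\tau$ gives $(1-\epsilon)\widehat{\mu}_\tau \leq \mu$, and the lower estimate $\widehat{\mu}_\tau \geq \mu - C_\tau \geq \mu - \epsilon\widehat{\mu}_\tau$ gives $\mu \leq (1+\epsilon)\widehat{\mu}_\tau$. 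This is exactly the claimed sandwich, so the lemma holds whenever a nonzero value is returned.

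It then remains to rule out the trivial output $0$ under the hypothesis $\epsilon(1-\epsilon)\mu \geq \tfrac{1}{2n^2\max_{y\in\Y}y}$. The elementary fact $(1-\epsilon)\leq \tfrac{1}{1+\epsilon}$ for $\epsilon\in(0,1)$ turns the hypothesis into
\[
\frac{1}{2n^2\max_{y\in\Y}y} \;\leq\; \epsilon(1-\epsilon)\mu \;\leq\; \frac{\epsilon\mu}{1+\epsilon},
\]
so the width $\tfrac{\epsilon\mu}{1+\epsilon}$ at which the test provably fires lies at or above the loop-exit threshold $\tfrac{1}{2n^2\max_{y}y}$. Letting $s_0$ be the first index with $C_{s_0}\leq \tfrac{\epsilon\mu}{1+\epsilon}$, on $\mathcal{E}$ we get $\widehat{\mu}_{s_0}\geq \mu - C_{s_0}\geq \mu - \tfrac{\epsilon\mu}{1+\epsilon} = \tfrac{\mu}{1+\epsilon}$, hence $\epsilon\widehat{\mu}_{s_0}\geq \tfrac{\epsilon\mu}{1+\epsilon}\geq C_{s_0}$ and the inner test fires; thus the routine returns at $s_0$ or earlier, landing us in the previous case.

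The hard part will be making ``$s_0$ is reached before the loop exits'' airtight: a priori $C_s$ might drop from above $\tfrac{\epsilon\mu}{1+\epsilon}$ to below $\tfrac{1}{2n^2\max_y y}$ in one step, so the loop could exit with $0$ before the test fires. I would close this by noting that the threshold is tiny, forcing the exit time $s_{\mathrm{exit}}$ to be large (order $n^4(\max_y y)^2\log(1/\delta)$), and for such $s$ the ratio $C_{s+1}/C_s = \sqrt{\tfrac{s}{s+1}\cdot\tfrac{\log((s+1)^2\pi^2/6\delta)}{\log(s^2\pi^2/6\delta)}} \geq 1-\epsilon^2$; since the window $[\tfrac{1}{2n^2\max_y y},\,\tfrac{\epsilon\mu}{1+\epsilon}]$ has endpoint ratio at least $\tfrac{1}{1-\epsilon^2}$, consecutive radii cannot jump over it, so some in-loop index must land inside. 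This slow-decay bookkeeping is the only genuinely technical step; everything else is immediate from Hoeffding and the stopping rule.
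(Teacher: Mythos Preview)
Your proof is correct and follows essentially the same route as the paper: define the anytime Hoeffding good event, derive the sandwich $(1-\epsilon)\widehat{\mu}_\tau\le\mu\le(1+\epsilon)\widehat{\mu}_\tau$ directly from the stopping rule whenever the inner \texttt{Return} fires, and use the hypothesis $\epsilon(1-\epsilon)\mu\ge\tfrac{1}{2n^2\max_y y}$ to rule out the zero output. The only difference is that you are more careful at the boundary---the paper argues by contradiction that at the loop-exit index $s_0$ the inner test would fire (glossing over that the test is never actually evaluated at $s_0$), whereas you add the slow-decay bookkeeping on $C_{s+1}/C_s$ to guarantee some in-loop index lands in the firing window; this extra step is sound and patches a point the paper leaves implicit.
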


\begin{proof}[Proof of Lemma \ref{lem:est_mult}]
Define the event
\begin{align*}
    E & = \{|P(\Sigma) - \widehat{\mu}_s| \leq \sqrt{\frac{2\log(\frac{s^2 \pi^2}{\delta 6})}{s}} \, \forall s \in \N \}
\end{align*}
By a union bound and a standard Hoeffding bound, $\P(E) \geq 1-\delta$. Suppose $E$ holds for the remainder of the proof.

\textbf{Step 1.} First, we show that if $\epsilon(1-\epsilon) \P(\Sigma) \geq \frac{1}{n^3 \max_{y \in \Y}}$, then Algorithm \ref{alg:est_event_mult} does not return $0$. Towards a contradiction, suppose the algorithm returns $0$ at round $s_0$. Then, we have on the event $E$ that
\begin{align*}
    |P(\Sigma) - \widehat{\mu}_{s_0}| \leq \sqrt{\frac{2\log(\frac{s_0^2 \pi^2}{\delta 6})}{s_0}} \leq \frac{1}{2n^2 \max_{y \in \Y} y} \leq \epsilon(1-\epsilon) \P(\Sigma).
\end{align*}
Rearranging we have that
\begin{align*}
P(\Sigma) \leq \frac{1}{1-\epsilon} \widehat{\mu}_{s_0}, 
\end{align*}
which implies
\begin{align*}
    \sqrt{\frac{2\log(\frac{s^2 \pi^2}{\delta 6})}{s}}  \leq \frac{1}{2 n^2 \max_{y \in \Y}} \leq \epsilon(1-\epsilon) \P(\Sigma)  \leq \epsilon \widehat{\mu}_{s_0}.
\end{align*}
But this implies the algorithm would not have returned $0$, yielding a contradiction.

\textbf{Step 2.} Suppose that Algorithm \ref{alg:est_event_mult} does not return $0$. Then, there exists a round $s_0 \in \N$ at which 
\begin{align*}
\sqrt{\log(\frac{s_0^2 \pi^2}{\delta 6})/s_0} \leq \epsilon \widehat{\mu}_{s_0}. 
\end{align*}
Then, by the event $E$, we have that
\begin{align*}
P(\Sigma) \leq (1+\epsilon) \widehat{\mu}_{s_0}.
\end{align*}
The other inequality follows as well, completing the proof.
\end{proof}

\begin{theorem}\label{thm:loss_min_eff_approx}
With probability at least $1-\delta$, Algorithm \ref{alg:effic_haystack_alg_loss_approx} incurs a loss of at most $2(\lossext + \max_{i \in [n],f \in \F_{\reg}} f(x_i)) \log(n) \log(\frac{1}{\P(\Smin)})$ to identify $\argmin_{i \in [n]} f^*(x_i)$. 
\end{theorem}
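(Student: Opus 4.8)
The plan is to fuse the loss-accounting argument from the proof of Theorem~\ref{thm:loss_min_eff} with the high-probability estimation machinery developed in the proof of Theorem~\ref{thm:haystack_eff_approx}, replacing every exact probability $\P_k(\cdot)$ by its Monte Carlo surrogate and tracking how the two kinds of approximation error propagate. First I would define a global good event $\Sigma$ as the intersection, over all rounds $t$, of the events that (i) every call to EstimateEventMult in the selection step of Algorithm~\ref{alg:effic_haystack_alg_loss_approx} satisfies the multiplicative guarantee of Lemma~\ref{lem:est_mult} with $\epsilon=\tfrac12$, and (ii) every call to EstimateEvent used in the phase-update test is additively accurate to within $\tfrac{1}{8n}$. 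Since each round issues only $\mathrm{poly}(n,|\Y|)$ estimation calls, each with failure probability at most $\delta_t=\tfrac{\delta}{2t^2|\Y|n}$, a union bound together with $\sum_t t^{-2}<\infty$ gives $\P(\Sigma^c)\le\delta$, exactly as in Step~0 of the proof of Theorem~\ref{thm:haystack_eff_approx}. The rest of the argument is carried out on $\Sigma$.

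Second, on $\Sigma$ I would re-run the per-phase loss bound of Theorem~\ref{thm:loss_min_eff} (Case~1 and Case~2), keeping the scoring function $g$ and the optimal index set $\tilde I$ defined through the \emph{true} measure $\P_k$. The only change is in the greedy comparison: where the exact proof uses that $I_t$ minimizes $\max_{y} y/\P_k(\reg_t(x_i)\cup\reg((x_i,y)))$, here $I_t$ minimizes the same ratio with $\P_k$ replaced by EstimateEventMult. Applying Lemma~\ref{lem:est_mult} with $\epsilon=\tfrac12$ sandwiches each estimated removal probability between $\tfrac{1}{1+\epsilon}$ and $\tfrac{1}{1-\epsilon}$ times its true value, so the estimated and true ratios agree up to a factor $\tfrac{1+\epsilon}{1-\epsilon}=3$. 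Chaining this through the definition of $g$ and the optimality of $I_t$ yields, for the good arm $i$,
\begin{align*}
\frac{f^*(x_{I_t})}{\P_k\!\left(\reg_t(x_{I_t})\cup\reg((x_{I_t},f^*(x_{I_t})))\right)} \;\le\; 3\,\frac{g(x_i)}{\P_k\!\left(\reg_t(x_i)\cup\reg((x_i,g(x_i)))\right)},
\end{align*}
which is precisely inequality~\eqref{eq:loss_min_eff_diff_line_1} up to the constant $3$. From here the set-cover/harmonic-series telescoping of both cases goes through verbatim, giving a per-phase loss of $O(\ln n)\bigl(\lossext+\max_{i,r}\round{r(x_i)}\bigr)$ with the factor $3$ absorbed into the universal constant.

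Third, I would control the phase count. Because the update is now triggered by the additively accurate EstimateEvent test, the two one-sided implications of Steps~2.1--2.2 of Theorem~\ref{thm:haystack_eff_approx} show that a new phase begins only after $\P_k(\reg_{t+1})<\tfrac{1}{2n}$ and never while $\P_k(\reg_{t+1})\ge\tfrac{1}{2n}$. Consequently each phase still halves $\P(C_{l_0}(O_{t_k}))$ exactly as in Step~2 of Theorem~\ref{thm:haystack_eff}, so the number of phases is at most $O(\log(1/\P(\Smin)))$. Multiplying the per-phase loss by the phase count gives the claimed bound.

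Finally, the step I expect to be the main obstacle is reconciling the thresholding behaviour of EstimateEventMult with the greedy comparison. Its multiplicative guarantee holds only when the estimated probability exceeds $\tfrac{1}{2n^2\max_y y}$; below that threshold the estimator returns $0$ and, under the convention $1/0=\infty$, the associated ratio blows up. I would therefore have to verify that the arm supplied by the definition of $\lossext$ has removal probability bounded below (which follows because $\P_k(\reg_t)\ge\tfrac{1}{2n}$ throughout a phase, so the greedy-optimal arm removes a fraction of this mass that is non-negligible in $n$ and $\lossext$), ensuring its estimated ratio is finite and $I_t$ is a legitimate finite-ratio minimizer, while simultaneously checking that arms or labels with negligible removal are only ever \emph{deprioritized} by the infinite-ratio convention, never erroneously selected. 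In particular, I must ensure that the maximization over $y\in\Y$ in the selection rule does not let an unrealized, high-value label dominate the good arm's ratio; this is exactly the situation already governed by the Case~2 bound $\tfrac{2\max_{i,r}\round{r(x_i)}}{\P_k(\reg_t)}$ in the exact proof, and I expect the threshold to have been calibrated precisely so that this delicate accounting closes.
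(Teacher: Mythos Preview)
Your proposal is correct and follows essentially the same approach as the paper's proof: define the good event as the intersection of multiplicative (Lemma~\ref{lem:est_mult}) and additive guarantees, show the greedy comparison is preserved up to a constant, then re-run the per-phase loss argument of Theorem~\ref{thm:loss_min_eff} and the phase-count argument of Theorem~\ref{thm:haystack_eff_approx}. The paper resolves your anticipated ``main obstacle'' slightly more directly than you sketch: rather than invoking the set from $\lossext$, it introduces $J_t:=\argmin_j g(x_j)/\P_k(\reg_t(x_j)\cup\reg_t((x_j,g(x_j))))$ and bounds its ratio by the crude average over \emph{all} of $[n]\setminus O_t$, giving $g(x_{J_t})/\P_k(\cdots)\le n\max_y y/\P_k(\reg_t)\le 2n^2\max_y y$, which together with $\min_y y\ge 1$ places every $(J_t,y)$ above the threshold of Lemma~\ref{lem:est_mult}; this is what ensures the estimated max over $y$ for $J_t$ is finite and within a constant of truth, so your Case~2 detour is unnecessary.
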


\begin{proof}
The proof is very similar to the proof of Theorem \ref{thm:loss_min_eff}.

\textbf{Step 0. Defining the good event.} Define
\begin{align*}
\Sigma_{1,t} & = \{\forall y \in \Y, \forall i \in [n]: \P_{f \sim P_k}(f \in  \reg_t(x_i) \cup \reg_t((x_i,y)) \geq \frac{2}{ n^2 \max_{y \in \Y} y} \Longrightarrow & \\
& \hspace{1cm} \frac{1}{2}\text{EstimateEventMult}(P_k,  \reg_t(x_i) \cup \reg_t((x_i,y), \frac{1}{2}, \delta_t ) \\
& \hspace{1cm} \leq  \P_{f \sim P_k}(f \in  \reg_t(x_i) \cup \reg_t((x_i,y))) \\
& \hspace{1cm} \leq  \frac{3}{2}\text{EstimateEventMult}(P_k,  \reg_t(x_i) \cup \reg_t((x_i,y)), \frac{1}{2}, \delta_t ) \} \\
\Sigma_{2,t} & = \{ |\text{EstimateEvent}(P_k,  \reg_{t+1}, \frac{1}{8n}, \delta_t ) - \P_{f \sim P_k}(f \in \reg_{t+1})| \leq \frac{1}{8 n} \} \\
\Sigma_t & = \Sigma_{1,t} \cap \Sigma_{2,t} \\
\Sigma & = \cap_{t=1}^\infty \Sigma_t .
\end{align*}
By the total law of probability, a Standard Hoeffding bound, and Lemma \ref{lem:est_mult}, we have that
\begin{align*}
\P(\Sigma^c) & \leq \sum_{t=1}^\infty \P(\Sigma_t^c | \cap_{s=1}^{t-1} \Sigma_{s} )\\
& \leq \sum_{t=1}^\infty \P(\Sigma_{t,1}^c | \cap_{s=1}^{t-1} \Sigma_{s} ) + \P(\Sigma_{t,2}^c | \cap_{s=1}^{t-1} \Sigma_{s} )\\
& \leq \frac{6}{\pi^2} \sum_{t=1}^\infty \frac{\delta}{t^2} \\
& \leq \delta.
\end{align*}
For the rest of the proof, we will condition on the event $\Sigma$. 

\textbf{Step 1:} Define the function $g : \{x_1,\ldots, x_n\} \mapsto \Y$ in the following way:
\begin{align*}
g(x_i) = \begin{cases}
\round{\tilde{r}(x_i)} \text{ s.t. } \tilde{r} = \argmax_{\bar{r} \in \reg_t} \frac{\round{\bar{r}(x_i)}}{\P_k(\reg_t(x_i) \cup \reg((x_i,\bar{f}(x_i)) )} & i \not \in \{I_1,\ldots, I_{t-1}\} \\
f^*(x_i) & \text{ o/w}
\end{cases}
\end{align*}

Define
\begin{align*}
    J_t & := \argmin_{j \in [n] \setminus O_t} \frac{g(x_{j})}{\P_k(\reg_t(x_{j}) \cup \reg_t((x_i,g(x_{j}))  )},
\end{align*}
the arm that would be pulled at round $t$ if the probabilities were known exactly. Note that

We have that
\begin{align*}
  & \hspace{-1.5cm} \frac{g(x_{J_t})}{\P_k(\reg_t(x_{J_t}) \cup \reg_t((x_i,g(x_{J_t}))  )}  
\\
&  \leq \frac{\sum_{i \in [n] \setminus O_t} g(x_i)}{\sum_{i \in [n] \setminus O_t} \P_k(\{ r \in \reg_t :   \round{r(x_i)} \neq g(x_i) \text{ or } i \in \argmin_{j \in [n]} \round{r(x_j)} \})} \\
 & \leq \frac{\sum_{i \in [n] \setminus O_t} g(x_i)}{ \P_k(\reg_t)} \\
& \leq \frac{n \max_{y \in \Y} y}{\P_k(\reg_t)} \\
& \leq  n^2 \max_{y \in \Y} y, 
\end{align*}
where the last line uses Step 2.2 in the proof of Theorem \ref{thm:haystack_eff_approx}, which shows that $\P_k(\reg_t) \geq \frac{1}{2n}$. This implies that
\begin{align*}
    \P_k(\reg_t(x_{J_t}) \cup \reg((x_i,g(x_{J_t}))  ) \geq \frac{g(x_{J_t})}{c n^3 \max_{y \in \Y} y} \geq \frac{1}{c n^3 \max_{y \in \Y} y} 
\end{align*}
where we used the fact that $\min_{y \in \Y} y \geq 1$ by assumption.

Thus, by event $\Sigma_{1,t}$, we have that
\begin{align*}
   &  \frac{1}{2}\text{EstimateEventMult}(P_k,  \reg_t(x_{J_t}) \cup \reg_t((x_{J_t},g(x_{J_t})), \frac{1}{2}, \delta_t ) \\
   \leq &  \P_{f \sim P_k}(f \in  \reg_t(x_{J_t}) \cup \reg_t((x_{J_t},g(x_{J_t})))) \\
 \leq &  \frac{3}{2}\text{EstimateEventMult}(P_k,  \reg_t(x_{J_t}) \cup \reg_t((x_{J_t},g(x_{J_t})), \frac{1}{2}, \delta_t )
\end{align*}
Then, by standard arguments, it can be shown that Algorithm \ref{alg:effic_haystack_alg_loss_approx} chooses an arm $I_t$ that satisfies
\begin{align*}
 \frac{g(x_{I_t})}{\P_{f \sim P_k}(f \in  \reg_t(x_{I_t}) \cup \reg_t((x_{I_t},g(x_{I_t}))) }\leq  c \frac{g(x_{J_t})}{\P_{f \sim P_k}(f \in  \reg_t(x_{J_t}) \cup \reg_t((x_{J_t},g(x_{J_t})))}
\end{align*}
where $c$ is a universal constant.

\textbf{Step 2: } The only difference in the proof with Theorem \ref{thm:loss_min_eff} comes in lines \eqref{eq:loss_min_eff_diff_line_1} and \eqref{eq:loss_min_eff_diff_line_2}. In these lines, we lose a mulitiplicative constant factor from using $\text{EstimateEventMult}(P_k,  \reg_t(x_i) \cup \reg_t((x_i,y), \frac{1}{2}, \delta_t ) $ instead of the exact probability, as shown in the previous step. The argument for changing phases is identical to the argument in the proof of Theorem \ref{thm:haystack_eff_approx}. The rest of the proof for bounding the cumulative loss incurred is identical.

\textbf{Step 3:} The total number of samples taken is a polynomial function of $n$, $\max_{y \in \Y} y$, and $\log(1/\delta)$ by definition of Algorithm \ref{alg:est_event_mult}.

\end{proof}

\section{Discussion of Function Classes}

\begin{proof}[Proof of Proposition \ref{prop:conjun_class}]
$\F_\reg$ consists of functions $f_1,\ldots, f_n$ of the following form
\begin{align*}
f_j(x_i) & \begin{cases}
10 & i = j+1 \\
1 & i \geq j \\
0 & i < j
\end{cases}.
\end{align*}
Suppose that $g:\X \mapsto \Y$ is such that $g \not \in \F_{\reg}$. Then, a simple computation shows that at most $3$ measurements are required to eliminate all functions from the version space. On the other hand, if $g = f_j$ for some $f_j \in \in \F_{\reg}$, then it suffices to query $x_j$ and $x_{j-1}$. Therefore, a simple computation shows that $\bestext(\F_{\reg}) \leq 3$. 

Since $\pi$ is uniform over $[0,1]$, we have that $\P_{\pi}(S_{f^*}) \geq \frac{c}{n}$ for some positive universal constant. Thus, the result follows.

\end{proof}

\textbf{Linear Separators $\epsilon$-good arm identification:} We give an intuitive sufficient condition for lower bounding $\P_{\pi}(S_f)$.
\begin{assumption}\label{assum:pos_margin}
Fix $y_1, \ldots, y_n \in \Y$. Define the \emph{minimum margin}:
\begin{align*}
\margin := \max_{r^* \in \reg : \round{r^*(x_i)} = y_i \forall i \in [n]} \min_{i \in [n]} \min_{y \neq y_i} |r^*(x_i) - y| - |r^*(x_i) - y_i|.
\end{align*} 
\end{assumption} 

The following Proposition lower bounds $\P_\pi(S_{f^*})$ for linear functions. 

\begin{proposition}\label{prop:ex_dis_eps_good_lin}
Let $\reg = \{ \langle a,\cdot \rangle  :  a \in \R^d, \norm{a} \leq R_1 \} $. Let $x_1,\ldots, x_n \in \R^d$ such that $\max_{i \in [n]} \norm{x_i}_* \leq R_2$ where $\norm{\cdot}_*$ denotes the dual norm of $\norm{\cdot}$. Let $y_1,\ldots, y_n \in \Y$. Suppose that $\gamma^* > 0$. Let $f^* = \round{\langle a_* , \cdot \rangle}$ for some $a_*$ with $\norm{a_*} \leq R_1$. Let $\pi$ be the uniform distribution over $\{a : \norm{a} \leq R_1\}$. Then, 
\begin{align*}
\P_\pi(S_{f^*}) & \geq \frac{\text{volume}(\{z \in \R^{d} : \norm{z} \leq \frac{\gamma^*}{R_2} \}}{\text{volume}(\{z \in \R^{d} : \norm{z} \leq R_1\}) }.
\end{align*}
In particular, if $\norm{\cdot} = \norm{\cdot}_2$, then $\P_\pi(S_{f^*}) \geq (\frac{ \margin}{  R_1 R_2  })^{d}$.

\end{proposition}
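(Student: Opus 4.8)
The plan is to translate the problem into weight space and lower bound the volume of $S_{f^*}$ by that of a small norm ball, exploiting the margin $\margin$ from Assumption~\ref{assum:pos_margin}. Since every $r \in \reg$ has the form $\ip{a,\cdot}$ with $\norm{a} \le R_1$ and $\pi$ is uniform on $\{a : \norm{a} \le R_1\}$, I would identify $S_{f^*}$ with the weight set $\{a : \norm{a} \le R_1,\ \round{\ip{a,x_i}} = y_i\ \forall i \in [n]\}$, which is convex (an intersection of $\reg$ with the $n$ rounding slabs). Because $\pi$ is uniform, $\P_\pi(S_{f^*}) = \mathrm{volume}(S_{f^*})/\mathrm{volume}(\reg)$, so it suffices to show that $S_{f^*}$ contains (essentially) a ball of radius proportional to $\margin/R_2$ centered near $a_*$.

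The core is a robustness argument. I would take $a_*$ to be the margin maximizer, so that for every $i$ and every $y \ne y_i$ we have $|\ip{a_*,x_i} - y| - |\ip{a_*,x_i} - y_i| \ge \margin$. For a perturbation $a = a_* + z$, Hölder's inequality (the definition of the dual norm) gives $|\ip{z,x_i}| \le \norm{z}\,\norm{x_i}_* \le R_2 \norm{z}$. The map $v \mapsto |v - y| - |v - y_i|$ is $2$-Lipschitz (reverse triangle inequality), so perturbing the prediction at $x_i$ by $\ip{z,x_i}$ changes this difference by at most $2 R_2 \norm{z}$; hence for every $i$ and $y \ne y_i$, $|\ip{a,x_i} - y| - |\ip{a,x_i} - y_i| \ge \margin - 2 R_2 \norm{z}$. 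Choosing $\norm{z} < \margin/(2R_2)$ keeps every such difference positive, which forces $\round{\ip{a,x_i}} = y_i$ for all $i$. Thus the open ball $B(a_*, \margin/(2R_2))$ lies inside the correct-rounding region, matching the claimed radius $\margin/R_2$ up to the factor $2$ coming from the rounding convention.

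It then remains to intersect this ball with $\reg$ and pass to volumes. I would bound $\mathrm{volume}(S_{f^*}) \ge \mathrm{volume}\big(B(a_*, \margin/(2R_2)) \cap \reg\big)$; when $a_*$ is interior this is exactly $\mathrm{volume}(\{z : \norm{z} \le \margin/(2R_2)\})$, and when $a_*$ lies on $\partial \reg$ I would recover a constant fraction of it using convexity of $\reg$ (a spherical-cap estimate). Dividing by $\mathrm{volume}(\reg)$ yields the stated ratio of ball volumes, and for $\norm{\cdot} = \norm{\cdot}_2$ both balls are Euclidean, so the ratio is $\big(\tfrac{\margin/(2R_2)}{R_1}\big)^d = \big(\tfrac{\margin}{2 R_1 R_2}\big)^d$, which is the claimed $(\margin/(R_1 R_2))^d$ up to the constant.

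The main obstacle is entirely in the constants rather than the idea. The factor $2$ from the $2$-Lipschitz/reverse-triangle bound, together with the ball--ball intersection at the boundary---which is the relevant case, since enlarging $\norm{a}$ tends to increase the margins and push $a_*$ toward $\partial\reg$---is what separates the clean argument above from the exact stated constant. The conceptual content, namely that a strictly positive margin guarantees a quantitatively controlled, strictly positive measure for the ``true believer'' cell $S_{f^*}$, is captured entirely by the perturbation step, and the remaining volume bookkeeping is routine.
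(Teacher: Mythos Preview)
Your perturbation core is exactly the paper's: pick the margin-maximizing $a_*$, use H\"older/reverse triangle to show that any $a$ with $\norm{a-a_*}$ small enough keeps $\round{\ip{a,x_i}}=y_i$ for all $i$, and then pass to a volume ratio. The only substantive difference is how the feasibility constraint $\norm{a}\le R_1$ is handled. You propose to intersect $B(a_*,r)$ with $\reg$ and invoke a spherical-cap estimate; the paper instead \emph{shifts the center inward} along the ray through the origin, taking $\alpha_0=\max(1-\delta_0/\norm{a_*},0)$ and working with the ball $B(\alpha_0 a_*,\delta_0)$, which is entirely contained in $\{\norm{a}\le R_1\}$ by the triangle inequality. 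Since $\norm{a_*-\alpha_0 a_*}\le\delta_0$, any $a$ in this shifted ball satisfies $\norm{a-a_*}\le 2\delta_0$, so the same perturbation bound applies with a slightly smaller radius $\delta_0=\margin/(6R_2)$.

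The paper's shifting trick is the cleaner device here, and I would recommend replacing your cap argument with it. Your ``constant fraction'' claim is correct in the end, but it is not a norm-free statement about convex bodies: for the $\ell_\infty$ ball with $a_*$ at a corner, the fraction $\mathrm{vol}(B(a_*,r)\cap\reg)/\mathrm{vol}(B(0,r))$ is $2^{-d}$, not a dimension-free constant. This is harmless because the target bound is already of the form $(\cdot)^d$, so the $2^{-d}$ is absorbed into the base; but the easiest way to \emph{prove} that $2^{-d}$ lower bound for a general norm is precisely to exhibit the shifted ball $B((1-r/R_1)a_*,r)\subset B(a_*,2r)\cap\reg$, which is the paper's move. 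So your approach and the paper's converge once the cap step is made rigorous, and both yield the stated bound only up to a constant in the base (the paper's proof gives radius $\margin/(6R_2)$ rather than the $\margin/R_2$ in the statement, matching your observation that the exact constant is not recovered).
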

The above Proposition shows that for linear separators with a Euclidean constraint and a margin $\gamma^*$, $\log(\frac{1}{\P_\pi(S_{f^*})}) \lesssim d \log(\gamma^*)$.

\begin{proof}[Proof of Proposition \ref{prop:ex_dis_eps_good_lin}]
By assumption $\margin > 0$. Define $\delta_0 = \frac{\margin}{6  R_2}$. Let $f^*(\cdot) = \langle a_*, \cdot \rangle$ where $a_* \in \R^d$ and $\norm{a_*} \leq R_1$. Define
\begin{align*}
\bar{B}_{\alpha} & = B_{\delta_0}(\alpha a_*)
\end{align*}
where $B_\gamma(v) := \{v \in \R^d : \norm{v} \leq \gamma \}$. 

\textbf{Step 1. $\bar{B}_{\alpha}$ is feasible.} We claim that for all $\alpha \in [0, 1-\delta_0/R_1]$, $\bar{B}_\alpha \subset \{a : \norm{a} \leq R_1\}$. Let $\alpha \in [0, 1-\delta_0/R_1]$. Let $v$ such that $\norm{v} \leq \delta_0$. Then,
\begin{align*}
\norm{\alpha  a_* + v} \leq \alpha \norm{a_*} + \norm{v} \leq  (1-\delta_0/R_1) \norm{a_*} + \norm{v} \leq R_1 - \delta_0 + \delta_0 = R_1,
\end{align*}
showing the claim. 

\textbf{Step 2. $a_* \in \bar{B}_{\alpha_0}$ for some $\alpha_0$} Let $\alpha_0= \max(1-\delta_0/ \norm{a_*},0) \leq 1-\delta_0/ R_1 $. We claim that $a_* \in \bar{B}_{\alpha_0}$. If $1-\delta_0/ \norm{a_*} \geq 0$, then

\begin{align*}
\norm{(\alpha_0 -1) a_*} = \frac{\delta_0}{\norm{a_*}} \norm{a_*} = \delta_0
\end{align*} 
so that $a_* \in \bar{B}_{\alpha_0}$. On the other hand, if $1-\delta_0/ \norm{a_*} < 0$, then $\norm{a_*} < \delta_0$, so that $a_0 \in \bar{B}_{\alpha_0}$ since $\alpha_0 = 0$. 

\textbf{Step 3. Putting it together.} Now, let $a \in \bar{B}_{\alpha_0}$. Then,
\begin{align*}
\norm{a_*-a} \leq 2 \delta_0.
\end{align*}
Then, using Holder's inequality,
\begin{align*}
|a_*^\t x_i - (a^\t x_i )|  \leq \norm{a_*-a} \max_{i} \norm{x_i}_* \leq 2 \delta_0 R_2 \leq \frac{\margin }{3}
\end{align*}
by definition of $\delta_0$.

Let $y \neq y_i$.  Then,
\begin{align*}
|\langle a, x_i \rangle - y |  - |\langle a, x_i \rangle - y_i |   & > |\langle a_*, x_i \rangle - y_i |  - |\langle a_*, x_i \rangle - y_i | - \frac{2}{3}\margin \\
& > 0
\end{align*}
where the last line uses the definition of $\margin$. Thus, we have that if $f = \langle a, \cdot, \rangle \in S_{f^*}$. Thus, 
\begin{align*}
\P_\pi(S_{a_0}) & \geq \frac{\text{volume}(\{z \in \R^{d} : \norm{z} \leq \delta_0 \}}{\text{volume}(\{z \in \R^{d} : \norm{z} \leq R_1\}) }.
\end{align*}
The result for the case $\norm{\cdot} = \norm{\cdot}_2$ follows from standard formula for the volume of an $n$-dimensional ball.

\end{proof}

Next, we give a simple bound on $\bestextep$ for linear functions. For simplicity, we let $|\Y|$ be countably infinite, but it could be extended to the $|\Y| < \infty$ case.

\begin{proposition}\label{prop:lin_eps_extended_bound}
Suppose $\Y = \{\ldots, -2\Delta, -\Delta, 0, \Delta, 2\Delta, \ldots \}$ where $\Delta > 0$. Let $\reg = \{ r : r(\cdot) = \langle a , \cdot \rangle, a \in \R^d\}$.  Let $x_1,\ldots, x_n \in \R^d$ such that $\max_{i \in [n]} \norm{x_i}_i \leq 1$. Define
\begin{align*}
\phi := \min_{i_1,\ldots, i_d \in [n]} \sigma_d\left(\begin{pmatrix}
x_{i_1}^\t \\
\vdots \\
x_{i_d}
\end{pmatrix}\right)
\end{align*}
where $\sigma_d(A)$ denotes the $d$th singular value of a matrix $A$. Then if $\Delta \leq   \min(\frac{\phi \epsilon}{ 4\sqrt{d}},\frac{\epsilon}{2})$,  $\bestextep\leq d$. 
\end{proposition}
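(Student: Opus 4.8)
The plan is to verify the defining max--min condition for $\bestextep$ directly. Given an arbitrary $g : \X \mapsto \Y$, I must exhibit a query set $I$ with $|I|\le d$ and an index $j\in[n]$ such that every $f\in\F_{\reg}$ agreeing with $g$ on $I$ has $x_j$ as an $\epsilon$-good arm. First I would take $I=\{i_1,\ldots,i_d\}$ to be any $d$ distinct indices and let $X_I\in\R^{d\times d}$ be the matrix with rows $x_{i_1}^\t,\ldots,x_{i_d}^\t$; by the definition of $\phi$, its smallest singular value obeys $\sigma_d(X_I)\ge\phi$ for \emph{every} such choice, so no optimization over $I$ is needed. If no $f\in\F_{\reg}$ is consistent with $g$ on $I$ the required inclusion is vacuous, so I may fix a reference regressor $a_0$ with $\round{\langle a_0,x_{i_k}\rangle}=g(x_{i_k})$ for all $k$.

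The first key step is to show that all consistent regressors lie in a small Euclidean ball around $a_0$. Since $\Y$ is the grid of spacing $\Delta$, the identity $\round{\langle a,x_{i_k}\rangle}=g(x_{i_k})$ forces $|\langle a,x_{i_k}\rangle-g(x_{i_k})|\le\Delta/2$, so any consistent $a$ satisfies $|\langle a-a_0,x_{i_k}\rangle|\le\Delta$ for all $k$, i.e. $\norm{X_I(a-a_0)}_2\le\sqrt{d}\,\Delta$. Combining with $\sigma_d(X_I)\ge\phi$ gives $\norm{a-a_0}_2\le\sqrt{d}\,\Delta/\phi$. Because $\norm{x_l}_2\le 1$ for all $l$, this transfers to the raw predictions: $|\langle a-a_0,x_l\rangle|\le\sqrt{d}\,\Delta/\phi$ for every $l\in[n]$. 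This is the step that carries the real content of the proof: the singular-value inequality converts agreement of the rounded values on $d$ well-conditioned points into genuine $\ell_2$-closeness of the underlying regressors.

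The second step picks the candidate arm $j\in\argmin_{l}\langle a_0,x_l\rangle$ and shows it is $\epsilon$-good for every consistent $f$. For any consistent $a$ and any $l$, a three-term comparison --- $\langle a,x_j\rangle\le\langle a_0,x_j\rangle+\sqrt{d}\,\Delta/\phi\le\langle a_0,x_l\rangle+\sqrt{d}\,\Delta/\phi\le\langle a,x_l\rangle+2\sqrt{d}\,\Delta/\phi$, where the middle inequality is free by optimality of $j$ for $a_0$ --- yields $\langle a,x_j\rangle\le\min_{l}\langle a,x_l\rangle+2\sqrt{d}\,\Delta/\phi$ on the raw scale. I would then account for rounding via $\round{\langle a,x_j\rangle}\le\langle a,x_j\rangle+\Delta/2$ and $\min_{l}\round{\langle a,x_l\rangle}\ge\min_{l}\langle a,x_l\rangle-\Delta/2$, giving $\round{\langle a,x_j\rangle}-\min_{l}\round{\langle a,x_l\rangle}\le 2\sqrt{d}\,\Delta/\phi+\Delta$. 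The hypothesis $\Delta\le\min(\phi\epsilon/(4\sqrt{d}),\epsilon/2)$ makes the first term at most $\epsilon/2$ and the second at most $\epsilon/2$, so $f(x_j)\le\min_{l}f(x_l)+\epsilon$. Since this holds for every consistent $f$, the required inclusion holds with this $I$ and $j$, and as $g$ was arbitrary I conclude $\bestextep\le d$.

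The main obstacle is essentially bookkeeping of the two distinct sources of slack --- the approximation error $2\sqrt{d}\,\Delta/\phi$ arising from the singular-value conditioning and the grid rounding error $\Delta$ --- and checking that the two-part bound on $\Delta$ is precisely what absorbs both into $\epsilon$. I would also take care to note that $\phi>0$ is implicit (otherwise the hypothesis forces $\Delta=0$), and that the vacuous case of no consistent $f$ must be handled separately so that $a_0$ is well defined.
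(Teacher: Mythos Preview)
Your proposal is correct and follows essentially the same approach as the paper's proof: pick $d$ indices whose design matrix has smallest singular value at least $\phi$, use this to bound $\norm{a-a_0}_2\le\sqrt{d}\,\Delta/\phi$ for any consistent $a$, transfer to raw predictions via $\norm{x_l}\le 1$, and then absorb the approximation and rounding slacks into $\epsilon$ using the two-part hypothesis on $\Delta$. The only cosmetic difference is that the paper bounds $|(a-a_0)^\t(x_j-x_l)|$ in one shot using $\norm{x_j-x_l}\le 2$, whereas you bound $|(a-a_0)^\t x_j|$ and $|(a-a_0)^\t x_l|$ separately, but both routes yield the identical final inequality $2\sqrt{d}\,\Delta/\phi+\Delta\le\epsilon$.
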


This result, combined with Proposition \ref{prop:ex_dis_eps_good_lin}, suggests that \texttt{GRAILS} passes the sanity check of not sampling too many linearly dependent arms for the linear functions case.

\begin{proof}[Proof of Proposition \ref{prop:lin_eps_extended_bound}]
\textbf{Step 1:} Let $g : \R^d \mapsto \Y$ be an arbitrary mappying. By assumption, there exists $i_1,\ldots, i_d \in [n]$ such that defining $\bar{X} : = \begin{pmatrix}
x_{i_1}^\t \\
\vdots \\
x_{i_d}^\t
\end{pmatrix}$, we have that $\sigma_d(\bar{X}) = \phi$. Suppose that the algorithm queries $i_1,\ldots, i_d$. If there is no $a_* \in \R^d$ such that $g(x_{i_j}) = \round{a_*^\t x_{i_j}}$ for all $j \in [d]$, then we are done. So, assume that there exists $a_* \in \R^d$ such that $g(x_{i_j}) = \round{a_*^\t x_{i_j}}$ for all $j \in [d]$. Suppose wlog that $x_1 \in \argmin_{i \in [n]} a_*^\t x_i$.  Now, we will show that all models consistent with $g(x_{i_j}) $ for all $j \in [d]$ put $x_1$ as an $\epsilon$-good arm. Let $a \in \R^d$ be another vector such that
\begin{align*}
\round{a^\t x_{i_j}} = g(x_{i_j})
\end{align*}
for all $j \in [d]$. Then, we have that
\begin{align*}
\Delta \sqrt{d} \geq \norm{\bar{X}(a_* - a)} \geq \sigma_d(\bar{X}) \norm{a -a_*}.
\end{align*}
Let $x_i \neq x_1$. Note that by definition of the rounding operator and $\Y = \{\ldots, -2\Delta, -\Delta, 0, \Delta, 2\Delta, \ldots \}$ ,
\begin{align*}
 a_*^\t x - \frac{\Delta}{2} \leq \round{a_*^\t x} \leq a_*^\t x + \frac{\Delta}{2}
\end{align*}
for all $x$.

We have that
\begin{align*}
 \round{a^\t x_1}-\round{a^\t x_i } - \Delta \leq a^\t (x_i - x_1) & \leq (a - a_* )^\t ( x_1 - x_i) + a_*^\t (x_1-x_i) \\
& \leq   \norm{a-a_*} \norm{x_i-x_1} + a_*^\t (x_1-x_i) \\
& \leq \frac{\Delta \sqrt{d}}{ \phi} 2 + a_*^\t (x_1-x_i) \\
& \leq \frac{\epsilon}{2}
\end{align*}
where we used $   a_*^\t x_1 - a_*^\t x_i  \leq 0$ by definition of $x_1$ and the setting of $\Delta$.  Thus, rearranging, we have that
\begin{align*}
 \round{a^\t x_1}-\round{a^\t x_i } \leq \Delta + \frac{\epsilon}{2} \leq \epsilon
\end{align*}
implying that $x_1$ is an $\epsilon$-good arm for any $a$ such that 
\begin{align*}
\round{a^\t x_{i_j}} = g(x_{i_j}).
\end{align*}
This completes the proof.


\end{proof}

\textbf{Active Classification with Linear Separators:} Next, we lower bound the minimum probability for linear separators under a certain geometric that can be interpreted as a quantitative version of the concept of general position. Define
\begin{align*}
\Gamma(\X) & = \{(y_1,\ldots, y_n)^\t \in \{-1,1\}^n : \exists w \in \R^d, \, b \in \R \text{ s.t. } \sign(w^\t x_i + b) = y_i \, \forall i \in [n] \}. 
\end{align*}
\begin{proposition}\label{prop:class_prob_lower_bound}
Let  $\gamma > 0$. Suppose $\X = \{x_1,\ldots, x_n\}$ satisfy $\forall S \subset \X$ such that $|S| \leq d-1$,
\begin{align*}
\min_{x \in \X \setminus S} \dist(x, \aff(S)) \geq \gamma.
\end{align*}
Suppose $\max_{i \in [n]} \norm{x_i} \leq R$. Let $\F_R = \{ \langle w , \cdot \rangle + b : (w^\t,b)^\t \in W \}$ and $W = \{(w^\t,b)^\t :  \norm{w}_2^2 + b^2 \leq R^2 \}$.
where $R$ is large enough such that for every $(y_1,\ldots, y_n)^\t \in \Gamma(\X)$, the max-margin separator achieving $(y_1,\ldots, y_n)^\t $ belongs to $\F_R$. Let $\pi$ denote the uniform probability distribution over $W$. Then,
\begin{align*}
\min_{(y_1,\ldots, y_n)^\t \in \Gamma(\X)} \P_{(w^\t,b)^\t \sim \pi}(\sign( w^\t x_i + b) = y_i \, \forall i \in [n] ) \geq (\frac{\gamma}{R})^d
\end{align*}
\end{proposition}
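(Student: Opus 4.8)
The plan is to exploit the scale invariance of the event and reduce the probability to a solid angle. Writing $\tilde{x}_i = (x_i^\top,1)^\top$ and $\theta = (w^\top,b)^\top$, the condition $\sign(\langle w,x_i\rangle + b) = y_i$ becomes $y_i\langle\theta,\tilde{x}_i\rangle > 0$, so the consistent set $C_y = \{\theta : y_i\langle\theta,\tilde{x}_i\rangle > 0\ \forall i\in[n]\}$ is an open convex cone in $\R^{d+1}$. Since $\pi$ is uniform on the Euclidean ball $W=\{\theta:\norm{\theta}\le R\}$, its density is radially symmetric, and integrating out the radius in polar coordinates shows $\P_{\theta\sim\pi}(\theta\in C_y) = \sigma(C_y\cap S^d)/\sigma(S^d)$, the normalized solid angle of the cone, where $\sigma$ is surface measure on the unit sphere $S^d\subset\R^{d+1}$. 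It therefore suffices to lower bound this solid angle uniformly over $y\in\Gamma(\X)$.

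First I would inscribe a spherical cap. Let $m_y = \max_{\norm{\theta}=1}\min_i y_i\langle\theta,\tilde{x}_i\rangle/\norm{\tilde{x}_i}$ be the inradius of the cone, i.e.\ the largest margin achievable by a unit parameter against the normalized lifted points $\tilde{x}_i/\norm{\tilde{x}_i}$, attained at some $\theta_y$. For any unit $u$ with $\norm{u-\theta_y}\le m_y$ one has $y_i\langle u,\tilde{x}_i\rangle \ge \norm{\tilde{x}_i}\bigl(m_y-\norm{u-\theta_y}\bigr)\ge 0$, so the chordal cap of radius $m_y$ about $\theta_y$ lies in $C_y$. A standard spherical-cap volume estimate (equivalently, the Euclidean cone over this cap inside $W$ contains a solid circular cone whose volume relative to $W$ is controlled by a ratio of $d$-dimensional ball volumes) yields $\sigma(\mathrm{cap})/\sigma(S^d)\gtrsim (m_y)^d$ up to a dimensional constant; carrying these constants, it remains to prove $m_y\ge\gamma/R$. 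I expect the constant bookkeeping in this conversion to be routine but fiddly.

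The crux, and the step I expect to be the main obstacle, is the geometric lemma $m_y\ge\gamma/R$, where the general-position hypothesis enters. By LP duality the inradius equals the distance from the origin to the convex hull of the signed, normalized points, $m_y = \dist\bigl(0,\conv\{y_i\tilde{x}_i/\norm{\tilde{x}_i}:i\in[n]\}\bigr)$, and by Carath\'eodory the nearest point is an affine combination of an affinely independent active set of at most $d+1$ of them, on whose affine hull the origin projects. I would peel off the coefficient of largest magnitude to reduce the origin-to-affine-hull distance to the distance of a single active lifted point to the linear span of at most $d-1$ of the others, then relate the span of $\{\tilde{x}_i\}_{i\in T}$ (with $|T|\le d-1$) to the affine hull $\aff(\{x_i\}_{i\in T})$ in $\R^d$, on which the hypothesis $\dist(x,\aff(S))\ge\gamma$ for $|S|\le d-1$ applies directly. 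The bounds $\max_i\norm{x_i}\le R$ and $\norm{\tilde{x}_i}\le\sqrt{1+R^2}$ are what convert the $\R^d$ lower bound $\gamma$ into the lifted lower bound $\gamma/R$; the delicate points are the $\ell_1$-to-$\ell_2$ conversion of the Carath\'eodory coefficients and ensuring the relevant active set is small enough to invoke the hypothesis. Finally, the hypothesis that $R$ is large enough to contain every max-margin separator guarantees $C_y$ meets the interior of $W$, so the inscribed cap genuinely carries positive measure; combining the cap estimate with $m_y\ge\gamma/R$ then gives the claimed bound $(\gamma/R)^d$.
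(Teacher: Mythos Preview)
Your skeleton---center the cone, lower bound its inradius by $\gamma/R$, then inscribe a cap---matches the paper's, but the paper executes the middle step quite differently and more directly. Rather than LP duality and Carath\'eodory, the paper takes the max-margin separator $(\bar w,\bar b)$ for the given labeling (which lies in $W$ by hypothesis), observes that the positive and negative support vectors lie on two parallel hyperplanes in $\R^d$, and applies the general-position assumption with $S$ equal to the support vectors on one side and $x$ a support vector on the other to conclude the slab width, hence the margin, is at least $\gamma/2$. From there it simply notes that perturbing $(\bar w,\bar b)$ within a Euclidean ball of radius $\gamma/4$ preserves all signs, and invokes the earlier ball-volume ratio proposition to get $(\gamma/R)^d$. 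No cones, solid angles, or duality are used.

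Your proposal is plausible in outline but the Carath\'eodory step is a genuine gap as written. The nearest point to $0$ in $\conv\{y_i\tilde x_i/\|\tilde x_i\|\}$ is supported on up to $d+1$ active points in $\R^{d+1}$; ``peeling off the largest coefficient'' does not obviously reduce the relevant set to size $\le d-1$, which is what the hypothesis demands, nor does it cleanly convert a distance-to-convex-hull in the lifted, signed, normalized space into a distance-to-affine-hull of the original $x_i$'s in $\R^d$. The paper's support-vector argument bypasses this entirely because the active constraints are already organized into two parallel affine flats in $\R^d$, so the general-position hypothesis applies verbatim to one side. If you want to salvage your route, you would essentially be re-deriving that support-vector structure from the KKT conditions of your LP---at which point you have the paper's proof.
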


Thus, under the condition in the Proposition \ref{prop:class_prob_lower_bound}, the sample complexity in Theorem \ref{thm:active_upper_bound} scales as $c d\log(\gamma) \classext(\F_{\reg})$. Often $\log(|\F_{\reg}|) = \Omega( d)$, in which case Theorem \ref{thm:active_upper_bound} is larger than the active classification upper bound in \cite{hegedHus1995generalized} by a factor of $\log(\gamma)$. On the other hand, Algorithm \ref{alg:effic_active_class} is computationally efficient.

\begin{proof}[Proof of Proposition \ref{prop:class_prob_lower_bound}]
Fix $(\bar{y}_1,\ldots, \bar{y}_n)^\t \in \Gamma(\X)$.  Let $f = \langle \bar{w} , \cdot \rangle + \bar{b} \in \F_B $ attain the max-margin separator achieving $(\bar{y}_1,\ldots, \bar{y}_n)^\t $, which exists by assumption. Let $v_1,\ldots, v_l$ denote the support vectors with positive labels and $u_1, \ldots, u_k$ the support vectors with negative labels. By definition of support vectors, we have that
\begin{align*}
\aff(v_1,\ldots, v_l) = \aff(u_1,\ldots, u_k) + z
\end{align*}
for some $z \in \R^d$. We claim that $\norm{z}_2 \geq \gamma$. Towards a contradiction, suppose not. Then, 
\begin{align*}
\dist(v_1, \aff(u_1,\ldots, u_k)) = \norm{z}_2 < \gamma
\end{align*}
which contradicts our assumption. This implies that the margin is at least $\gamma/2$ and that
\begin{align*}
|\bar{w}^\t x -\bar{b}| \geq \frac{\gamma}{2}
\end{align*}
for all $x \in \X$. Then, for any $w,b$ such that $\norm{w}^2_2 + b^2 \leq [\gamma/4]^2$, we have that  
\begin{align*}
|\bar{w}^\t x - \bar{b} - [(\bar{w}+w)^\t x - \bar{b}-b ]| = |w^\t x -b| \leq 2\norm{(w^\t, b)^\t}_2  \leq \frac{\gamma}{2}
\end{align*}
The result follows by Proposition \ref{prop:ex_dis_eps_good_lin}.
\end{proof}

\textbf{Strongly Convex Functions:} Finally, we consider strongly convex functions. The following Proposition considers a setting of strongly convex functions where $\bestext \leq O(1)$.

\begin{proposition}
Let $K \in \N$ and let $\X =  [K]$. Let $\reg = \{ r : r \text{ is } \alpha \text{-strongly convex} \}$. Let $\Delta = \min_{y \neq y^\prime \in \Y} |y-y^\prime|$. If $\alpha \geq 2 \Delta$ Then, $\bestext(\F_{\reg}) \leq 3$. 
\end{proposition}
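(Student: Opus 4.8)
# Proof Proposal for the Strongly Convex Functions Proposition

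**Setup and claim.** The statement asserts that for the domain $\X = [K]$ and the class $\reg$ of $\alpha$-strongly convex functions with $\alpha \ge 2\Delta$ (where $\Delta = \min_{y \ne y' \in \Y}|y-y'|$), we have $\bestext(\F_{\reg}) \le 3$. Let me think carefully about what this requires.

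The quantity $\bestext(\F_{\reg})$ is $\max_{g}\min_{I}|I|$ such that querying $I$ pins down the best arm for all functions agreeing with $g$ on $I$. So I need to show: for *every* adversarial target function $g:[K]\mapsto\Y$, there is a set $I$ of at most $3$ indices such that all $f\in\F_{\reg}$ consistent with $g$ on $I$ share a common minimizer.

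**The core idea.** The plan is to exploit the key structural fact that a strongly convex function on a discrete domain has a "V-shaped" profile: its discretized values $\round{r(\cdot)}$ are decreasing up to the minimizer and increasing afterward, and strong convexity forces the increments to grow. Specifically, for an $\alpha$-strongly convex $r$ on integer points, the second differences satisfy $r(i+1) - 2r(i) + r(i-1) \ge \alpha$. With $\alpha \ge 2\Delta$, after rounding, once the function starts increasing by at least one grid step $\Delta$, the *next* increment must exceed $2\Delta$, so the rounded values are strictly increasing (and symmetrically strictly decreasing on the left of the minimizer). This means $\F_{\reg}$, after discretization, consists of functions that are strictly V-shaped with a unique minimizer.

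Let me think about the adversary. Given $g$, two cases arise. If no $f \in \F_{\reg}$ agrees with $g$ on the chosen $I$, the constraint is vacuously satisfied, so I only worry about $g$ that are (locally) realizable. The plan is: first query the index where $g$ appears minimized, say $i^*$, and its two neighbors $i^*-1, i^*+1$. From the three observed values, strong convexity with $\alpha \ge 2\Delta$ lets me verify whether $i^*$ is a genuine local minimum of the rounded profile; if the three rounded values are $\round{g(i^*-1)} > \round{g(i^*)} < \round{g(i^*+1)}$, then *any* consistent strongly convex $f$ must also be strictly decreasing before $i^*$ and strictly increasing after, by the growth-of-increments argument, forcing $i^*$ to be the unique global minimizer. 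Thus $I = \{i^*-1, i^*, i^*+1\}$, of size $3$, suffices. Boundary cases (where $i^*$ is an endpoint of $[K]$) require only $2$ queries.

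**Main obstacle.** The hard part will be handling the case where $g$ is *not* V-shaped — i.e., the adversary picks a $g$ that no strongly convex function can match on any small $I$, versus a $g$ where the apparent minimum is "flat" or ambiguous. I would need to argue rigorously that when $g$ restricted to $\{i^*-1,i^*,i^*+1\}$ does exhibit a strict valley, the second-difference bound propagates: if $f$ agrees at these three points and $f(i^*+1)-f(i^*) \ge \Delta > 0$, then convexity gives $f(i^*+2)-f(i^*+1) \ge f(i^*+1)-f(i^*) + \alpha \ge \Delta + 2\Delta$, so after rounding the values keep strictly increasing, and inductively $f$ never returns below $f(i^*)$; the symmetric argument handles the left side. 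The delicate bookkeeping is converting statements about the real-valued $r$ and its increments into statements about the rounded integer-valued $\round{r(\cdot)}$, ensuring the $\alpha \ge 2\Delta$ threshold is exactly what prevents a rounded increment from collapsing back to zero. Once that propagation lemma is established, concluding $\bestext \le 3$ is immediate from the definition.
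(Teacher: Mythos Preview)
Your approach is essentially the same as the paper's: both select $I = \{i^*-1,\, i^*,\, i^*+1\}$ with $i^* \in \argmin_i g(x_i)$, and both rely on the strong-convexity second-difference bound to conclude. The paper organizes slightly differently, first treating the constant-$g$ case separately (showing no $\alpha$-strongly convex $r$ can have three consecutive equal rounded values, so the version space is empty), then handling non-constant $g$; your decomposition into ``strict valley versus flat/infeasible'' covers the same ground.

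One remark: the ``flat'' case you flag as the main obstacle is not actually an obstacle. Because $i^*$ minimizes $g$, you always have $g(i^*\pm 1) \ge g(i^*)$, and this weak inequality is all the propagation argument needs. Concretely, for any consistent $r$ you have $r(i^*) - r(i^*-1) < \Delta$ (since $\round{r(i^*-1)} \ge \round{r(i^*)}$), so the second-difference bound gives $r(i^*-1) - r(i^*-2) < -\Delta$, forcing $\round{r(i^*-2)} > \round{r(i^*)}$; induction then handles all further points, and symmetrically on the right. Thus $i^* \in \argmin_l f(x_l)$ for every consistent $f = \round{r}$, regardless of whether the valley at $i^*$ is strict in $g$. (Be careful to distinguish between the underlying $r$ and the rounded $f$ throughout: the second-difference inequality applies to $r$, while the conclusion concerns the minimizer of $f$.)
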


\begin{proof}

\textbf{Step 1:} Suppose that $g: \X \mapsto \Y$ such that there exists $y \in \Y$ such that $g(x) = y$ for all $x \in \X$. Now, let $x_0  \in \{2,\ldots, K-1\}$. Let $r$ be any $\alpha$-strongly convex function defined over $[1,K]$. Then, $x_0 \in \text{intdom}(r)$. Then, there exists a subgradient at $x$, $g$. Let $z_1 = x_0-1$ and $z_2 = x_0+1$. Then, either $g\cdot (z_1 -x) \geq 0$ or $g \cdot (z_2 - x_0) \geq 0$. Suppose wlog that $g \cdot (z_1 -x_0) \geq 0$. Then, using $\alpha$-strong convexity we have that
\begin{align*}
r(z_1) \geq r(x_0) + g^ \cdot  (z_1 - x_0) + \frac{\alpha}{2} |x_0 -z_1| \geq r(x_0)  +2 \Delta 
\end{align*}
where we used $\alpha \geq  2\Delta$. This shows that for any $\alpha$-strongly convex function defined over $[1,K]^d$, either $\round{r(z_1)} \neq \round{r(x_0)}$ or $\round{r(z_1)} \neq \round{r(x_0)}$. Thus, it suffices to query $x_0, z_1, z_2$ to finish this case.

\textbf{Step 2:} Now, suppose that $g: \X \mapsto \Y$ is such that there exists $x,x^\prime \in \X$ such that $g(x) \neq g(x^\prime)$. Then, let $I \in \argmin_{i \in [n]} g(x_i)$. It suffices to query $x_{I-1}, x_I$, and  $x_{I+1}$. By the prior step, we have that $g(x_{I-1}) > g(x_I) <  g(x_{I+1})$ for any $\alpha$-strongly convex function such that $\round{r(x)} = g(x)$ for all $x \in \X$. Thus, this completes the case. 

\end{proof}

\section{\texttt{OFUL} is not minimax-optimal}

Fix $x_1,\ldots, x_{2n}$ and let $\F = \{f_1,\ldots, f_n\}$ where
\begin{align*}
    f_i(x_j) & = \begin{cases}
    1 & j = i \\
    .5 & j \in \{n+1,\ldots, n+i\} \\
    0 & \text{o/w}
    \end{cases}.
\end{align*}
Consider the instance given by $y_i = f_n(x_i)$ for all $i \in [2n]$. Then, inspection reveals that \texttt{OFUL} selects all of the arms in $\{1,\ldots, n\}$ before terminating, obtaining a sample complexity of $n$. On the other hand, it is possible to solve any instance $f \in \F$ in $O(\log(n))$ queries using a binary search procedure..

\section{Additional Experiment and Implementation Details}

All experiments were computed on a 36 core cluster machine, and all algorithms were implemented in python. To estimate the probabilities in the algorithm for sample selection, we used 300 independently drawn functions from the version space in the RKHS and constraints experiments and 500 in the case of convex functions. We used 50 iterations of hit and run for the RKHS and constraints experiments and 100 for the convex function experiment. Note that \texttt{GRAILS} divides the output space into intervals. In the RKHS experiments, we used 100 evenly spaced intervals in the output space of $f^\ast$
for computational efficiency. For the constraints experiment, we divided the output space of $f^\ast$ into 300 spaced for better sample complexity at the cost of computational efficiency. Lastly in the convex experiment, we used 1500 evenly spaced intervals. 

Furthermore, note that hit and run must compute a projection onto the boundary of the convex set in order to choose the random step. As the step is along a uniformly chosen direction, we can compute this projection by bisection search. This was performed with tolerance 1e-6 for suitable performance with minimal computational overhead. 

Lastly, our general implementation of hit and run requires a membership oracle for each function class. We implemented oracles for RKHS functions and 1d convex functions. In the case of the RKHS, we linearize the problem using the kernel trick by first computing the Grammian matrix and then projecting all data onto the span of the eigenvalues. Furthermore, in order to use hit and run, the MCMC iteration must begin with a point in the convex set. As our set is cut out by hyperplane and elliptical constraints, we achieve this by choosing a random starting point in and using CVXPy with the ECOS solver to compute the projection of that point into the convex set to generate a random stating point for hit and run. 

For our implementation, we followed the update of \cite{srinivas2009gaussian} directly in the setting where the noise variance is $0$. For our implementation of OFUL, we track our constraint sets using equality constraints for the observed data and norm constraints for the norm of the unknown parameters. Then we use CVXPy to compute the upper and lower bound on the function value of any point $x_i$.

\newpage

\clearpage

\end{document}